\def\eqref#1{equation~\ref{#1}}
\def\1{\bm{1}}
\def\ra{{\textnormal{a}}}
\def\rl{{\textnormal{l}}}
\def\rz{{\textnormal{z}}}
\def\rvz{{\mathbf{z}}}
\def\rmZ{{\mathbf{Z}}}
\def\vg{{\bm{g}}}
\def\vv{{\bm{v}}}
\def\vw{{\bm{w}}}
\def\mI{{\bm{I}}}
\DeclareMathAlphabet{\mathsfit}{\encodingdefault}{\sfdefault}{m}{sl}
\SetMathAlphabet{\mathsfit}{bold}{\encodingdefault}{\sfdefault}{bx}{n}
\newcommand{\E}{\mathbb{E}}
\newcommand{\R}{\mathbb{R}}
\let\t\undefined
\newcommand{\t}{\top}
\newcommand{\w}{\vw}
\let\v\undefined
\newcommand{\v}{\vv}
\newcommand{\Z}{\rmZ}
\newcommand{\z}{\rvz}
\newcommand{\g}{\vg}
\newcommand{\I}{\mI}
\newcommand{\0}{\mathbf{0}}
\let\P\undefined
\newcommand{\P}{\mathbb{P}}
\newcommand{\la}{\langle}
\let\ra\undefined
\newcommand{\ra}{\rangle}
\newcommand{\tv}{\tilde{\vv}}
\newcommand{\tw}{\tilde{\vw}}
\let\rl\relax
\newcommand{\rl}{\mathrm{relu}}
\newcommand{\crl}{\mathrm{crelu}}
\newcommand{\id}{\mathrm{id}}
\newtheorem{prop}{Proposition}
\newtheorem{thm}{Theorem}
\newtheorem{lem}{Lemma}
\newtheorem{rem}{Remark}
\newenvironment{customthm}[1]
  {\innercustomthm}
  {\endinnercustomthm}
\newenvironment{customlem}[1]
  {\innercustomlem}
  {\endinnercustomlem}
\newenvironment{customprop}[1]
  {\innercustomprop}
  {\endinnercustomprop}
\title{Understanding Straight-Through Estimator in Training Activation Quantized Neural Nets}
\author{Penghang Yin,\footnotemark[1] \, Jiancheng Lyu,\footnotemark[2] \, Shuai Zhang,\footnotemark[3] \, Stanley Osher,\footnotemark[1] \, Yingyong Qi,\footnotemark[3] \, Jack Xin\footnotemark[2] \\
\\
\footnotemark[1]\,  Department of Mathematics, University of California, Los Angeles\\ 
\, \texttt{yph@ucla.edu, sjo@math.ucla.edu} \\
\footnotemark[2]\, Department of Mathematics, University of California, Irvine \\
\, \texttt{jianchel@uci.edu, jxin@math.uci.edu}\\
\footnotemark[3]\, Qualcomm AI Research, San Diego  \\ 
\, \texttt{\{shuazhan,yingyong\}@qti.qualcomm.com} 
}
\begin{document}

\maketitle

\begin{abstract}
Training activation quantized neural networks involves minimizing a piecewise constant function whose gradient vanishes almost everywhere, which is undesirable for the standard back-propagation or chain rule. An empirical way around this issue is to use a straight-through estimator (STE) \citep{bengio2013estimating} in the backward pass only, so that the ``gradient" through the modified chain rule becomes non-trivial. Since this unusual ``gradient'' is certainly not the gradient of loss function, the following question arises: \emph{why searching in its negative direction minimizes the training loss?} In this paper, we provide the theoretical justification of the concept of STE by answering this question. We consider the problem of learning a two-linear-layer network with binarized ReLU activation and Gaussian input data. We shall refer to the unusual ``gradient" given by the STE-modifed chain rule as coarse gradient. The choice of STE is not unique. We prove that if the STE is properly chosen, the expected coarse gradient correlates positively with the population gradient (not available for the training), and its negation is a descent direction for minimizing the population loss. We further show the associated coarse gradient descent algorithm converges to a critical point of the population loss minimization problem. Moreover, we show that a poor choice of STE leads to instability of the training algorithm near certain local minima, which is verified with CIFAR-10 experiments.
\end{abstract}

\section{Introduction}
Deep neural networks (DNN) have achieved
the remarkable success in many machine learning applications such
as computer vision \citep{imagnet_12,faster_rcnn}, natural language
processing \citep{collobert2008unified} and reinforcement
learning \citep{mnih2015human,silver2016mastering}. However, the deployment of DNN typically require hundreds of megabytes of memory storage for
the trainable full-precision 
floating-point parameters, and billions of floating-point operations
to make a single inference. To achieve substantial memory savings and energy efficiency at inference time, many recent efforts have been made to the training of coarsely quantized DNN, meanwhile maintaining the performance of their float counterparts \citep{courbariaux2015binaryconnect,rastegari2016xnor,halfwave_17,Hubara2017QuantizedNN,yin2018blended}. 

Training fully quantized DNN amounts to solving a very challenging optimization problem. It calls for minimizing a piecewise constant and highly nonconvex empirical risk function $f(\w)$ subject to a discrete set-constraint $\w\in\mathcal{Q}$ that characterizes the quantized weights.
In particular, weight quantization of DNN have been extensively studied in the literature; see for examples \citep{twn_16,ttq_16,li2017training,lbwn_16, BR_18,hou2018loss,he2018relu,li2018optimal}. On the other hand, the gradient $\nabla f(\w)$ in training activation quantized DNN is almost everywhere (a.e.) zero, which makes the standard back-propagation inapplicable. 
The arguably most effective way around this issue is nothing but to construct a non-trivial search direction by properly modifying the chain rule. Specifically, one can replace the a.e. zero derivative of quantized activation function composited in the chain rule with a related surrogate. This proxy derivative used in the backward pass only is referred as the straight-through estimator (STE) \citep{bengio2013estimating}. 
In the same paper, \cite{bengio2013estimating} proposed an alternative approach based on stochastic neurons. In addition, \cite{friesen2017deep} proposed the feasible target propagation algorithm for learning hard-threshold (or binary activated) networks \citep{lee2015difference} via convex combinatorial optimization.

\subsection{Related Works}
The idea of STE originates to the celebrated perceptron algorithm \citep{rosenblatt1957perceptron,rosenblatt1962principles} in 1950s for learning single-layer perceptrons. The perceptron algorithm essentially does not calculate the ``gradient" through the standard chain rule, but instead through a modified chain rule in which the derivative of identity function serves as the proxy of the original derivative of binary output function $1_{\{x>0\}}$. Its convergence has been extensive discussed in the literature; see for examples, \citep{widrow199030,freund1999large} and the references therein. \cite{hinton2012neural} extended this idea to train multi-layer networks with binary activations (a.k.a. binary neuron), namely, to back-propagate as if the activation had been the identity function. \cite{bengio2013estimating} proposed a STE variant which uses the derivative of the sigmoid function instead. In the training of DNN with weights and
activations constrained to $\pm 1$, \citep{bnn_16} substituted the derivative of the signum activation function with $1_{\{|x|\leq1\}}$ in the backward pass, known as the saturated STE. Later the idea of STE was readily employed to the training of DNN with general quantized ReLU activations  \citep{Hubara2017QuantizedNN,dorefa_16,halfwave_17,pact,yin2018blended}, where some other proxies took place including the derivatives of vanilla ReLU and clipped ReLU. Despite all the empirical success of STE, there is very limited theoretical understanding of it in training DNN with stair-case activations.  

\cite{goel2018learning} considers leaky ReLU activation of a one-hidden-layer network. They showed the convergence of the so-called Convertron algorithm, which uses the identity STE in the backward pass through the leaky ReLU layer. Other similar scenarios, where certain layers are not desirable for back-propagation, have been brought up recently by \citep{wang2018deep} and \citep{athalye2018obfuscated}. The former proposed an implicit weighted nonlocal Laplacian layer as the classifier to improve the generalization accuracy of DNN. In the backward pass, the derivative of a pre-trained fully-connected layer was used as a surrogate. To circumvent adversarial defense \citep{szegedy2013intriguing}, \citep{athalye2018obfuscated} introduced the backward pass differentiable approximation, which shares the same spirit as STE, and successfully broke defenses at ICLR 2018 that rely on obfuscated gradients. 



\subsection{Main Contributions}
Throughout this paper, we shall refer to the ``gradient" of loss function w.r.t. the weight variables through the STE-modified chain rule as coarse gradient. Since the backward and forward passes do not match, the coarse gradient is certainly not the gradient of loss function, and it is generally not the gradient of any function. Why searching in its negative direction minimizes the training loss, as this is not the standard gradient descent algorithm? Apparently, the choice of STE is non-unique, then what makes a good STE? From the optimization perspective, we take a step towards understanding STE in training quantized ReLU nets by attempting these questions.

On the theoretical side, we consider three representative STEs for learning a two-linear-layer network with binary activation and Gaussian data: the derivatives of the identity function \citep{rosenblatt1957perceptron,hinton2012neural,goel2018learning}, vanilla ReLU and the clipped ReLUs \citep{halfwave_17,bnn_16}. We adopt the model of population loss minimization \citep{brutzkus2017globally,tian2017analytical,li2017convergence,Lee}. \emph{For the first time, we prove that proper choices of STE give rise to training algorithms that are descent}. Specifically, the negative expected coarse gradients based on STEs of the vanilla and clipped ReLUs are provably descent directions for the minimizing the population loss, which yield monotonically decreasing energy in the training. 
In contrast, this is not true for the identity STE. We further prove that the corresponding training algorithm can be \emph{unstable near certain local minima, because the coarse gradient may simply not vanish there.}

Complementary to the analysis, we examine the empirical performances of the three STEs on MNIST and CIFAR-10 classifications with general quantized ReLU. While both vanilla and clipped ReLUs work very well on the relatively shallow LeNet-5, clipped ReLU STE is arguably the best for the deeper VGG-11 and ResNet-20. In our CIFAR experiments in section \ref{sec:instable}, we observe that the training using identity or ReLU STE can be unstable at good minima and repelled to an inferior one with substantially higher training loss and decreased generalization accuracy. This is an implication that \emph{poor STEs generate coarse gradients incompatible with the energy landscape, which is consistent with our theoretical finding about the identity STE}.

To our knowledge, convergence guarantees of perceptron algorithm \citep{rosenblatt1957perceptron,rosenblatt1962principles} and Convertron algorithm \citep{goel2018learning} were proved for the identity STE. It is worth noting that Convertron \citep{goel2018learning} makes weaker assumptions than in this paper. These results, however, do not generalize to the network with two trainable layers studied here. As aforementioned, the identity STE is actually a poor choice in our case. Moreover, it is not clear if their analyses can be extended to other STEs. Similar to Convertron with leaky ReLU, the monotonicity of quantized activation function plays a role in coarse gradient descent. Indeed, all three STEs considered here exploit this property. But this is not the whole story. A great STE like the clipped ReLU matches quantized ReLU at the extrema, otherwise the instability/incompatibility issue may arise.


\noindent{\bf Organization.} In section 2, we study the energy landscape of a two-linear-layer network with binary activation and Gaussian data. We present the main results and sketch the mathematical analysis for STE in section 3. In section 4, we compare the empirical performances of different STEs in 2-bit and 4-bit activation quantization, and report the instability phenomena of the training algorithms associated with poor STEs observed in CIFAR experiments. Due to space limitation, all the technical proofs as well as some figures are deferred to the appendix.

\noindent{\bf Notations.}
$\|\cdot\|$ denotes the Euclidean norm of a vector or the spectral norm of a matrix. $\0_n\in\R^n$ represents the vector of all zeros, whereas $\1_n\in\R^n$ the vector of all ones. $\I_n$ is the identity matrix of order $n$. For any $\w, \, \z\in\R^n$, $\w^\t\z = \la \w, \z \ra = \sum_{i} w_i z_i$ is their inner product. $\w\odot\z$ denotes the Hadamard product whose $i^{\mathrm{th}}$ entry is given by $(\w\odot\z)_i = w_iz_i$.

\section{Learning Two-Linear-Layer CNN with Binary Activation}
We consider a model similar to \citep{Lee} that outputs the prediction 
$$
y(\Z,\v,\w) := \sum_{i=1}^m v_i\sigma(\Z_i^\t\w) =  \v^\t\sigma (\Z\w)
$$
for some input $\Z\in\R^{m\times n}$. Here $\w\in\R^n$ and $\v\in\R^m$ are the trainable weights in the first and second linear layer, respectively; $\Z_i^\t$ denotes the $i$th row vector of $\Z$; the activation function $\sigma$ acts component-wise on the vector $\Z\w$, i.e., $\sigma(\Z\w)_i = \sigma((\Z\w)_i) = \sigma(\Z_i^\t\w)$.
The first layer serves as a convolutional layer, where each row $\Z_i^\t$ can be viewed as a patch sampled from $\Z$ and the weight filter $\w$ is shared among all patches, and the second linear layer is the classifier. The label is generated according to 
$y^*(\Z) = (\v^*)^\t\sigma (\Z\w^*)$ for some true (non-zero) parameters $\v^*$ and $\w^*$. Moreover, we use the following squared sample loss
\begin{equation}\label{eq:sampleloss}
 \ell(\v,\w; \Z): = \frac{1}{2}\left(y(\Z,\v,\w) - y^*(\Z)\right)^2 = \frac{1}{2}\left(\v^\t \sigma(\Z\w) - y^*(\Z)\right)^2.   
\end{equation}
Unlike in \citep{Lee}, the activation function $\sigma$ here is not ReLU, but the binary function $\sigma(x) = 1_{\{x>0\}}$. 


We assume that the entries of $\Z\in\R^{m\times n}$ are i.i.d. sampled from the Gaussian distribution $\mathcal{N}(0,1)$ \citep{zhong2017recovery,brutzkus2017globally}. Since $\ell(\v,\w;\Z) = \ell(\v,\w/c; \Z)$ for any scalar $c>0$, without loss of generality, we take $\|\w^*\| = 1$ and cast the learning task as the following population loss minimization problem:
\begin{equation}\label{eq:model}
\min_{\v\in\R^m,\w\in\R^n} \; f(\v, \w) := \E_\Z \left[\ell(\v,\w; \Z)\right],
\end{equation}
where the sample loss $\ell(\v,\w; \Z)$ is given by (\ref{eq:sampleloss}).

\subsection{Back-propagation and Coarse Gradient Descent}
With the Gaussian assumption on $\Z$, as will be shown in section \ref{sec:prelim}, it is possible to find the analytic expressions of $f(\v,\w)$ and its gradient 
\begin{equation*}
\nabla f(\v,\w): = 
\begin{bmatrix}
\frac{\partial f}{\partial \v}(\v,\w) \\
\frac{\partial f}{\partial \w}(\v,\w)
\end{bmatrix}.
\end{equation*} 
The gradient of objective function, however, is not available for the network training. In fact, we can only access the expected sample gradient, namely,
$$
\E_\Z\left[\frac{\partial \ell}{\partial \v}(\v,\w;\Z)\right] \mbox{ and } \E_\Z\left[\frac{\partial \ell}{\partial \w}(\v,\w;\Z)\right]. 
$$
We remark that $\E_\Z\left[\frac{\partial \ell}{\partial \w}(\v,\w;\Z)\right]$ is not the same as $\frac{\partial f}{\partial \w}(\v,\w) = \frac{\partial \E_\Z \left[\ell(\v,\w; \Z)\right]}{\partial \w}$. By the standard back-propagation or chain rule, we readily check that
\begin{equation}\label{eq:bp_v}
\frac{\partial \ell}{\partial \v}(\v,\w;\Z) = \sigma(\Z\w)\Big(\v^\t \sigma(\Z\w) - y^*(\Z)\Big)
\end{equation}
and
\begin{equation}\label{eq:bp_w}
\frac{\partial \ell}{\partial \w}(\v,\w;\Z) = \Z^\t\big(\sigma^{\prime}(\Z\w)\odot\v\big)\Big(\v^\t \sigma(\Z\w) - y^*(\Z)\Big).
\end{equation}
Note that $\sigma^\prime$ is zero a.e., which makes (\ref{eq:bp_w}) inapplicable to the training. The idea of STE is to simply replace the a.e. zero component $\sigma^\prime$ in (\ref{eq:bp_w}) with a related non-trivial function $\mu^{\prime}$ \citep{hinton2012neural,bengio2013estimating,bnn_16,halfwave_17}, which is the derivative of some (sub)differentiable function $\mu$. More precisely, back-propagation using the STE $\mu^\prime$ gives the following non-trivial surrogate of $\frac{\partial \ell}{\partial \w}(\v,\w;\Z)$, to which we refer as the coarse (partial) gradient
\begin{equation}\label{eq:cbp_w}
\g_{\mu}(\v,\w;\Z) = \Z^\t\big(  \mu^{\prime}(\Z\w)\odot\v\big)\Big(\v^\t \sigma(\Z\w) - y^*(\Z)\Big).
\end{equation}
Using the STE $\mu^{\prime}$ to train the two-linear-layer convolutional neural network (CNN) with binary activation gives rise to the (full-batch) coarse gradient descent described in Algorithm \ref{alg}.

\begin{algorithm}
\caption{Coarse gradient descent for learning two-linear-layer CNN with STE $\mu^\prime$.}\label{alg}
\textbf{Input}: initialization $\v^0\in\R^m$, $\w^0\in\R^n$, learning rate $\eta$.
\begin{algorithmic}
    \FOR {$t = 0, 1, \, \dots$} 
    \STATE $\v^{t+1} = \v^t - \eta \, \E_\Z  \left[\frac{\partial \ell}{\partial \v}(\v^t,\w^t;\Z)\right]$ 
    \STATE $\w^{t+1} = \w^t - \eta \, \E_\Z \left[\g_\mu(\v^t,\w^t; \Z)\right]$
    \ENDFOR
\end{algorithmic}
\end{algorithm}


\subsection{Preliminaries}\label{sec:prelim}
Let us present some preliminaries about the landscape of the population loss function $f(\v,\w)$. To this end, we define the angle between $\w$ and $\w^*$ as 
$\theta(\w,\w^*) := \arccos\Big(\frac{\w^\t\w^*}{\|\w\|\|\w^*\|}\Big)$ for any $\w\neq\0_n$. Recall that the label is given by $y^*(\Z) = (\v^*)^\t \Z\w^*$ from (\ref{eq:sampleloss}), we elaborate on the analytic expressions of $f(\v,\w)$ and $\nabla f(\v,\w)$. 

\begin{lem}\label{lem:obj}
If $\w\neq \0_n$, the population loss $f(\v,\w)$ is given by
\begin{align*}\label{eq:loss}
\frac{1}{8}\Bigg[\v^\t\big(\I_m + \1_m\1_m^\t \big) \v -2\v^\t \left( \left(1-\frac{2}{\pi}\theta(\w,\w^*) \right)\I_m + \1_m\1_m^\t \right)\v^* + (\v^*)^\t \big(\I_m + \1_m\1_m^\t \big)\v^* \Bigg].    
\end{align*}
In addition, $f(\v,\w) = \frac{1}{8}(\v^*)^\t \big(\I_m + \1_m\1_m^\t \big)\v^*$ for $\w = \0_n$.
\end{lem}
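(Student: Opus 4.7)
The plan is to compute $f(\v,\w)=\E_\Z[\ell(\v,\w;\Z)]$ by expanding the square and evaluating the three resulting Gaussian expectations in closed form. Writing $\bm{s}(\w):=\sigma(\Z\w)\in\{0,1\}^m$ and using $y^*(\Z)=(\v^*)^\t\bm{s}(\w^*)$, expansion of (\ref{eq:sampleloss}) gives
\begin{equation*}
f(\v,\w)=\tfrac12\,\v^\t A(\w,\w)\v \;-\; \v^\t A(\w,\w^*)\v^* \;+\; \tfrac12(\v^*)^\t A(\w^*,\w^*)\v^*,
\end{equation*}
where $A(\w,\w'):=\E_\Z[\bm{s}(\w)\bm{s}(\w')^\t]\in\R^{m\times m}$. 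So the whole task reduces to identifying the matrix $A(\w,\w')$ for $\w,\w'\neq\0_n$.

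Next I would compute the entries of $A(\w,\w')$ using the Gaussian assumption on $\Z$. Since the rows $\Z_i^\t$ are i.i.d.\ $\mathcal{N}(\0_n,\I_n)$, for $i\neq j$ the scalars $\Z_i^\t\w$ and $\Z_j^\t\w'$ are independent centered normals, so $\E[\sigma(\Z_i^\t\w)\sigma(\Z_j^\t\w')]=\tfrac12\cdot\tfrac12=\tfrac14$. For $i=j$, $(\Z_i^\t\w,\Z_i^\t\w')$ is a centered bivariate normal with correlation $\rho=\cos\theta(\w,\w')$, and the standard orthant-probability identity
\begin{equation*}
\P(X>0,Y>0)=\tfrac14+\tfrac{1}{2\pi}\arcsin(\rho)
\end{equation*}
yields $\E[\sigma(\Z_i^\t\w)\sigma(\Z_i^\t\w')]=\tfrac12-\tfrac{\theta(\w,\w')}{2\pi}$. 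Assembling the diagonal and off-diagonal entries,
\begin{equation*}
A(\w,\w')=\tfrac14\,\1_m\1_m^\t+\Big(\tfrac14-\tfrac{\theta(\w,\w')}{2\pi}\Big)\I_m,
\end{equation*}
and in particular $A(\w,\w)=A(\w^*,\w^*)=\tfrac14(\I_m+\1_m\1_m^\t)$ since $\theta(\w,\w)=\theta(\w^*,\w^*)=0$.

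Plugging these three matrices into the expansion and collecting terms—the two pure quadratic pieces contribute the $\v^\t(\I_m+\1_m\1_m^\t)\v$ and $(\v^*)^\t(\I_m+\1_m\1_m^\t)\v^*$ terms with coefficient $\tfrac18$, while the cross term $-\v^\t A(\w,\w^*)\v^*$ produces $-\tfrac14\v^\t\1_m\1_m^\t\v^*-\tfrac14\v^\t\v^*+\tfrac{\theta(\w,\w^*)}{2\pi}\v^\t\v^*$—matches the claimed formula after pulling out the factor $\tfrac18$. Finally, for the degenerate case $\w=\0_n$, note $\sigma(0)=0$, hence $\bm{s}(\0_n)=\0_m$ and $\ell(\v,\0_n;\Z)=\tfrac12(y^*(\Z))^2$; taking expectation and using $A(\w^*,\w^*)=\tfrac14(\I_m+\1_m\1_m^\t)$ already computed gives the stated value.

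The main technical ingredient—the only nontrivial piece—is the bivariate Gaussian orthant probability $\P(X>0,Y>0)=\tfrac14+\tfrac{\arcsin\rho}{2\pi}$; everything else is bookkeeping. I would cite this as a standard fact (it follows from writing $(X,Y)$ as a rotation of independent standard normals and computing the angular measure of the positive quadrant, or equivalently from Sheppard's formula). No other difficulty is anticipated.
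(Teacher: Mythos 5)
Your proposal is correct and follows essentially the same route as the paper: expand the squared loss into the three second-moment matrices $\E_\Z[\sigma(\Z\w)\sigma(\Z\w')^\t]$, use row-independence for the off-diagonal entries and a bivariate Gaussian orthant probability for the diagonal ones. The only cosmetic difference is that you cite Sheppard's formula $\P(X>0,Y>0)=\tfrac14+\tfrac{\arcsin\rho}{2\pi}$ (which with $\rho=\cos\theta$ gives $\tfrac{\pi-\theta}{2\pi}$), whereas the paper derives the same quantity directly in its Lemma \ref{lem:basic} by reducing to two dimensions and measuring the angular sector.
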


\begin{lem}\label{lem:grad}
If $\w\neq\0_n$ and $\theta(\w,\w^*)\in(0, \pi)$, the partial gradients of $f(\v,\w)$ w.r.t. $\v$ and $\w$ are
\begin{equation}\label{eq:grad_v}
\frac{\partial f}{\partial \v}(\v,\w) = \frac{1}{4}\big(\I_m + \1_m\1_m^\t \big) \v - \frac{1}{4}\left( \left(1-\frac{2}{\pi}\theta(\w,\w^*) \right)\I_m + \1_m\1_m^\t \right)\v^* 
\end{equation}
and
\begin{equation}\label{eq:grad_w}
\frac{\partial f}{\partial \w}(\v,\w) = -\frac{\v^\t\v^*}{2\pi\|\w\|}
\frac{\Big(\I_n - \frac{\w\w^\t}{\|\w\|^2}\Big)\w^*}{\Big\| \Big(\I_n - \frac{\w\w^\t}{\|\w\|^2}\Big)\w^*\Big\|},
\end{equation}
respectively.
\end{lem}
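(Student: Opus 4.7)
The plan is to differentiate the closed-form expression for $f(\v,\w)$ provided by Lemma~\ref{lem:obj}, treating $\v$ and $\w$ in turn. The $\v$-derivative is essentially immediate: the quadratic form in $\v$ gives
\[
\frac{\partial f}{\partial \v}(\v,\w) \;=\; \frac{1}{8}\Bigl[\,2(\I_m + \1_m\1_m^\t)\v \;-\; 2\Bigl(\bigl(1-\tfrac{2}{\pi}\theta(\w,\w^*)\bigr)\I_m + \1_m\1_m^\t\Bigr)\v^*\,\Bigr],
\]
which is exactly \eqref{eq:grad_v}. No further work is required here.

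For $\frac{\partial f}{\partial \w}$, the only $\w$-dependent piece of $f(\v,\w)$ is $\tfrac{1}{8}\cdot(-2)\cdot\bigl(1-\tfrac{2}{\pi}\theta(\w,\w^*)\bigr)\,\v^\t\v^*$, so by the chain rule
\[
\frac{\partial f}{\partial \w}(\v,\w) \;=\; \frac{\v^\t\v^*}{2\pi}\,\nabla_\w \theta(\w,\w^*).
\]
It thus suffices to compute $\nabla_\w \theta(\w,\w^*)$ for $\w\neq\0_n$ and $\theta\in(0,\pi)$. Writing $c(\w) := \w^\t\w^*/(\|\w\|\|\w^*\|)$, a direct computation gives
\[
\nabla_\w c(\w) \;=\; \frac{1}{\|\w\|\|\w^*\|}\Bigl(\I_n - \tfrac{\w\w^\t}{\|\w\|^2}\Bigr)\w^*,
\]
and $\nabla_\w\theta = -\nabla_\w c/\sqrt{1-c^2} = -\nabla_\w c/\sin\theta$ since $\theta\in(0,\pi)$.

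To arrive at the stated form, I will simplify $\sin\theta$ using the geometric identity
\[
\sin\theta \;=\; \frac{\bigl\|\bigl(\I_n - \w\w^\t/\|\w\|^2\bigr)\w^*\bigr\|}{\|\w^*\|},
\]
which follows from the fact that $(\I_n - \w\w^\t/\|\w\|^2)\w^*$ is the projection of $\w^*$ onto the orthogonal complement of $\w$, of length $\|\w^*\|\sin\theta$. Substituting this into the expression for $\nabla_\w \theta$, the factor $\|\w^*\|$ cancels, yielding
\[
\nabla_\w\theta(\w,\w^*) \;=\; -\frac{1}{\|\w\|}\,\frac{\bigl(\I_n - \w\w^\t/\|\w\|^2\bigr)\w^*}{\bigl\|\bigl(\I_n - \w\w^\t/\|\w\|^2\bigr)\w^*\bigr\|},
\]
and multiplying by $\v^\t\v^*/(2\pi)$ gives \eqref{eq:grad_w}.

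The only conceptual subtlety is ensuring differentiability of $\theta$ at $\w$. Since $\arccos$ is smooth on $(-1,1)$, the computation above is valid precisely when $|c(\w)|<1$, i.e., when $\theta(\w,\w^*)\in(0,\pi)$, which is exactly the hypothesis of the lemma; the boundary angles $0$ and $\pi$ correspond to the cosine landing at $\pm 1$ where $\arccos$ is not differentiable. I anticipate that the only mildly tedious step is verifying the geometric identity for $\sin\theta$, but this follows by decomposing $\w^* = \tfrac{\w^\t\w^*}{\|\w\|^2}\w + (\I_n - \tfrac{\w\w^\t}{\|\w\|^2})\w^*$ and taking norms; no deeper difficulty is expected.
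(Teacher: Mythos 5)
Your proposal is correct and follows essentially the same route as the paper: differentiate the closed-form population loss from Lemma~\ref{lem:obj}, obtain the $\v$-gradient immediately from the quadratic form, and reduce the $\w$-gradient to computing $\nabla_\w\theta$ via the derivative of $\arccos$, identifying $\|\w\|^2\w^* - (\w^\t\w^*)\w$ with the orthogonal projection $(\I_n - \w\w^\t/\|\w\|^2)\w^*$ and $\sqrt{1-c^2}$ with its (normalized) length. The only cosmetic difference is that the paper invokes the normalization $\|\w^*\|=1$ directly, whereas you carry $\|\w^*\|$ through and observe that it cancels.
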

For any $\v\in\R^m$, $(\v,\0_m)$ is impossible to be a local minimizer. The only possible (local) minimizers of the model (\ref{eq:model}) are located at
\begin{enumerate}
\item Stationary points where the gradients given by (\ref{eq:grad_v}) and (\ref{eq:grad_w}) vanish simultaneously (which may not be possible), i.e., 
\begin{equation}\label{eq:critical}
\v^\t\v^* = 0 \mbox{ and } \v = \big(\I_m + \1_m\1_m^\t \big)^{-1}\left( \left(1-\frac{2}{\pi}\theta(\w,\w^*) \right)\I_m + \1_m\1_m^\t \right)\v^* .
\end{equation}
\item Non-differentiable points where $\theta(\w,\w^*) = 0$ and $\v = \v^*$, or $\theta(\w,\w^*) = \pi$ and $\v = \big(\I_m + \1_m\1_m^\t \big)^{-1}( \1_m\1_m^\t -\I_m )\v^*$. 
\end{enumerate}
Among them, $\{(\v,\w): \v = \v^*, \, \theta(\w,\w^*) = 0\}$ are obviously the global minimizers of (\ref{eq:model}). We show that the stationary points, if exist, can only be saddle points, and $\{(\v,\w): \theta(\w,\w^*) = \pi, \;\v = \big(\I_m + \1_m\1_m^\t \big)^{-1}( \1_m\1_m^\t -\I_m )\v^*\}$ are the only potential spurious local minimizers. 

\begin{prop}\label{prop}
If the true parameter $\v^*$ satisfies $(\1_m^\t\v^*)^2 < \frac{m+1}{2}\|\v^*\|^2$, then 
\begin{align}\label{eq:saddle}
\bigg\{(\v,\w):  \v = (\I_m + \1_m\1_m^\t)^{-1} & \left(\frac{-(\1_m^\t\v^*)^2}{(m+1)\|\v^*\|^2- (\1_m^\t\v^*)^2}\I_m + \1_m\1_m^\t\right)\v^*, \notag\\
& \qquad \qquad \qquad \theta(\w,\w^*) = \frac{\pi}{2}\frac{(m+1)\|\v^*\|^2}{(m+1)\|\v^*\|^2 - (\1_m^\t\v^*)^2} \bigg\}
\end{align}
give the saddle points obeying (\ref{eq:critical}), and $\{(\v,\w): \theta(\w,\w^*) = \pi, \;\v = \big(\I_m + \1_m\1_m^\t \big)^{-1}( \1_m\1_m^\t -\I_m )\v^*\}$ are the spurious local minimizers. Otherwise, the model (\ref{eq:model}) has no saddle points  or spurious local minimizers.  
\end{prop}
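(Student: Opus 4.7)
The plan is to reduce the joint $(\v,\w)$-analysis to a single scalar $t := \theta(\w,\w^*)/\pi$ by exploiting the observation from Lemma~\ref{lem:obj} that for any fixed $\w$ with $\theta\in(0,\pi)$ the loss $f(\v,\w)$ is a strictly convex quadratic in $\v$ with Hessian $\tfrac{1}{4}(\I_m+\1_m\1_m^\t)$ independent of $\w$. Minimizing over $\v$ at fixed $\w$ yields the unique minimizer $\v(\theta) := (\I_m+\1_m\1_m^\t)^{-1}\bigl(g(\theta)\I_m+\1_m\1_m^\t\bigr)\v^*$, where $g(\theta):=1-\tfrac{2\theta}{\pi}$; by Sherman--Morrison this simplifies to $\v(\theta) = g(\theta)\v^* + (1-g(\theta))\tfrac{\1_m^\t\v^*}{m+1}\1_m$. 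Any candidate local minimizer with $\theta\in(0,\pi)$ must therefore satisfy $\v=\v(\theta)$, reducing the problem to a one-parameter family.

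For the interior stationary points, I would impose the remaining condition $\v(\theta)^\t\v^*=0$ from (\ref{eq:critical}). Substituting the Sherman--Morrison expression turns this into a linear equation in $g(\theta)$ with unique solution $g(\theta) = -(\1_m^\t\v^*)^2/[(m+1)\|\v^*\|^2-(\1_m^\t\v^*)^2]$. Since $\theta\in(0,\pi)$ iff $g(\theta)\in(-1,1)$, an admissible $\theta$ exists iff $g(\theta)>-1$, and this rearranges to $(\1_m^\t\v^*)^2<\tfrac{m+1}{2}\|\v^*\|^2$. Converting $g$ back to $\theta$ reproduces the angle in (\ref{eq:saddle}), and plugging that $g$ into $\v(\theta)$ reproduces the $\v$-formula there, establishing existence of interior stationary points exactly under the stated inequality and their nonexistence otherwise.

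Next I would substitute $\v(\theta)$ back into Lemma~\ref{lem:obj} and simplify using the identity $(\I_m+\1_m\1_m^\t)\v(\theta) = (g(\theta)\I_m+\1_m\1_m^\t)\v^*$; after collecting terms this gives the reduced one-dimensional loss
$$ F(t) \;:=\; f(\v(\theta),\w) \;=\; \tfrac{t}{2}\bigl[\,\|\v^*\|^2 - t\,b\,\bigr], \qquad b \,:=\, \|\v^*\|^2 - \tfrac{(\1_m^\t\v^*)^2}{m+1} \,>\, 0, $$
on $t\in[0,1]$. This is a downward-opening parabola whose interior critical point $t^\ast = \|\v^*\|^2/(2b)$ lies in $(0,1)$ exactly when $(\1_m^\t\v^*)^2<\tfrac{m+1}{2}\|\v^*\|^2$, and $\pi t^\ast$ matches the saddle angle in (\ref{eq:saddle}). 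The classification then follows: $\v(\theta^\ast)$ minimizes $f$ in $\v$ at fixed $\w$, while moving along the curve $(\v(\theta),\w(\theta))$ with $\theta$ perturbed away from $\theta^\ast$ strictly decreases $F$ on either side, so the point is a saddle.

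Finally, for the candidate at $\theta=\pi$ with $\v(\pi) = (\I_m+\1_m\1_m^\t)^{-1}(\1_m\1_m^\t-\I_m)\v^*$, I would use the convex-quadratic bound $f(\v,\w)\geq F(\theta(\w,\w^*)/\pi)$ to reduce local minimality to the boundary behavior of $F$ at $t=1$. A direct computation gives $F'(1)=\tfrac{1}{2}(\|\v^*\|^2-2b)$, and $F'(1)<0$ is equivalent to $(\1_m^\t\v^*)^2<\tfrac{m+1}{2}\|\v^*\|^2$; in that regime $F$ strictly decreases as $t\uparrow 1$, so every nearby $(\v,\w)$ with $\theta<\pi$ satisfies $f(\v,\w)\geq F(\theta/\pi)>F(1)=f(\v(\pi),\w_1)$, while strict convexity in $\v$ handles perturbations with $\theta=\pi$ and $\v\neq\v(\pi)$, certifying spurious local minimality. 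In the complementary regime, $F'(1)\geq 0$ together with $F''=-b<0$ forces $F(t)<F(1)$ for $t$ slightly below $1$, so choosing $\v=\v(\theta)$ with $\theta$ slightly less than $\pi$ strictly decreases $f$; combined with the earlier nonexistence of interior critical points, no local minimizer beyond the global one survives. The main delicate point is this $t=1$ boundary analysis, since the $\theta$-parametrization of $\w$ is singular at $\theta=\pi$; working with the intrinsic inequality $f(\v,\w)\geq F(\theta/\pi)$ sidesteps the singularity without invoking a Hessian near a non-smooth point.
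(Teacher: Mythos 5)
Your proposal is correct, and it reaches the conclusion by a genuinely different route than the paper. The paper works directly with the joint reduced objective $\tilde{f}(\v,\theta)$: it derives the saddle location by algebraic manipulation of the two conditions in (\ref{eq:critical}), classifies the interior stationary points by exhibiting an indefinite Hessian $\begin{bmatrix} 2(\I_m+\1_m\1_m^\t) & \frac{4}{\pi}\v^* \\ \frac{4}{\pi}(\v^*)^\t & 0\end{bmatrix}$, and settles the $\theta=\pi$ case by an explicit two-variable perturbation $(\Delta\v,\Delta\theta)$ whose sign hinges on $\v^\t\v^* = \frac{2(\1_m^\t\v^*)^2}{m+1}-\|\v^*\|^2$. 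You instead eliminate $\v$ by partial minimization, obtaining the concave profile $F(t)=\frac{t}{2}(\|\v^*\|^2-tb)$ (which I verified: $f(\v(\theta),\w)=\frac{t(1-t)}{2}\|\v^*\|^2+\frac{t^2}{2(m+1)}(\1_m^\t\v^*)^2$), so that the existence condition, the saddle angle $\pi t^\ast$, and the $\theta=\pi$ classification all read off one scalar function: the saddle is the vertex of the parabola (increasing in $\v$-directions, decreasing along the curve $(\v(\theta),\w(\theta))$), and local minimality at $\theta=\pi$ is the sign of $F'(1)$ combined with the envelope bound $f(\v,\w)\geq F(\theta/\pi)$, which also neatly sidesteps the non-differentiability at $\theta=\pi$ and handles the borderline case $F'(1)=0$ uniformly (where the paper has to appeal separately to second-order conditions). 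The two arguments are secretly computing the same quantity --- by the envelope theorem $F'(1)=\frac{1}{2}\v(\pi)^\t\v^*$, exactly the paper's sign condition --- but your packaging is more unified and avoids the Hessian computation, at the mild cost of the Sherman--Morrison bookkeeping needed to establish the closed form of $F$.
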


We further prove that the population gradient $\nabla f(\v,\w)$ given by (\ref{eq:grad_v}) and (\ref{eq:grad_w}), is Lipschitz continuous when restricted to bounded domains.

\begin{lem}\label{lem:lipschitz}
For any differentiable points $(\v,\w)$ and $(\tv,\tw)$ with $\min\{ \|\w\|,  \|\tw\|\} = c_\w>0$ and $\max\{\|\v\|, \|\tv\|\} = C_\v$, there exists a Lipschitz constant $L>0$ depending on $C_\v$ and $c_\w$, such that
$$
\left\|\nabla f(\v,\w) - \nabla f(\tv,\tw) \right\| \leq L \|(\v,\w) - (\tv, \tw)\|.
$$
\end{lem}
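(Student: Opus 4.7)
The plan is to handle the $\v$- and $\w$-blocks of $\nabla f$ separately via the triangle inequality, reusing three basic Lipschitz ingredients: (a) the angle map $\w \mapsto \theta(\w,\w^*)$, (b) the scalar $(\v,\w)\mapsto \v^\t\v^*/\|\w\|$, and (c) the unit direction
$$
\vu(\w) \;:=\; \frac{(\I_n-\w\w^\t/\|\w\|^2)\w^*}{\|(\I_n-\w\w^\t/\|\w\|^2)\w^*\|}.
$$

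For $\partial_\v f$, which is affine in $\v$ and depends on $\w$ only through $\theta(\w,\w^*)$, the bound reduces to ingredient (a). I would establish (a) in two elementary steps. First, by the triangle inequality for spherical angles, $|\theta(\w,\w^*)-\theta(\tw,\w^*)| \le \theta(\w,\tw)$. Second, for unit vectors $\vu_1,\vu_2$ at angle $\alpha\in[0,\pi]$ the identity $\|\vu_1-\vu_2\|=2\sin(\alpha/2)$ combined with $\sin(x)\ge 2x/\pi$ on $[0,\pi/2]$ gives $\alpha \le (\pi/2)\|\vu_1-\vu_2\|$. Applying this to $\w/\|\w\|,\tw/\|\tw\|$ and using the algebraic identity $\w\|\tw\|-\tw\|\w\|=(\w-\tw)\|\tw\|+\tw(\|\tw\|-\|\w\|)$ to deduce $\|\w/\|\w\|-\tw/\|\tw\|\|\le 2\|\w-\tw\|/c_\w$, I get $|\theta(\w,\w^*)-\theta(\tw,\w^*)|\le (\pi/c_\w)\|\w-\tw\|$, which is enough to Lipschitz-bound $\partial_\v f$ with a constant depending only on $\|\v^*\|, c_\w$, and the spectral norm of $\I_m+\1_m\1_m^\t$.

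For $\partial_\w f = -\frac{\v^\t\v^*}{2\pi\|\w\|}\vu(\w)$ I would apply a three-piece splitting: change $\v\to\tv$ with $\w$ fixed, then $\|\w\|\to\|\tw\|$ with $\vu$ fixed, then $\vu(\w)\to\vu(\tw)$. The first two pieces are handled using $|\v^\t\v^*|\le C_\v\|\v^*\|$, $\|\vu\|=1$, and $|1/\|\w\|-1/\|\tw\||\le \|\w-\tw\|/c_\w^2$. For ingredient (c), I would compute $\nabla\vu$ via the quotient rule
$$
\nabla\vu = \frac{1}{\|\vp\|}\bigl(\I_n-\vu\vu^\t\bigr)\nabla\vp,\qquad \vp(\w):=(\I_n-\w\w^\t/\|\w\|^2)\w^*,
$$
and bound $\|\nabla\vp\|\le 4/c_\w$ by term-by-term estimation of $-\w\w^{*\t}/\|\w\|^2 + 2a\w\w^\t/\|\w\|^2 - a\I_n$ with $a=\w^\t\w^*/\|\w\|^2$.

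The main obstacle is ingredient (c): the factor $1/\|\vp\|=1/\sin\theta(\w,\w^*)$ in $\nabla\vu$ blows up exactly at the excluded non-differentiable rays $\theta\in\{0,\pi\}$. To convert the pointwise Jacobian bound into a global Lipschitz estimate, I would integrate along a path of differentiable points joining $(\v,\w)$ to $(\tv,\tw)$ on which $\sin\theta$ stays bounded below, so that $L$ can be expressed as a continuous function of $C_\v$ and $c_\w$. Once all three pieces are assembled, the triangle inequality $\|\nabla f(\v,\w)-\nabla f(\tv,\tw)\|\le\|\partial_\v f(\v,\w)-\partial_\v f(\tv,\tw)\|+\|\partial_\w f(\v,\w)-\partial_\w f(\tv,\tw)\|$ yields the stated bound.
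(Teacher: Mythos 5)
Your treatment of $\frac{\partial f}{\partial \v}$ and of the first two pieces of $\frac{\partial f}{\partial \w}$ is correct and essentially matches the paper: part 1 of Lemma~\ref{lem:angle} proves the same angle estimate $|\theta(\w,\w^*)-\theta(\tw,\w^*)|\le\frac{\pi}{2c_\w}\|\w-\tw\|$ (via a slightly sharper bound on $\left\|\frac{\w}{\|\w\|}-\frac{\tw}{\|\tw\|}\right\|$, but the constant is immaterial), and the $\v$-factor and the $1/\|\w\|$-factor are handled exactly as you propose.

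The gap is in your ingredient (c), and it is not a technicality. You correctly note that $\nabla\vu=\frac{1}{\|\vp\|}(\I_n-\vu\vu^\t)\nabla\vp$ carries the factor $1/\|\vp(\w)\|=1/\sin\theta(\w,\w^*)$, but your proposed repair --- integrating along a path on which $\sin\theta$ stays bounded below --- cannot yield a constant depending only on $C_\v$ and $c_\w$: the lemma admits differentiable endpoints with $\theta(\w,\w^*)$ arbitrarily close to $0$ or $\pi$, so no lower bound on $\sin\theta$ along the path (or even at its endpoints) is available from $C_\v$ and $c_\w$ alone. Moreover the blow-up is real rather than an artifact of the Jacobian route: with $\|\w^*\|=1$, $\w=\cos\theta\,\w^*+\sin\theta\,\ve$ and $\tw=\cos\theta\,\w^*+\sin\theta\,\ve'$ for unit $\ve,\ve'\perp\w^*$, one computes $\vu(\w)=\sin\theta\,\w^*-\cos\theta\,\ve$, hence $\|\vu(\w)-\vu(\tw)\|=\cos\theta\,\|\ve-\ve'\|$ while $\|\w-\tw\|=\sin\theta\,\|\ve-\ve'\|$; the ratio is $\cot\theta$, which is unbounded as $\theta\to 0$. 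So $\vu$ by itself is not uniformly Lipschitz near the rays $\theta\in\{0,\pi\}$, and since the prefactor $\v^\t\v^*/(2\pi\|\w\|)$ need not vanish there, this piece requires a genuinely different argument, not a refinement of the path integral. The paper's route (part 2 of Lemma~\ref{lem:angle}) bypasses the Jacobian entirely: it asserts that the angle between $(\I_n-\w\w^\t/\|\w\|^2)\w^*$ and $(\I_n-\tw\tw^\t/\|\tw\|^2)\w^*$ equals the angle between $\w$ and $\tw$, which gives $\left\|\frac{\vu(\w)}{\|\w\|}-\frac{\vu(\tw)}{\|\tw\|}\right\|=\left\|\frac{\w}{\|\w\|^2}-\frac{\tw}{\|\tw\|^2}\right\|=\frac{\|\w-\tw\|}{\|\w\|\|\tw\|}$ with no singular factor. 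That single geometric identity carries all the weight of the proof (and the example above indicates it deserves scrutiny for $n\ge 3$, where the three vectors need not be coplanar); your write-up would need either that identity or a substitute for it, and the path-integration idea as stated is not one.
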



\section{Main Results}
We are most interested in the complex case where both the saddle points and spurious local minimizers are present. Our main results are concerned with the behaviors of the coarse gradient descent summarized in Algorithm \ref{alg} when the derivatives of the vanilla and clipped ReLUs as well as the identity function serve as the STE, respectively. We shall prove that Algorithm \ref{alg} using the derivative of vanilla or clipped ReLU converges to a critical point, whereas that with the identity STE does not.


\begin{thm}[Convergence]\label{thm}
Let $\{(\v^t, \w^t)\}$ be the sequence generated by Algorithm \ref{alg} with ReLU $\mu(x) = \max\{x,0\}$ or clipped ReLU $\mu(x)=\min\left\{\max\{x,0\},1\right\}$. Suppose $\|\w^t\|\geq c_\w$ for all $t$ with some $c_\w >0$. Then if the learning rate $\eta>0$ is sufficiently small, for any initialization $(\v^0,\w^0)$, the objective sequence $\{f(\v^t,\w^t)\}$ is monotonically decreasing, and $\{(\v^t,\w^t)\}$ converges to a saddle point or a (local) minimizer of the population loss minimization (\ref{eq:model}). In addition, if $\1_m^\t\v^* \neq 0$ and $m>1$, the descent and convergence properties do not hold for Algorithm \ref{alg} with the identity function $\mu(x) = x$ near the local minimizers satisfying $\theta(\w,\w^*) = \pi$ and $\v = (\I_m + \1_m\1_m^\t)^{-1}(\1_m\1_m^\t - \I_m)\v^*$.
\end{thm}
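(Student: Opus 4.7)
The plan is to split the proof into two parts matching the two claims, and reuse the closed-form computations that Lemmas 1–2 enable. For both parts, the starting point is to write the expected coarse gradient in closed form, by using that the rows of $\Z$ are i.i.d. Gaussian and that only the two-dimensional marginal $(\Z_i^\t\w, \Z_i^\t\w^*)$ matters for each summand in $\E_\Z[\g_\mu(\v,\w;\Z)]$. After rotating coordinates so that $\w/\|\w\|$ and $\w^*$ span a 2-plane, each expectation reduces to a Gaussian integral depending only on $\|\w\|$, $\theta(\w,\w^*)$, and the inner products of $\v$ with $\v^*$ and $\1_m$. I expect $\E_\Z[\g_\mu(\v,\w;\Z)]$ to have the schematic form $\alpha_\mu(\v,\v^*)\,\w + \beta_\mu(\v,\v^*,\theta)\,\w^*$ for some explicit scalars; the $\v$-update already uses the exact gradient $\partial f/\partial\v$ since no STE enters (\ref{eq:bp_v}).

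For the convergence half (vanilla or clipped ReLU $\mu$), I would first show the key positive-correlation inequality
\[
\Bigl\langle \E_\Z[\g_\mu(\v,\w;\Z)],\, \tfrac{\partial f}{\partial \w}(\v,\w)\Bigr\rangle \;\geq\; 0,
\]
and similarly that $\E_\Z[\partial\ell/\partial\v]=\partial f/\partial\v$ exactly. The main computation is to identify the component of $\E_\Z[\g_\mu]$ lying in the $(\I_n-\w\w^\t/\|\w\|^2)\w^*$ direction (the only direction in which $\partial f/\partial\w$ points, by (\ref{eq:grad_w})) and verify its sign matches. Once positive correlation holds, I apply Lemma \ref{lem:lipschitz}: on the sublevel set together with the standing hypothesis $\|\w^t\|\geq c_\w$, the population gradient is Lipschitz with some constant $L$, and the standard descent-lemma argument
\[
f(\v^{t+1},\w^{t+1}) \leq f(\v^t,\w^t) - \eta\,\Bigl(1-\tfrac{L\eta}{2}C\Bigr)\bigl\| \text{coarse update}\bigr\|^2
\]
gives monotone decrease for sufficiently small $\eta$. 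Summing, the squared norms of the expected coarse updates are summable; combined with the closed-form structure this forces the coarse gradient and $\partial f/\partial\v$ to vanish in the limit. Finally, I would check that any limit point at which the coarse gradient and $\partial f/\partial\v$ both vanish must satisfy (\ref{eq:critical}) or the non-differentiable boundary conditions, which by Proposition \ref{prop} means it is a saddle, a global minimizer, or one of the spurious local minimizers enumerated there.

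For the instability half with $\mu(x)=x$, the strategy is a direct local computation at the point $(\v^\sharp,\w^\sharp)$ with $\theta(\w^\sharp,\w^*)=\pi$ and $\v^\sharp=(\I_m+\1_m\1_m^\t)^{-1}(\1_m\1_m^\t-\I_m)\v^*$. Since $\mu'\equiv 1$,
\[
\E_\Z[\g_{\id}(\v,\w;\Z)] \;=\; \E_\Z\!\left[\Z^\t\v\bigl(\v^\t\Z\w - (\v^*)^\t\Z\w^*\bigr)\right] \;=\; \|\v\|^2\,\w - (\v^\t\v^*)\,\w^*,
\]
which is \emph{independent of the forward-pass nonlinearity}. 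I would evaluate this at $(\v^\sharp,\w^\sharp)$: using $\w^\sharp=-\|\w^\sharp\|\w^*$ and substituting $\v^\sharp$, the expression becomes a nonzero multiple of $\w^*$ as long as $\1_m^\t\v^*\neq 0$ and $m>1$ (these make $\v^\sharp\neq\vzero$ and $(\v^\sharp)^\t\v^*$ and $\|\v^\sharp\|^2$ both genuinely nonzero with incompatible ratio). Hence the identity-STE update moves $\w$ off the local minimizer; a one-step Taylor expansion using Lemma \ref{lem:lipschitz} then shows $f$ strictly increases for small $\eta$, contradicting descent, and an iterated application shows iterates starting arbitrarily close cannot converge back to $(\v^\sharp,\w^\sharp)$.

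The main obstacle I anticipate is the positive-correlation step for clipped ReLU: unlike vanilla ReLU, the clip at $1$ introduces an extra truncation of the Gaussian $\Z\w$ whose contribution must be computed carefully and shown to preserve the sign of the inner product with $\partial f/\partial \w$ uniformly in $\|\w\|$ (not only in $\theta$). Handling this may require splitting into cases by $\theta(\w,\w^*)\in(0,\pi/2)$ vs.\ $(\pi/2,\pi)$ and using monotonicity of one-dimensional Gaussian integrals; the lower bound $\|\w^t\|\geq c_\w$ is used here to keep the truncation region nondegenerate. Everything else is bookkeeping around Lemmas \ref{lem:obj}–\ref{lem:lipschitz} and Proposition \ref{prop}.
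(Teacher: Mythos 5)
Your outline reproduces the overall architecture of the paper's argument (closed-form coarse gradients, positive correlation, a descent estimate, classification of limit points, and non-vanishing of the identity coarse gradient at the spurious minimizers), but the convergence half has a genuine gap at its center. Positive correlation $\la \E_\Z[\g_\mu], \frac{\partial f}{\partial \w}\ra \geq 0$ is \emph{not} sufficient for descent, and your displayed inequality $f^{t+1}\leq f^{t} - \eta(1-\frac{L\eta}{2}C)\|\text{update}\|^2$ does not follow from the descent lemma, because the $\w$-update is not the negative gradient of $f$. What the descent lemma actually gives is
\begin{equation*}
f^{t+1}-f^{t} \leq -\eta\left\|\tfrac{\partial f}{\partial \v}\right\|^2 - \eta\left\la \tfrac{\partial f}{\partial \w}, \E_\Z[\g_\mu]\right\ra + \tfrac{L\eta^2}{2}\left(\left\|\tfrac{\partial f}{\partial \v}\right\|^2+\left\|\E_\Z[\g_\mu]\right\|^2\right),
\end{equation*}
and to absorb the quadratic penalty $\frac{L\eta^2}{2}\|\E_\Z[\g_\mu]\|^2$ into the linear gain you need the \emph{relative} bound $\|\E_\Z[\g_\mu]\|^2 \leq A\big(\|\partial f/\partial\v\|^2 + \la\E_\Z[\g_\mu],\partial f/\partial\w\ra\big)$ uniformly along the trajectory --- this is estimate (\ref{eq:desc}), i.e.\ the second halves of Lemmas \ref{lem:cor} and \ref{lem:cor_crelu}, and it is the crux of the theorem rather than bookkeeping. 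Indeed the identity STE \emph{also} satisfies positive correlation (eq.\ (\ref{eq:cor_id})) yet fails to be a descent method precisely because this ratio blows up near the spurious minimizers (eq.\ (\ref{eq:desc_id})); as written, your argument would ``prove'' descent for the identity STE too, contradicting your own second half.

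Two smaller points. First, your formula $\E_\Z[\g_{\id}] = \E_\Z[\Z^\t\v(\v^\t\Z\w-(\v^*)^\t\Z\w^*)]$ is incorrect: the STE replaces only the backward-pass derivative $\sigma'$, so the forward pass still contains the binary activation and $\g_{\id}=\Z^\t\v\big(\v^\t\sigma(\Z\w)-(\v^*)^\t\sigma(\Z\w^*)\big)$, whose expectation is $\frac{1}{\sqrt{2\pi}}\big(\|\v\|^2\frac{\w}{\|\w\|}-(\v^\t\v^*)\w^*\big)$ (Lemma \ref{lem:cgd_id}); it is not ``independent of the forward-pass nonlinearity,'' and the exact constants matter because the norm at the local minimizer equals $\frac{2(m-1)}{\sqrt{2\pi}(m+1)^2}(\1_m^\t\v^*)^2$, which vanishes exactly when $\1_m^\t\v^*=0$ or $m=1$. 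Second, $f$ is not differentiable where $\theta(\w,\w^*)\in\{0,\pi\}$, so the descent lemma cannot be invoked as a black box; the paper integrates the gradient along the segment $(\v^t(a),\w^t(a))$ and shows the one possible non-differentiable point is negligible, and it must also first establish $\|\v^t\|\leq C_\v$ by coerciveness and induction before a Lipschitz constant can even be fixed.
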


\begin{rem}
The convergence guarantee for the coarse gradient descent is established under the assumption that there are infinite training samples. When there are only a few data, in a coarse scale, the empirical loss roughly descends along the direction of negative coarse gradient, as illustrated by Figure \ref{fig:few_samples}. As the sample size increases, the empirical loss gains monotonicity and smoothness. This explains why (proper) STE works so well with massive amounts of data as in deep learning.
\end{rem}

\begin{rem}
The same results hold, if the Gaussian assumption on the input data is weakened to that their rows i.i.d. follow some rotation-invariant distribution. The proof will be substantially similar.
\end{rem}

In the rest of this section, we sketch the mathematical analysis for the main results.

\begin{figure}[ht]
\centering
\begin{tabular}{ccc}
sample size = 10 &  sample size = 50 & sample size = 1000\\
\includegraphics[width=0.3\columnwidth]{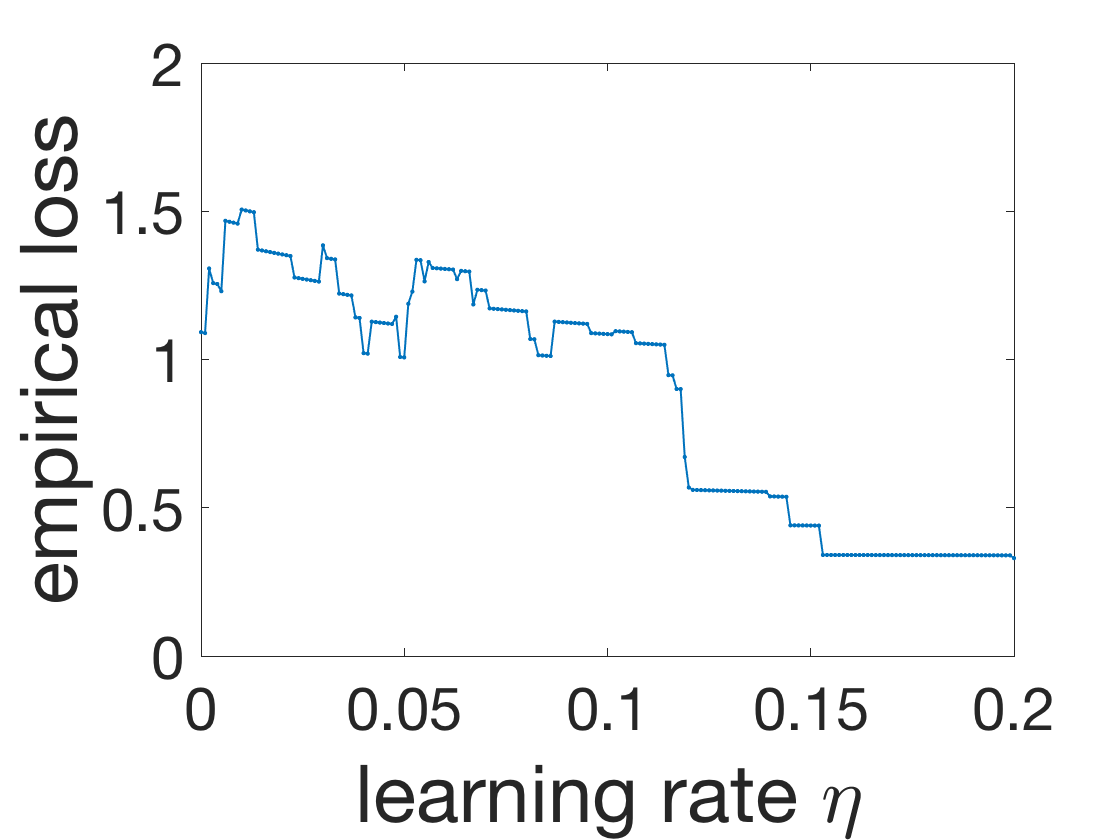} &
\includegraphics[width=0.3\columnwidth]{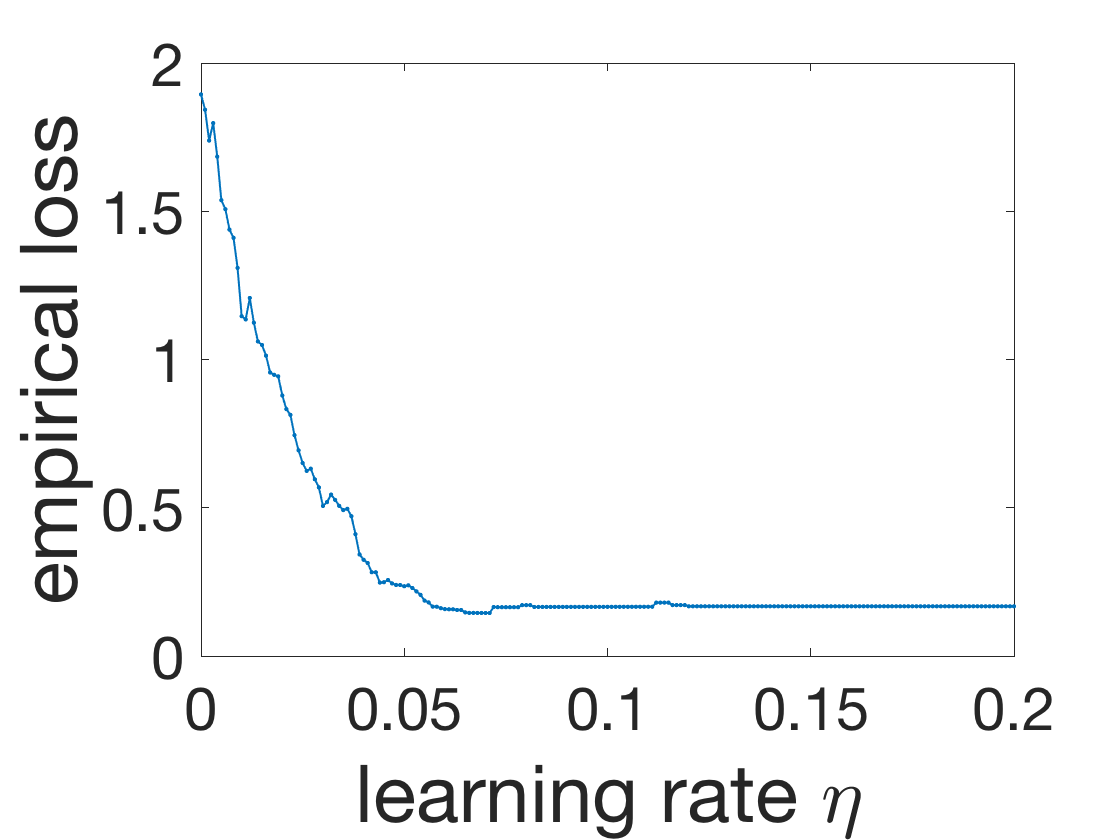} &
\includegraphics[width=0.3\columnwidth]{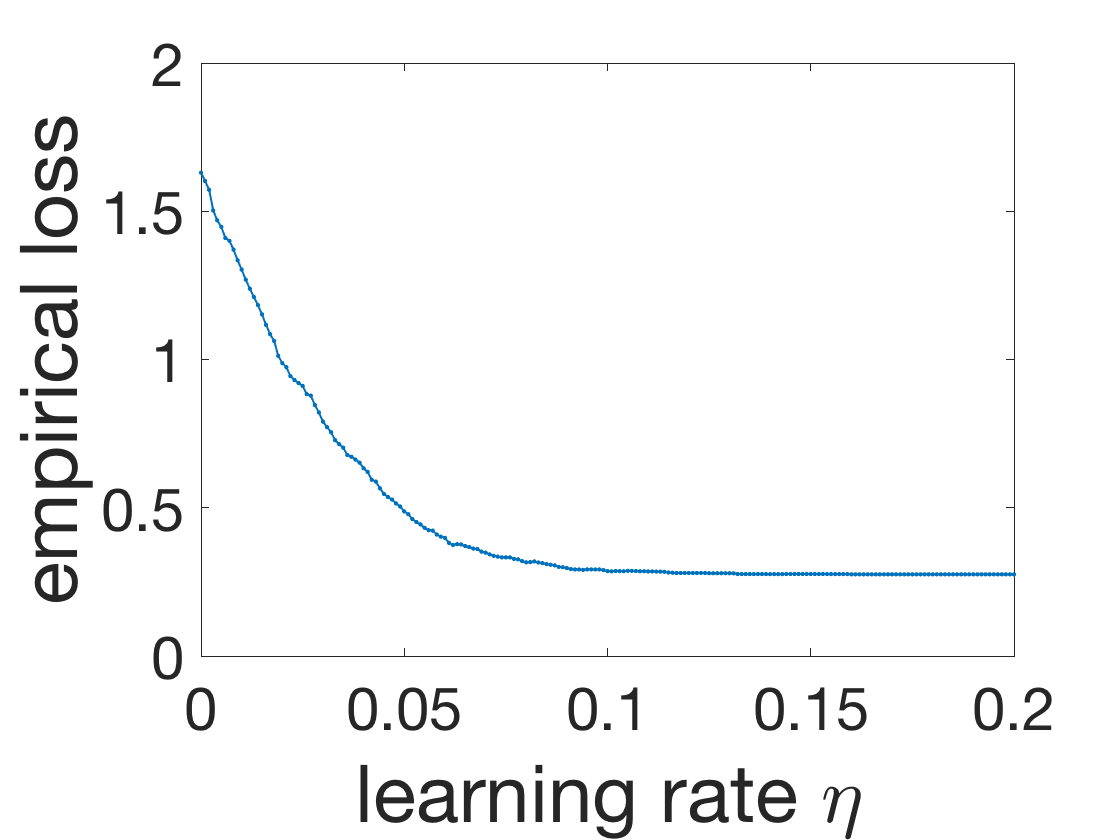} 
\end{tabular}
\caption{The plots of the empirical loss moving by one step in the direction of negative coarse gradient v.s. the learning rate (step size) $\eta$ for different sample sizes.}\label{fig:few_samples}
\end{figure}

\subsection{Derivative of the Vanilla ReLU as STE}\label{sec:relu}
If we choose the derivative of ReLU $\mu(x) = \max\{x,0\}$ as the STE in (\ref{eq:cbp_w}), it is easy to see $\mu^\prime(x) = \sigma(x)$, and we have the following expressions of $\E_\Z  \left[\frac{\partial \ell}{\partial \v}(\v,\w;\Z)\right]$ and $\E_\Z \Big[\g_{\rl}(\v,\w; \Z)\Big]$ for Algorithm \ref{alg}.
\begin{lem}\label{lem:cgd}
The expected partial gradient of $\ell(\v,\w;\Z)$ w.r.t. $\v$ is
\begin{equation}\label{eq:cgrad_v}
\E_\Z  \left[\frac{\partial \ell}{\partial \v}(\v,\w;\Z)\right] = \frac{\partial f}{\partial \v}(\v,\w).
\end{equation}
Let $\mu(x) = \max\{x,0\}$ in (\ref{eq:cbp_w}). The expected coarse gradient w.r.t. $\w$ is 
\begin{equation}\label{eq:cgrad_w}
\E_\Z \Big[\g_{\rl}(\v,\w; \Z)\Big] = 
\frac{h(\v,\v^*)}{2\sqrt{2\pi}}\frac{\w}{\|\w\|} - \cos\left(\frac{\theta(\w,\w^*)}{2}\right)\frac{\v^\t\v^*}{\sqrt{2\pi}}\frac{\frac{\w}{\|\w\|} + \w^*}{\left\|\frac{\w}{\|\w\|} + \w^*\right\|},\footnote{We redefine the second term as $\0_n$ in the case $\theta(\w,\w^*) = \pi$, or equivalently, $\frac{\w}{\|\w\|}+ \w^* = \0_n$.}  
\end{equation}
where $h(\v,\v^*) = \|\v\|^2+ (\1_m^\t\v)^2 - (\1_m^\t\v)(\1_m^\t\v^*) + \v^\t\v^*$.
\end{lem}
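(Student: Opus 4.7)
The first identity is essentially automatic: since $f(\v,\w) = \E_\Z[\ell(\v,\w;\Z)]$ and the sample partial gradient in (\ref{eq:bp_v}) is a quadratic polynomial in the Gaussian entries of $\Z$ with locally bounded coefficients, differentiation under the expectation is justified by dominated convergence, giving $\E_\Z[\partial\ell/\partial\v] = \partial f/\partial\v$.

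For the coarse gradient formula, substituting $\mu^\prime = \sigma = 1_{\{\cdot>0\}}$ into (\ref{eq:cbp_w}) and using $y^*(\Z) = (\v^*)^\t\sigma(\Z\w^*)$ yields
\[
\E_\Z[\g_\rl] = \sum_{i,j}v_iv_j\,\E[\Z_i\sigma(\Z_i^\t\w)\sigma(\Z_j^\t\w)] - \sum_{i,j}v_iv_j^*\,\E[\Z_i\sigma(\Z_i^\t\w)\sigma(\Z_j^\t\w^*)].
\]
Since the rows $\Z_i$ are i.i.d.\ $\gN(\0_n,\I_n)$, I would split each double sum into $i = j$ and $i \neq j$ parts, factoring the off-diagonal expectations via independence. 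The whole computation reduces to three Gaussian identities for $g \sim \gN(\0_n,\I_n)$ and nonzero $u,v\in\R^n$: (i) $\E[\sigma(g^\t u)] = 1/2$, (ii) $\E[g\,\sigma(g^\t u)] = (u/\|u\|)/\sqrt{2\pi}$, and the mixed formula (iii) $\E[g\,\sigma(g^\t u)\sigma(g^\t v)] = (u/\|u\| + v/\|v\|)/(2\sqrt{2\pi})$. Items (i) and (ii) follow from decomposing $g$ along $u/\|u\|$, and together with $\sigma^2 = \sigma$ also take care of the diagonal of the first sum.

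Identity (iii) is the single substantive step and the main obstacle. I would project onto the plane spanned by $u/\|u\|$ and $v/\|v\|$, writing $g = X\,(u/\|u\|) + Y\,u_\perp + g^\prime$ with $X,Y \sim \gN(0,1)$ independent and $g^\prime$ orthogonal to both directions, and then switch to polar coordinates $(R,\phi)$ in the $(X,Y)$-plane. Letting $\theta = \theta(u,v) \in [0,\pi]$, the event $\{g^\t u > 0,\, g^\t v > 0\}$ becomes the wedge $\phi \in (\theta-\pi/2,\,\pi/2)$ of angular width $\pi-\theta$; direct integration gives $\E[X\cdot 1_{\mathrm{wedge}}] = (1+\cos\theta)/(2\sqrt{2\pi})$ and $\E[Y\cdot 1_{\mathrm{wedge}}] = \sin\theta/(2\sqrt{2\pi})$, and using the decomposition $v/\|v\| = \cos\theta\,(u/\|u\|) + \sin\theta\,u_\perp$ this collapses symmetrically to $(u/\|u\|+v/\|v\|)/(2\sqrt{2\pi})$. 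The degenerate case $\theta = \pi$ is consistent with the footnote convention, since the wedge then has measure zero.

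Assembling the four diagonal/off-diagonal contributions using $\sum_{i\neq j}v_iv_j = (\1_m^\t\v)^2 - \|\v\|^2$ and $\sum_{i\neq j}v_iv_j^* = (\1_m^\t\v)(\1_m^\t\v^*) - \v^\t\v^*$, the $\w/\|\w\|$ terms arising from the two sums partially cancel and I obtain
\[
\E_\Z[\g_\rl] = \frac{\|\v\|^2 + (\1_m^\t\v)^2 - (\1_m^\t\v)(\1_m^\t\v^*)}{2\sqrt{2\pi}}\,\frac{\w}{\|\w\|} - \frac{\v^\t\v^*}{2\sqrt{2\pi}}\,\w^*.
\]
Adding and subtracting $\v^\t\v^*(\w/\|\w\|)/(2\sqrt{2\pi})$ reshapes the right-hand side into $h(\v,\v^*)(\w/\|\w\|)/(2\sqrt{2\pi}) - \v^\t\v^*(\w/\|\w\| + \w^*)/(2\sqrt{2\pi})$; invoking the elementary identity $\left\|\w/\|\w\| + \w^*\right\| = 2\cos(\theta(\w,\w^*)/2)$, valid because $\|\w^*\| = 1$, then recovers the stated formula. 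Everything outside of identity (iii) is algebraic bookkeeping.
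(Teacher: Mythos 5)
Your proposal is correct and follows essentially the same route as the paper: expand the double sum, separate diagonal from off-diagonal terms using row independence, and reduce everything to the Gaussian identities for $\E[\z 1_{\{\z^\t\w>0\}}]$ and $\E[\z 1_{\{\z^\t\w>0,\,\z^\t\tw>0\}}]$, the latter computed via polar coordinates over the wedge of width $\pi-\theta$ exactly as in the paper's Lemma~\ref{lem:basic}. Your unnormalized form $(u/\|u\|+v/\|v\|)/(2\sqrt{2\pi})$ of the mixed identity is equal to the paper's $\cos(\theta/2)$-normalized version since $\left\|u/\|u\|+v/\|v\|\right\|=2\cos(\theta/2)$, and the final algebraic reshaping matches the stated formula.
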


As stated in Lemma \ref{lem:cor} below, the key observation is that the coarse partial gradient $\E_\Z \Big[\g_{\rl}(\v,\w; \Z)\Big]$ has non-negative correlation with the population partial gradient $\frac{\partial f}{\partial \w}(\v,\w) $, and $-\E_\Z \Big[\g_{\rl}(\v,\w; \Z)\Big]$ together with $-\E_\Z  \left[\frac{\partial \ell}{\partial \v}(\v,\w;\Z)\right]$ form a descent direction for minimizing the population loss.
\begin{lem}\label{lem:cor}
If $\w\neq \0_n$ and $\theta(\w,\w^*)\in(0, \pi)$, then the inner product between the expected coarse and population gradients w.r.t. $\w$ is
\begin{equation*}
\left\langle \E_\Z \Big[\g_{\rl}(\v,\w; \Z)\Big], \frac{\partial f}{\partial \w}(\v,\w) \right\rangle  = 
\frac{\sin\left(\theta(\w,\w^*)\right)}{2(\sqrt{2\pi})^3\|\w\|}(\v^\t\v^*)^2 \geq 0.
\end{equation*}
Moreover, if further $\|\v\|\leq C_\v$ and $\|\w\|\geq c_\w$, there exists a constant $A_{\rl}>0$ depending on $C_\v$ and $c_\w$, such that
\begin{equation}\label{eq:desc}
\left\|\E_\Z \Big[\g_{\rl}(\v,\w; \Z)\Big] \right\|^2 
\leq  
A_{\rl}\left(\left\|\frac{\partial f}{\partial \v}(\v,\w)\right\|^2 + \left\langle \E_\Z \Big[\g_{\rl}(\v,\w; \Z)\Big], \frac{\partial f}{\partial \w}(\v,\w) \right\rangle \right). 
\end{equation}
\end{lem}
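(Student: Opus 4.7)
The plan is to prove the two claims separately, both resting on a compact reformulation of the coarse gradient. Using the identity $\|\tilde{\w}+\w^*\| = 2\cos(\theta/2)$ where $\tilde{\w}:=\w/\|\w\|$ and $\theta:=\theta(\w,\w^*)$, the factor $\cos(\theta/2)$ in Lemma \ref{lem:cgd} cancels, giving
$$\E_\Z[\g_\rl(\v,\w;\Z)] = \frac{1}{2\sqrt{2\pi}}\bigl[\,a\,\tilde{\w} - b\,\w^*\,\bigr],$$
where $a := h(\v,\v^*) - \v^\t\v^*$ and $b := \v^\t\v^*$.

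For the inner product identity, note from Lemma \ref{lem:grad} that $\partial f/\partial \w$ points along the unit vector $(\I-\tilde{\w}\tilde{\w}^\t)\w^*/\sin\theta$, which is orthogonal to $\tilde{\w}$ and has inner product $\sin\theta$ with $\w^*$ (since $\w^{*\t}(\I-\tilde{\w}\tilde{\w}^\t)\w^* = \sin^2\theta$). Consequently the $\tilde{\w}$-component of $\E_\Z[\g_\rl]$ drops out of the inner product, and a short calculation gives $\langle\E_\Z[\g_\rl],\partial f/\partial \w\rangle = b^2\sin\theta/(4\pi\sqrt{2\pi}\|\w\|)$, which equals the claimed expression since $4\pi\sqrt{2\pi} = 2(\sqrt{2\pi})^3$; nonnegativity follows from $\sin\theta\geq 0$ on $[0,\pi]$.

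For the descent-direction inequality, I first establish the algebraic identity $a = 4\v^\t(\partial f/\partial \v) + (1 - 2\theta/\pi)\,b$ by contracting the expression for $\partial f/\partial \v$ from Lemma \ref{lem:grad} with $\v^\t$. Substituting gives $a - b\cos\theta = 4\v^\t(\partial f/\partial \v) + b\,g(\theta)$, where $g(\theta) := 1 - 2\theta/\pi - \cos\theta$. Expanding
$$\|\E_\Z[\g_\rl]\|^2 = \frac{1}{8\pi}\bigl[(a-b\cos\theta)^2 + b^2\sin^2\theta\bigr]$$
and applying $(\alpha+\beta)^2\leq 2\alpha^2+2\beta^2$ with $\|\v\|\leq C_\v$ bounds the first bracketed term by $32\,C_\v^2\|\partial f/\partial \v\|^2 + 2b^2 g(\theta)^2$. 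The remaining piece $2b^2 g(\theta)^2 + b^2\sin^2\theta$ must be absorbed into a constant multiple of $b^2\sin\theta$, which by the inner product identity equals $4\pi\sqrt{2\pi}\|\w\|\langle\E_\Z[\g_\rl],\partial f/\partial \w\rangle$. The crucial observation is that $g$ vanishes at $\{0,\pi/2,\pi\}$ at rates matching or exceeding $\sin\theta$, so that $C_g := \sup_{\theta\in(0,\pi)} g(\theta)^2/\sin\theta < \infty$; together with $\sin^2\theta\leq\sin\theta$ this closes the estimate.

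The main obstacles are twofold. First, the finiteness of $C_g$ requires Taylor expansions at the three zeros of $g$: near $\theta=0$ one has $g(\theta) = -2\theta/\pi + O(\theta^2)$ and $\sin\theta = \theta + O(\theta^3)$, so $g^2/\sin\theta = O(\theta)$; the analysis at $\theta=\pi$ is symmetric, and near $\theta=\pi/2$ the factor $\sin\theta$ is bounded below, so the supremum is finite by continuity. Second, the $1/\|\w\|$ factor in the inner product introduces a hidden dependence on an upper bound for $\|\w\|$; to match the statement that $A_\rl$ depends only on $C_\v$ and $c_\w$, one must invoke that the $\w$-iterates of Algorithm \ref{alg} remain in a bounded set along the trajectory, which is consistent with the hypotheses of Theorem \ref{thm}.
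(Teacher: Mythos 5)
Your proof is correct, and while it follows the same skeleton as the paper's argument --- compute the inner product by exploiting $\big(\I_n-\frac{\w\w^\t}{\|\w\|^2}\big)\w=\0_n$, then absorb $h(\v,\v^*)$ into $\v^\t\frac{\partial f}{\partial \v}$ via the identity $h(\v,\v^*)=4\v^\t\frac{\partial f}{\partial \v}(\v,\w)+2(1-\theta/\pi)\v^\t\v^*$ and control the leftover $\theta$-dependent scalar by a multiple of $\sin\theta$ --- your execution of the key estimate is genuinely different and arguably cleaner. The cancellation $\left\|\frac{\w}{\|\w\|}+\w^*\right\|=2\cos(\theta/2)$ collapses the coarse gradient to $\frac{1}{2\sqrt{2\pi}}\big(a\frac{\w}{\|\w\|}-b\w^*\big)$ with $a=h-\v^\t\v^*$, $b=\v^\t\v^*$, and the exact orthogonal decomposition $\|a\frac{\w}{\|\w\|}-b\w^*\|^2=(a-b\cos\theta)^2+b^2\sin^2\theta$ replaces the paper's three-term Cauchy--Schwarz splitting and its chord-length bound $\big\|\frac{\w}{\|\w\|}-\frac{\frac{\w}{\|\w\|}+\w^*}{\|\frac{\w}{\|\w\|}+\w^*\|}\big\|\le\theta/2$. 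Where the paper closes the estimate with the explicit pointwise inequality $(1-2x/\pi-\cos x)^2\le\sin(2x)$ on $[0,\pi/2]$ applied at $x=\theta/2$, you use a soft argument --- continuity of $g(\theta)^2/\sin\theta$ on $(0,\pi)$ together with Taylor expansions showing the ratio tends to $0$ at both endpoints --- which is equally rigorous, though it does not yield a numerical constant. Your closing remark about the $1/\|\w\|$ factor is also well taken: the paper's proof silently normalizes to $\|\w\|=1$ under a ``without loss of generality,'' but since $\|\E_\Z[\g_{\rl}(\v,\w;\Z)]\|$ and $\|\frac{\partial f}{\partial \v}(\v,\w)\|$ depend on $\w$ only through its direction while the inner product scales like $1/\|\w\|$, the constant $A_{\rl}$ genuinely requires an upper bound on $\|\w\|$ in addition to the stated lower bound $c_\w$ (otherwise the inequality fails as $\|\w\|\to\infty$ at points where $\frac{\partial f}{\partial \v}$ vanishes but $\v^\t\v^*\neq 0$); your resolution via boundedness of the iterates is the right reading of how the lemma is actually used in Theorem \ref{thm}.
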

Clearly, when $\left\langle \E_\Z \Big[\g_{\rl}(\v,\w; \Z)\Big], \frac{\partial f}{\partial \w}(\v,\w) \right\rangle>0$, $\E_\Z \Big[\g_{\rl}(\v,\w; \Z)\Big]$ is roughly in the same direction as $\frac{\partial f}{\partial \w}(\v,\w)$. Moreover, since by Lemma {\ref{lem:cgd}}, $\E_\Z  \left[\frac{\partial \ell}{\partial \v}(\v,\w;\Z)\right] = \frac{\partial f}{\partial \v}(\v,\w)$, we expect that the coarse gradient descent behaves like the gradient descent directly on $f(\v,\w)$. Here we would like to highlight the significance of the estimate (\ref{eq:desc}) in guaranteeing the descent property of Algorithm \ref{alg}. By the Lipschitz continuity of $\nabla f$ specified in Lemma \ref{lem:lipschitz}, it holds that
\begin{align}\label{eq:descprop}
f(\v^{t+1},\w^{t+1}) - f(\v^t,\w^t) \leq & \; \left\langle \frac{\partial f}{\partial \v}(\v^t,\w^t), \v^{t+1} -\v^t  \right\rangle+ \left\langle \frac{\partial f}{\partial \w}(\v^t,\w^t), \w^{t+1} -\w^t  \right\rangle \notag\\
& \; +  \frac{L}{2}(\|\v^{t+1}-\v^t\|^2 + \|\w^{t+1}-\w^t\|^2) \notag\\
= & \;  -\left(\eta -\frac{L\eta^2}{2}\right)\left\|\frac{\partial f}{\partial \v}(\v^t,\w^t) \right\|^2 + \frac{L\eta^2}{2} \left\|\E_\Z \Big[\g_{\rl}(\v^t,\w^t; \Z)\Big] \right\|^2  \notag \\
& \; - \eta\left\la \frac{\partial f}{\partial \w}(\v^t,\w^t), \E_\Z \Big[\g_{\rl}(\v^t,\w^t; \Z)\Big] \right\ra \notag \\
\overset{a)}{\leq} & \;  -\left(\eta -(1+A_{\rl})\frac{L\eta^2}{2}\right) \left\|\frac{\partial f}{\partial \v}(\v^t,\w^t) \right\|^2 \notag \\
& \; - \left(\eta -\frac{A_{\rl}L\eta^2}{2}\right)  \left\la \frac{\partial f}{\partial \w}(\v^t,\w^t), \E_\Z \Big[\g_{\rl}(\v^t,\w^t; \Z)\Big] \right\ra,
\end{align}
where a) is due to (\ref{eq:desc}). Therefore, if $\eta$ is small enough, we have monotonically decreasing energy until convergence.

\begin{lem}\label{lem:vanish}
When Algorithm \ref{alg} converges, $\E_\Z \Big[\frac{\partial \ell}{\partial \v}(\v,\w; \Z)\Big]$ and $\E_\Z \Big[\g_{\rl}(\v,\w; \Z)\Big]$ vanish simultaneously, which only occurs at the 
\begin{enumerate}
\item Saddle points where (\ref{eq:critical}) is satisfied according to Proposition \ref{prop}.
\item Minimizers of (\ref{eq:model}) where $\v = \v^*$, $\theta(\w,\w^*)=0$, or $\v = (\I_m + \1_m\1_m^\t)^{-1}(\1_m\1_m^\t - \I_m)\v^*$, $\theta(\w,\w^*)=\pi$.
\end{enumerate}
\end{lem}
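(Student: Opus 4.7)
The plan is to argue that at any fixed point of Algorithm \ref{alg} with $\eta>0$, both updates stall, so $\E_\Z[\frac{\partial\ell}{\partial\v}(\v,\w;\Z)]=\0_m$ and $\E_\Z[\g_{\rl}(\v,\w;\Z)]=\0_n$ at the limit point $(\v,\w)$. By Lemma \ref{lem:cgd}, the first condition is exactly $\frac{\partial f}{\partial \v}(\v,\w)=\0_m$, and by Lemma \ref{lem:grad} combined with the Sherman--Morrison identity $(\I_m+\1_m\1_m^\t)^{-1}=\I_m-\tfrac{1}{m+1}\1_m\1_m^\t$ this yields the closed form
$$\v=\Bigl(1-\tfrac{2\theta}{\pi}\Bigr)\v^*+\tfrac{2\theta/\pi}{m+1}(\1_m^\t\v^*)\,\1_m, \qquad \theta:=\theta(\w,\w^*).$$
The problem then reduces to deciding which pairs $(\v,\w)$ of this form additionally satisfy $\E_\Z[\g_{\rl}]=\0_n$.

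Next I would split on the value of $\theta$. For $\theta\in(0,\pi)$ the unit vectors $\hat\w:=\w/\|\w\|$ and $\w^*$ are linearly independent, so $\hat\w$ and $(\hat\w+\w^*)/\|\hat\w+\w^*\|$ are linearly independent as well. Using expression (\ref{eq:cgrad_w}), vanishing of $\E_\Z[\g_{\rl}]$ therefore forces both scalar coefficients to vanish; since $\cos(\theta/2)>0$ on this interval, this is $h(\v,\v^*)=0$ together with $\v^\t\v^*=0$. Plugging the closed form for $\v$ into $\v^\t\v^*=0$ and solving for $\theta$ recovers the saddle-point angle $\theta=\tfrac{\pi}{2}(m+1)\|\v^*\|^2/\bigl((m+1)\|\v^*\|^2-(\1_m^\t\v^*)^2\bigr)$ of Proposition \ref{prop}, and the corresponding $\v$ matches (\ref{eq:saddle}); a direct expansion then checks that $h(\v,\v^*)=0$ holds automatically along this locus, so the two coefficient conditions are consistent.

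For the boundary case $\theta=0$, $\frac{\partial f}{\partial \v}=\0_m$ forces $\v=\v^*$, and substituting $\hat\w=\w^*$ into (\ref{eq:cgrad_w}) collapses it to $\tfrac{1}{2\sqrt{2\pi}}\bigl(h(\v^*,\v^*)-2\|\v^*\|^2\bigr)\w^*$; a one-line computation gives $h(\v^*,\v^*)=\|\v^*\|^2+(\1_m^\t\v^*)^2-(\1_m^\t\v^*)^2+\|\v^*\|^2=2\|\v^*\|^2$, so the coarse gradient vanishes. For $\theta=\pi$ the footnote convention zeros the second term in (\ref{eq:cgrad_w}), leaving the single requirement $h(\v,\v^*)=0$; here $\frac{\partial f}{\partial \v}=\0_m$ gives $\v=(\I_m+\1_m\1_m^\t)^{-1}(\1_m\1_m^\t-\I_m)\v^*=-\v^*+\tfrac{2(\1_m^\t\v^*)}{m+1}\1_m$, and expanding $\|\v\|^2$, $(\1_m^\t\v)^2$, $(\1_m^\t\v)(\1_m^\t\v^*)$, $\v^\t\v^*$ confirms $h=0$ after cancellation. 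The case $\w=\0_n$ is excluded by the standing assumption $\|\w^t\|\ge c_\w$ of Theorem \ref{thm}.

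The main obstacle is the two algebraic verifications that $h(\v,\v^*)=0$: once on the saddle-point locus (\ref{eq:saddle}) and once at the $\theta=\pi$ spurious minimizer. Each factors cleanly after substituting the closed form for $\v$, but the intermediate bookkeeping in computing the four pieces of $h$ is the only nontrivial calculation; at the saddle locus it reduces to the defining relation $\tfrac{\alpha(\1_m^\t\v^*)^2}{m+1}=-(1-\alpha)\|\v^*\|^2$ with $\alpha=2\theta/\pi$, and at $\theta=\pi$ it reduces to the identity $\tfrac{(m-3)(m+1)}{(m+1)^2}+\tfrac{3-m}{m+1}=0$. Once these cancellations are exhibited, the case analysis above shows the listed saddle points and minimizers exhaust the simultaneous zeros of $\E_\Z[\frac{\partial\ell}{\partial\v}]$ and $\E_\Z[\g_{\rl}]$.
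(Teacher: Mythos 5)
Your proof is correct and follows essentially the same route as the paper's: use the vanishing of $\E_\Z[\frac{\partial\ell}{\partial\v}]$ to pin down $\v$ as a function of $\theta$, then split on $\theta=0$, $\theta=\pi$, and $\theta\in(0,\pi)$, where linear independence of $\w/\|\w\|$ and $(\w/\|\w\|+\w^*)/\|\w/\|\w\|+\w^*\|$ forces $h(\v,\v^*)=0$ and $\v^\t\v^*=0$, recovering the saddle points of Proposition \ref{prop}. You are somewhat more explicit than the paper in verifying the cancellations $h(\v,\v^*)=0$ at the saddle locus and at the $\theta=\pi$ minimizer (the paper simply asserts that (\ref{eq:cond2}) ``is satisfied'' in each case), but the logical skeleton is identical.
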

Lemma \ref{lem:vanish} states that when Algorithm \ref{alg} using ReLU STE converges, it can only converge to a critical point of the population loss function. 

\subsection{Derivative of the Clipped ReLU as STE}
For the STE using clipped ReLU, $\mu(x) = \min\left\{\max\{x,0\},1\right\}$ and $\mu^\prime(x) = 1_{\{0<x<1\}}(x)$. We have results similar to Lemmas \ref{lem:cor} and \ref{lem:vanish}. That is, the coarse partial gradient using clipped ReLU STE $\E_\Z \Big[\g_{\crl}(\v,\w; \Z)\Big]$ generally has positive correlation with the true partial gradient of the population loss $ \frac{\partial f}{\partial \w}(\v,\w)$ (Lemma \ref{lem:cor_crelu})). Moreover, the coarse gradient vanishes and only vanishes at the critical points (Lemma \ref{lem:vanish_crelu}).

\begin{lem}\label{lem:cor_crelu}
If $\w\neq \0_n$ and $\theta(\w,\w^*)\in(0, \pi)$, then
\begin{align*}
\E_\Z \left[ \g_{\crl}(\v,\w;\Z)\right] 
= & \; \frac{p(0,\w) h(\v,\v^*)}{2}\frac{\w}{\|\w\|} - (\v^\t\v^*)\csc(\theta/2) \cdot q(\theta,\w)
\frac{\frac{\w}{\|\w\|} + \w^* }{\left\|\frac{\w}{\|\w\|} + \w^*  \right\|} \notag \\
& \; - (\v^\t\v^*)\left(p(\theta,\w)-\cot(\theta/2)\cdot q(\theta,\w)\right)\frac{\w}{\|\w\|}, 
\end{align*} 
where $h(\v,\v^*) := \|\v\|^2+ (\1_m^\t\v)^2 - (\1_m^\t\v)(\1_m^\t\v^*) + \v^\t\v^*$ same as in Lemma \ref{lem:cor}, and 
$$p(\theta,\w) := \frac{1}{2\pi}\int_{-\frac{\pi}{2}+\theta}^{\frac{\pi}{2}}\cos(\phi)\xi\left(\frac{\sec(\phi)}{\|\w\|}\right)\mathrm{d}\phi, \; q(\theta,\w) := \frac{1}{2\pi}\int_{-\frac{\pi}{2}+\theta}^{\frac{\pi}{2}}\sin(\phi)\xi\left(\frac{\sec(\phi)}{\|\w\|}\right)\mathrm{d}\phi$$ 
with $\xi(x): = \int_0^x r^2 \exp(-\frac{r^2}{2})\mathrm{d}r$.
The inner product between the expected coarse and true gradients w.r.t. $\w$
\begin{equation*}
\left\langle \E_\Z \Big[\g_{\crl}(\v,\w; \Z)\Big], \frac{\partial f}{\partial \w}(\v,\w) \right\rangle =\frac{q(\theta,\w)}{2\pi\|\w\|}(\v^{\t}\v^*)^2\geq 0.
\end{equation*}
Moreover, if further $\|\v\|\leq C_\v$ and $\|\w\|\geq c_\w$, there exists a constant $A_{\crl}>0$ depending on $C_\v$ and $c_\w$, such that
\begin{equation*}
\left\|\E_\Z \Big[\g_{\crl}(\v,\w; \Z)\Big] \right\|^2 
\leq  
A_{\crl}\left(\left\|\frac{\partial f}{\partial \v}(\v,\w)\right\|^2 + \left\langle \E_\Z \Big[\g_{\crl}(\v,\w; \Z)\Big], \frac{\partial f}{\partial \w}(\v,\w) \right\rangle \right).
\end{equation*}
\end{lem}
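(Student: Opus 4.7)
The proof follows the same four-part blueprint as Lemma~\ref{lem:cor} for the vanilla ReLU, but the clipped-ReLU STE $\mu'(x)=\mathbf{1}_{(0,1)}(x)$ introduces a finite upper cut-off that makes the radial Gaussian integrals explicit through the function $\xi$. First, expand
\[
\g_{\crl}(\v,\w;\Z) \;=\; \sum_{i,j} v_i\, \Z_i\,\mathbf{1}_{\{0<\Z_i^{\t}\w<1\}}\bigl(v_j\sigma(\Z_j^{\t}\w) - v_j^{*}\sigma(\Z_j^{\t}\w^*)\bigr)
\]
and split into diagonal ($i=j$) and off-diagonal ($i\neq j$) parts. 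By independence of the rows of $\Z$ and $\E[\sigma(\Z_j^{\t}\w)] = \E[\sigma(\Z_j^{\t}\w^*)] = \tfrac12$, the off-diagonal sums reduce to $\tfrac12(\1_m^{\t}\v-\1_m^{\t}\v^*)\cdot\1_m^{\t}\v\cdot\E[\Z_1\mathbf{1}_{\{0<\Z_1^{\t}\w<1\}}]$ and similar pieces. The three primitive expectations I need are $\E[\Z_1\mathbf{1}_{\{0<\Z_1^{\t}\w<1\}}]$, $\E[\Z_1\mathbf{1}_{\{0<\Z_1^{\t}\w<1\}}\sigma(\Z_1^{\t}\w)]$ and $\E[\Z_1\mathbf{1}_{\{0<\Z_1^{\t}\w<1\}}\sigma(\Z_1^{\t}\w^*)]$. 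By rotational symmetry, all three lie in $\mathrm{span}(\w,\w^*)$; choosing the orthonormal basis $\vu_1=\w/\|\w\|$ and $\vu_2\in\mathrm{span}(\w,\w^*)\cap\w^{\perp}$ with $\w^*=\cos\theta\,\vu_1+\sin\theta\,\vu_2$, I introduce polar coordinates $x_1=r\cos\phi$, $x_2=r\sin\phi$ in that plane. The event $\{0<\Z_1^{\t}\w<1,\ \Z_1^{\t}\w^*>0\}$ becomes $\phi\in(-\pi/2+\theta,\pi/2)$, $r\in(0,\sec(\phi)/\|\w\|)$. The radial integral $\int_0^{\sec(\phi)/\|\w\|} r^2 e^{-r^2/2}\,dr = \xi(\sec(\phi)/\|\w\|)$ then gives the $\vu_1$ and $\vu_2$ coefficients as exactly $p(\theta,\w)$ and $q(\theta,\w)$. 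The first two expectations analogously evaluate to $p(0,\w)\vu_1$ (the $\vu_2$ contribution vanishes by oddness). Combining with $h(\v,\v^*)$ arising from the diagonal plus off-diagonal algebra and the half-angle identity
\[
\tfrac{\w/\|\w\|+\w^*}{\|\w/\|\w\|+\w^*\|} = \cos(\theta/2)\vu_1 + \sin(\theta/2)\vu_2
\]
to eliminate $\vu_2$ in favor of the stated unit vector yields the first displayed formula.

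For the inner product identity, recall from Lemma~\ref{lem:grad} that $\partial f/\partial\w = -\tfrac{\v^{\t}\v^*}{2\pi\|\w\|}\vu_2$, so only the $\vu_2$ component of $\E[\g_{\crl}]$ matters. Reading off that component from the derived formula (using $\vu_1\cdot\vu_2=0$ and $\tfrac{\w/\|\w\|+\w^*}{\|\w/\|\w\|+\w^*\|}\cdot\vu_2 = \sin(\theta/2)$) gives $-(\v^{\t}\v^*)q(\theta,\w)$, and multiplying with the scalar in front of $\vu_2$ in $\partial f/\partial\w$ produces $\tfrac{q(\theta,\w)(\v^{\t}\v^*)^2}{2\pi\|\w\|}$. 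Nonnegativity of $q(\theta,\w)$ follows by the substitution $t=\cos\phi$, which (after splitting the integral at $\phi=0$ when $\theta<\pi/2$ and using the cancellation coming from the evenness of $\xi(\sec(\cdot)/\|\w\|)$) collapses $q$ to the cleaner form $q(\theta,\w) = \tfrac{1}{2\pi}\int_0^{\sin\theta}\xi(1/(t\|\w\|))\,dt \ge 0$.

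For the norm bound, decompose $\E[\g_{\crl}] = A\vu_1 + B\vu_2$ with $B=-(\v^{\t}\v^*)q(\theta,\w)$, so $B^2 = 2\pi\|\w\|q(\theta,\w)\cdot\langle\E[\g_{\crl}],\partial f/\partial\w\rangle$. The key uniform estimate is that $\|\w\|q(\theta,\w)$ is bounded by an absolute constant: using $\xi(x)\le\min(x^3/3,\sqrt{\pi/2})$ in the clean integral form from Step 3 and splitting the $t$-integral at $t_0 = 1/\|\w\|$ gives $\|\w\|q(\theta,\w)\le\tfrac{1}{2\pi}(\sqrt{\pi/2}+\tfrac{1}{6})$. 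Consequently $B^2 \le C_B\langle\E[\g_{\crl}],\partial f/\partial\w\rangle$. For $A^2$, write
\[
A = \tfrac{p(0,\w)}{2}h(\v,\v^*) - (\v^{\t}\v^*)\bigl[p(\theta,\w) - q(\theta,\w)\cot(\theta/2)\bigr].
\]
I absorb the $q(\theta,\w)\cot(\theta/2)(\v^{\t}\v^*)\vu_1$ piece into the inner-product bound (its square is, up to a factor of $\cot^2(\theta/2)\le$ const away from $\theta=0$, of the form $(\v^{\t}\v^*)^2q(\theta,\w)^2$, again controllable by the inner product), while the remaining combination $\tfrac{p(0,\w)}{2}h(\v,\v^*) - (\v^{\t}\v^*)p(\theta,\w)$ is a linear functional of $\v$ and $\v^*$ that vanishes at any point where $\partial f/\partial\v = 0$ (this is the analogue, for the clipped-ReLU coefficients $p(0,\w),p(\theta,\w)$, of the vanishing identity already used in Lemma~\ref{lem:cor}). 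By Cauchy--Schwarz and a direct comparison of the coefficient matrices, this piece is bounded by $C\cdot\|\partial f/\partial\v\|^2$ with a constant depending only on $c_\w$ and $C_\v$, using $\|\v\|\le C_\v$ to turn the quadratic-in-$\v$ expression into a bounded multiplier and $\|\w\|\ge c_\w$ to uniformly bound $p(\theta,\w)$ and $\cot(\theta/2)q(\theta,\w)$.

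\paragraph{Main obstacle.} The nontrivial step is the $A^2$ bound: unlike the $\vu_2$-direction piece, $A$ mixes the $h(\v,\v^*)$ term (roughly $\|\v\|^2$ scale) with a cross term $(\v^{\t}\v^*)$, and I need to exhibit the exact algebraic cancellation that forces $A$ to vanish whenever $\partial f/\partial\v$ vanishes, so that $A^2$ can be dominated by $\|\partial f/\partial\v\|^2$. The compensation between $p(0,\w)h(\v,\v^*)/2$ and $(\v^{\t}\v^*)p(\theta,\w)$ must be tracked carefully, with the residual $q(\theta,\w)\cot(\theta/2)(\v^{\t}\v^*)$ term rerouted into the inner-product bound via the $\|\w\|q$ estimate above.
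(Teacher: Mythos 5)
Your derivation of the formula for $\E_\Z[\g_{\crl}(\v,\w;\Z)]$ and of the inner-product identity follows the paper's route (polar coordinates in $\mathrm{span}(\w,\w^*)$, the radial integral $\xi$, projection onto the direction $\vu_2$ orthogonal to $\w$), and your explicit estimate $\|\w\|\,q(\theta,\w)\leq \frac{1}{2\pi}\left(\sqrt{\pi/2}+\frac{1}{6}\right)$ for handling the $\vu_2$-component of the norm bound is actually more explicit than what the paper writes. The problem is the $\vu_1$-component. First, an algebra slip: the $\vu_1$-coefficient is $A=\frac{p(0,\w)}{2}h(\v,\v^*)-(\v^\t\v^*)p(\theta,\w)$, because the middle term of the displayed formula also contributes $-(\v^\t\v^*)\cot(\theta/2)q(\theta,\w)$ along $\vu_1$, exactly cancelling the $+(\v^\t\v^*)\cot(\theta/2)q(\theta,\w)$ you retained. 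The spurious term you propose to absorb "up to a factor $\cot^2(\theta/2)\leq$ const away from $\theta=0$" could not in fact be absorbed uniformly, since $\cot^2(\theta/2)\,q(\theta,\w)\,\|\w\|$ blows up as $\theta\to 0$; fortunately the term is not there.

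More seriously, your treatment of the remaining piece $\frac{p(0,\w)}{2}h(\v,\v^*)-(\v^\t\v^*)p(\theta,\w)$ rests on the claim that it vanishes whenever $\frac{\partial f}{\partial \v}=0$ and is therefore dominated by $C\left\|\frac{\partial f}{\partial \v}\right\|^2$. That claim is false. Using $h(\v,\v^*)=4\v^\t\frac{\partial f}{\partial \v}(\v,\w)+2\left(1-\frac{\theta}{\pi}\right)\v^\t\v^*$, at a zero of $\frac{\partial f}{\partial \v}$ the piece equals $(\v^\t\v^*)\left(\left(1-\frac{\theta}{\pi}\right)p(0,\w)-p(\theta,\w)\right)$, which is generically nonzero ($p$ is not affine in $\theta$; the saddle points only kill it because $\v^\t\v^*=0$ there). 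The correct argument controls this residual by the \emph{inner product}, not by $\left\|\frac{\partial f}{\partial \v}\right\|^2$: one needs $\left|\left(1-\frac{\theta}{\pi}\right)p(0,\w)-p(\theta,\w)\right|\leq q(\theta,\w)$ for all $\theta\in(0,\pi)$, which the paper establishes in Lemma \ref{lem:pq} via a symmetry reduction to $\theta\in[\frac{\pi}{2},\pi]$ followed by rearrangement-type comparisons of $\cos(\phi)\xi\left(\frac{\sec(\phi)}{\|\w\|}\right)$ against $\sin(\phi)\xi\left(\frac{\sec(\phi)}{\|\w\|}\right)$. This estimate is the genuinely nontrivial ingredient of the third claim, it is absent from your plan, and without it (or a substitute) the bound on $A^2$ does not close.
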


\begin{lem}\label{lem:vanish_crelu}
When Algorithm \ref{alg} converges, $\E_\Z \Big[\frac{\partial \ell}{\partial \v}(\v,\w; \Z)\Big]$ and $\E_\Z \Big[\g_{\crl}(\v,\w; \Z)\Big]$ vanish simultaneously, which only occurs at the 
\begin{enumerate}
\item Saddle points where (\ref{eq:critical}) is satisfied according to Proposition \ref{prop}.
\item Minimizers of (\ref{eq:model}) where $\v = \v^*$, $\theta(\w,\w^*)=0$, or $\v = (\I_m + \1_m\1_m^\t)^{-1}(\1_m\1_m^\t - \I_m)\v^*$, $\theta(\w,\w^*)=\pi$.
\end{enumerate}
\end{lem}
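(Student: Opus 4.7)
My plan is to mirror the structure of the proof of Lemma \ref{lem:vanish} (the vanilla ReLU case), substituting the closed-form expressions from Lemma \ref{lem:cor_crelu} wherever the corresponding ReLU identities were used. The overall proof splits into (i) establishing that both expected gradients tend to zero along the trajectory, and (ii) characterizing the common zero set.

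For step (i), I would combine Lemma \ref{lem:lipschitz} (Lipschitz smoothness of $\nabla f$ on bounded domains) with the descent-type estimate in Lemma \ref{lem:cor_crelu} to derive an analog of (\ref{eq:descprop}),
\[
f(\v^{t+1},\w^{t+1}) - f(\v^t,\w^t) \le -c_1 \left\|\tfrac{\partial f}{\partial \v}(\v^t,\w^t)\right\|^2 - c_2 \left\langle \E_\Z\!\left[\g_{\crl}(\v^t,\w^t;\Z)\right], \tfrac{\partial f}{\partial \w}(\v^t,\w^t)\right\rangle,
\]
valid for small enough $\eta$, where $c_1,c_2>0$ are explicit in $\eta$, $L$, and $A_{\crl}$. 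Since $\{f(\v^t,\w^t)\}$ is bounded below, telescoping forces $\|\partial f/\partial \v\|\to 0$ and $\langle \E_\Z[\g_{\crl}], \partial f/\partial \w\rangle \to 0$; then the inequality in Lemma \ref{lem:cor_crelu} yields $\|\E_\Z[\g_{\crl}]\|\to 0$. Since $\E_\Z[\partial \ell/\partial \v] = \partial f/\partial \v$ by Lemma \ref{lem:cgd}, both expected gradients vanish simultaneously in the limit.

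For step (ii), I would characterize the simultaneous zero set. Using the identity from Lemma \ref{lem:cor_crelu},
\[
\left\langle \E_\Z[\g_{\crl}(\v,\w;\Z)], \tfrac{\partial f}{\partial \w}(\v,\w)\right\rangle = \frac{q(\theta,\w)}{2\pi\|\w\|}(\v^\t\v^*)^2 = 0,
\]
the crucial claim is that $q(\theta,\w)>0$ for every $\theta\in(0,\pi)$ and $\w\neq \0_n$. For $\theta\ge \pi/2$ the integrand $\sin(\phi)\xi(\sec(\phi)/\|\w\|)$ is non-negative throughout $[-\pi/2+\theta,\pi/2]$ and positive on the interior, giving $q>0$. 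For $\theta<\pi/2$ I would split at $\phi=0$, substitute $\phi\mapsto -\phi$ on the negative piece, and use the evenness $\sec(-\phi)=\sec(\phi)$ to collapse the two integrals into $\frac{1}{2\pi}\int_{\pi/2-\theta}^{\pi/2}\sin(\phi)\xi(\sec(\phi)/\|\w\|)\,\mathrm{d}\phi>0$. Consequently $\theta\in(0,\pi)$ forces $\v^\t\v^*=0$. Combining this with $\partial f/\partial \v=0$ (i.e.\ $(\I_m+\1_m\1_m^\t)\v=((1-2\theta/\pi)\I_m+\1_m\1_m^\t)\v^*$) and applying Sherman--Morrison to invert $\I_m+\1_m\1_m^\t$, taking the inner product with $\v^*$ yields a single equation in $\theta$ that reproduces exactly the saddle-point angle of Proposition \ref{prop}, and then $\v$ matches (\ref{eq:saddle}).

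The remaining possibilities are the non-differentiable boundary cases $\theta\in\{0,\pi\}$, where $\partial f/\partial \v=0$ alone pins down $\v$: at $\theta=0$ it gives $\v=\v^*$, and at $\theta=\pi$ it gives $\v=(\I_m+\1_m\1_m^\t)^{-1}(\1_m\1_m^\t-\I_m)\v^*$. I expect the main obstacle to be the sign analysis of $q(\theta,\w)$, since that is what converts ``inner-product equals zero'' into the clean algebraic condition $\v^\t\v^*=0$; a secondary subtlety is that the coarse gradient formula in Lemma \ref{lem:cor_crelu} has an apparent $\csc(\theta/2)$ singularity at the boundary angles, but this does not enter the argument because the $\partial f/\partial \v=0$ condition alone determines $\v$ in those regimes.
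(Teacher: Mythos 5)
Your proposal matches the paper's (omitted) proof: the paper states only that the argument mirrors that of Lemma \ref{lem:vanish} and that the core fact is $q(\theta,\w)\geq 0$ with equality only at $\theta=0,\pi$, which is exactly the positivity argument you supply (your fold-at-zero substitution is the same odd-symmetry reasoning the paper invokes when asserting $q\geq 0$ in the proof of Lemma \ref{lem:cor_crelu}). The only cosmetic difference is that you extract $\v^\t\v^*=0$ from the inner-product identity rather than from linear independence of $\tfrac{\w}{\|\w\|}$ and $\tfrac{\w/\|\w\|+\w^*}{\|\w/\|\w\|+\w^*\|}$ in the vector equation $\E_\Z[\g_{\crl}]=\0_n$; both routes are valid.
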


\subsection{Derivative of the Identity Function as STE}
Now we consider the derivative of identity function. Similar results to Lemmas \ref{lem:cor} and \ref{lem:vanish} are not valid anymore. It happens that the coarse gradient derived from the identity STE does not vanish at local minima, and Algorithm \ref{alg} may never converge there.
\begin{lem} \label{lem:cgd_id}
Let $\mu(x) = x$ in (\ref{eq:cbp_w}). Then the expected coarse partial gradient w.r.t. $\w$ is 
\begin{equation}\label{eq:cgrad_w_id}
\E_\Z \Big[\g_{\id}(\v,\w; \Z)\Big] = 
\frac{1}{\sqrt{2\pi}}\left(\|\v\|^2\frac{\w}{\|\w\|}-(\v^\t\v^*)\w^*\right)
.
\end{equation}
If $\theta(\w,\w^*)=\pi$ and $\v = \big(\I_m + \1_m\1_m^\t \big)^{-1}( \1_m\1_m^\t -\I_m )\v^*$,
$$
\left\|\E_\Z \Big[\g_{\id}(\v,\w; \Z)\Big]\right\| = \frac{2(m-1)}{\sqrt{2\pi}(m+1)^2}(\1_m^\t\v^*)^2 \geq 0,
$$
i.e., $\E_\Z \Big[\g_{\id}(\v,\w; \Z)\Big]$ does not vanish at the local minimizers if $\1_m^\t\v^*\neq0$ and $m>1$.
\end{lem}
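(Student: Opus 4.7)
The plan is to derive~(\ref{eq:cgrad_w_id}) by direct computation of the Gaussian expectation, and then specialize to the claimed local minimizer.

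First I would substitute $\mu'(x) \equiv 1$ into~(\ref{eq:cbp_w}): the factor $\mu'(\Z\w) \odot \v$ collapses to $\v$, giving
\[
\g_{\id}(\v,\w;\Z) = \Z^\t \v \,\bigl( \v^\t \sigma(\Z\w) - (\v^*)^\t \sigma(\Z\w^*) \bigr).
\]
Writing $\Z^\t \v = \sum_i v_i \Z_i$ and expanding produces double sums of the form $\sum_{i,j} v_i v_j \Z_i \sigma(\Z_j^\t\w)$ and $\sum_{i,j} v_i v_j^* \Z_i \sigma(\Z_j^\t\w^*)$. Since the rows $\Z_i$ are i.i.d.\ standard Gaussians with mean zero, every off-diagonal ($i \neq j$) term vanishes in expectation by independence, leaving only the diagonal $i = j$ contributions. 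These contributions hinge on a single key identity:
\[
\E\bigl[\Z_1 \sigma(\Z_1^\t \u)\bigr] = \frac{1}{\sqrt{2\pi}}\frac{\u}{\|\u\|} \quad \text{for any } \u \neq \0_n,
\]
which I would establish by decomposing $\Z_1$ into its $\u$-parallel and $\u$-perpendicular components: the perpendicular piece has mean zero and is independent of $\sigma(\Z_1^\t\u)$, while the parallel piece reduces to the scalar calculation $\E[\alpha \, 1_{\{\alpha > 0\}}] = 1/\sqrt{2\pi}$ for $\alpha \sim \mathcal{N}(0,1)$. Applying this with $\u = \w$ and $\u = \w^*$ (using $\|\w^*\| = 1$) and collecting the diagonal terms delivers exactly~(\ref{eq:cgrad_w_id}).

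For the second assertion, $\theta(\w,\w^*) = \pi$ forces $\w/\|\w\| = -\w^*$, so~(\ref{eq:cgrad_w_id}) collapses to $-\bigl(\|\v\|^2 + \v^\t\v^*\bigr)\w^*/\sqrt{2\pi}$, whose norm equals $\bigl|\|\v\|^2 + \v^\t\v^*\bigr|/\sqrt{2\pi}$. To evaluate this at the prescribed $\v$, I would apply Sherman--Morrison, $(\I_m + \1_m\1_m^\t)^{-1} = \I_m - \tfrac{1}{m+1}\1_m\1_m^\t$, to rewrite $\v$ in the compact form $\v = \tfrac{2s}{m+1}\1_m - \v^*$ with $s := \1_m^\t\v^*$. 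A short expansion then yields $\|\v\|^2 = \|\v^*\|^2 - \tfrac{4s^2}{(m+1)^2}$ and $\v^\t\v^* = \tfrac{2s^2}{m+1} - \|\v^*\|^2$, whose sum telescopes to $\tfrac{2(m-1)s^2}{(m+1)^2} \geq 0$, producing the stated norm (no absolute value needed). The only real obstacle is careful bookkeeping in this final algebraic simplification; conceptually the entire proof is driven by the independence of the rows of $\Z$ and the one-dimensional Gaussian identity above.
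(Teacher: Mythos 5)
Your proposal is correct and follows essentially the same route as the paper: expand the coarse gradient into double sums, kill the off-diagonal terms by row independence, apply the identity $\E[\z\,1_{\{\z^\t\u>0\}}]=\tfrac{1}{\sqrt{2\pi}}\tfrac{\u}{\|\u\|}$ (which the paper imports via its Lemma \ref{lem:basic}, while you reprove it by orthogonal decomposition), and then evaluate $\|\v\|^2+\v^\t\v^*$ at the spurious minimizer. Your final algebra via Sherman--Morrison, giving $\v=\tfrac{2s}{m+1}\1_m-\v^*$, is just a cosmetic variant of the paper's use of $(\I_m+\1_m\1_m^\t)\1_m=(m+1)\1_m$, and both yield the stated value $\tfrac{2(m-1)}{\sqrt{2\pi}(m+1)^2}(\1_m^\t\v^*)^2$.
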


\begin{lem}\label{lem:cor_id}
If $\w\neq \0_n$ and $\theta(\w,\w^*)\in(0, \pi)$, then the inner product between the expected coarse and true gradients w.r.t. $\w$ is
\begin{equation}\label{eq:cor_id}
\left\langle \E_\Z \Big[\g_{\id}(\v,\w; \Z)\Big], \frac{\partial f}{\partial \w}(\v,\w) \right\rangle  = 
\frac{\sin\left(\theta(\w,\w^*)\right)}{(\sqrt{2\pi})^3\|\w\|}(\v^\t\v^*)^2 \geq 0.
\end{equation}
When $\theta(\w,\w^*)\to\pi$, $\v\to\big(\I_m + \1_m\1_m^\t \big)^{-1}( \1_m\1_m^\t -\I_m )\v^*$, if $\1_m^\t\v^*\neq0$ and $m>1$, we have
\begin{equation}\label{eq:desc_id}
\frac{\left\|\E_\Z \Big[\g_{\id}(\v,\w; \Z)\Big] \right\|^2 }{\left\|\frac{\partial f}{\partial \v}(\v,\w)\right\|^2 + \left\langle \E_\Z \Big[\g_{\id}(\v,\w; \Z)\Big], \frac{\partial f}{\partial \w}(\v,\w) \right\rangle}\to +\infty.
\end{equation}
\end{lem}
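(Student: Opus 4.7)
The plan is to compute everything from the closed-form expressions already at our disposal. For the inner product identity, I would substitute $\E_\Z[\g_{\id}(\v,\w;\Z)]$ from Lemma \ref{lem:cgd_id} and $\frac{\partial f}{\partial \w}(\v,\w)$ from Lemma \ref{lem:grad}, and then exploit the fact that $\frac{\partial f}{\partial \w}(\v,\w)$ is orthogonal to $\w$, being proportional to $(\I_n - \w\w^\t/\|\w\|^2)\w^*$. Hence the $\frac{\w}{\|\w\|}$-term in $\E_\Z[\g_{\id}]$ contributes zero and only the $-(\v^\t\v^*)\w^*$ piece survives. Using $\|\w^*\|=1$, the identities $\|(\I_n - \w\w^\t/\|\w\|^2)\w^*\|^2 = 1 - \cos^2(\theta(\w,\w^*)) = \sin^2(\theta(\w,\w^*))$ and $\la \w^*, (\I_n - \w\w^\t/\|\w\|^2)\w^*\ra = \sin^2(\theta(\w,\w^*))$ cancel one factor of $\sin$, and the formula (\ref{eq:cor_id}) drops out; nonnegativity is then immediate on $(0,\pi)$.

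For the divergence statement (\ref{eq:desc_id}), I would evaluate each of the three quantities in the ratio in the limit as $\theta(\w,\w^*) \uparrow \pi$ and $\v \to (\I_m + \1_m\1_m^\t)^{-1}(\1_m\1_m^\t - \I_m)\v^*$. In that limit the factor $(1-\tfrac{2}{\pi}\theta(\w,\w^*))$ in Lemma \ref{lem:grad} becomes $-1$, and the defining relation for the limiting $\v$ forces $(\I_m + \1_m\1_m^\t)\v = (\1_m\1_m^\t - \I_m)\v^*$, so $\frac{\partial f}{\partial \v}(\v,\w) \to \0_m$. The $\w$-side inner product in the denominator of (\ref{eq:desc_id}) also vanishes because of the $\sin(\theta(\w,\w^*))$ factor just derived. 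On the other hand, Lemma \ref{lem:cgd_id} already supplies that $\|\E_\Z[\g_{\id}]\|$ tends to the strictly positive constant $\frac{2(m-1)}{\sqrt{2\pi}(m+1)^2}(\1_m^\t\v^*)^2$ under the hypotheses $\1_m^\t\v^*\neq 0$ and $m>1$. The denominator thus tends to zero while the numerator stays bounded below by a positive constant, which gives the blow-up.

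The only delicate point is that Lemma \ref{lem:grad} is stated for $\w \neq \0_n$ and $\theta(\w,\w^*) \in (0,\pi)$, so the limit must be taken along a path that stays inside this regime and lets $\theta$ approach $\pi$; this is harmless since every neighborhood of the target point intersects the admissible set. Beyond this bookkeeping, the argument is just continuous substitution into formulas already established, and I do not anticipate any substantive obstacle.
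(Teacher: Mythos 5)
Your proposal is correct and follows essentially the same route as the paper: both compute the inner product by noting that $\frac{\partial f}{\partial \w}$ is proportional to the projection $\big(\I_n - \frac{\w\w^\t}{\|\w\|^2}\big)\w^*$, so only the $-(\v^\t\v^*)\w^*$ term of $\E_\Z[\g_{\id}]$ survives and the $\sin$ factors cancel to give (\ref{eq:cor_id}), and both obtain (\ref{eq:desc_id}) by observing that the denominator tends to zero (via $\sin\theta\to 0$ and the vanishing of $\frac{\partial f}{\partial \v}$ at the limiting $\v$) while Lemma \ref{lem:cgd_id} keeps the numerator bounded away from zero. No gaps.
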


Lemma \ref{lem:cgd_id} suggests that if $\1_m^\t\v^*\neq0$, the coarse gradient descent will never converge near the spurious minimizers with $\theta(\w,\w^*)=\pi$ and $\v = \big(\I_m + \1_m\1_m^\t \big)^{-1}( \1_m\1_m^\t -\I_m )\v^*$, because  $\E_\Z \Big[\g_{\id}(\v,\w; \Z)\Big]$ does not vanish there. By the positive correlation implied by (\ref{eq:cor_id}) of Lemma \ref{lem:cor_id}, for some proper $(\v^0,\w^0)$, the iterates $\{(\v^t,\w^t)\}$ may move towards a local minimizer in the beginning. But when $\{(\v^t,\w^t)\}$ approaches it, the descent property (\ref{eq:descprop}) does not hold for $\E_\Z \left[\g_{\id}(\v,\w;\Z) \right]$ because of (\ref{eq:desc_id}), hence the training loss begins to increase and instability arises. 

\section{Experiments}

 While our theory implies that both vanilla and clipped ReLUs learn a two-linear-layer CNN, their empirical performances on deeper nets are different. In this section, we compare the performances of the identity, ReLU and clipped ReLU STEs on MNIST \citep{mnist_98} and CIFAR-10 \citep{cifar_09} benchmarks for 2-bit or 4-bit quantized activations. As an illustration, we plot the 2-bit quantized ReLU and its associated clipped ReLU in Figure \ref{fig:relu} in the appendix. Intuitively, the clipped ReLU should be the best performer, as it best approximates the original quantized ReLU. 
We also report the instability issue of the training algorithm when using an improper STE in section \ref{sec:instable}. In all experiments, the weights are kept float.

 The resolution $\alpha$ for the quantized ReLU needs to be carefully chosen to maintain the full-precision level accuracy. To this end, we follow \citep{halfwave_17} and resort to a modified batch normalization layer \citep{bnorm_15} without the scale and shift, whose output components approximately follow a unit Gaussian distribution. Then the $\alpha$ that fits the input of activation layer the best can be pre-computed by a variant of Lloyd's algorithm \citep{lloyd,BR_18} applied to a set of simulated 1-D half-Gaussian data. After determining the $\alpha$, it will be fixed during the whole training process. Since the original LeNet-5 does not have batch normalization, we add one prior to each activation layer. We emphasize that we are not claiming the superiority of the quantization approach used here, as it is nothing but the HWGQ \citep{halfwave_17}, except we consider the uniform quantization.

The optimizer we use is the stochastic (coarse) gradient descent with momentum = 0.9 for all experiments. We train 50 epochs for LeNet-5 \citep{mnist_98} on MNIST, and 200 epochs for VGG-11  \citep{vgg_14} and ResNet-20  \citep{resnet} on CIFAR-10. The parameters/weights are initialized with those from their pre-trained full-precision counterparts. The schedule of the learning rate is specified in Table \ref{tab:2} in the appendix.



\subsection{Comparison Results}
The experimental results are summarized in Table \ref{tab:1}, where we record both the training losses and validation accuracies. Among the three STEs, the derivative of clipped ReLU gives the best overall performance, followed by vanilla ReLU and then by the identity function. For deeper networks, clipped ReLU is the best performer. But on the relatively shallow LeNet-5 network, vanilla ReLU exhibits comparable performance to the clipped ReLU, which is somewhat in line with our theoretical finding that ReLU is a great STE for learning the two-linear-layer (shallow) CNN. 

\begin{table}[ht]
\centering
\scalebox{0.97}{%
\begin{tabular}{c|c|c|ccc}
  \cline{1-6}			
 & \multirow{2}{*}{Network} &   \multirow{2}{*}{BitWidth} & \multicolumn{3}{|c}{Straight-through estimator} \\
 \cline{4-6}
 & & & identity & vanilla ReLU & clipped ReLU  \\
  \hline
  \hline
 \multirow{2}{*}{MNIST} & \multirow{2}{*}{LeNet5} & 2& $2.6\times10^{-2}$/$98.49$ & $5.1\times10^{-3}$/$99.24$ & $5.4\times10^{-3}$/$99.23$ \\
  \cline{3-6}
   &  & 4 & $6.0\times10^{-3}$/$98.98$ & $9.0\times10^{-4}$/$99.32$ & $8.8\times10^{-4}$/$99.24$  \\
  \hline\hline
 \multirow{4}{*}{ CIFAR10 } & \multirow{2}{*}{VGG11} & 2& 0.19/$86.58$ & $0.10$/$88.69$ & $0.02$/$90.92$ \\
  \cline{3-6}
  &  & 4& $3.1\times10^{-2}$/$90.19$ & $1.5\times10^{-3}$/$92.01$ & $1.3\times10^{-3}$/$92.08$ \\
  \cline{2-6}
& \multirow{2}{*}{ResNet20} & 2& $1.56$/$46.52$ & $1.50$/$48.05$ & $0.24$/$88.39$ \\
  \cline{3-6}
  &  & 4& $1.38$/$54.16$ & $0.25$/$86.59$ & $0.04$/$91.24$\\
  \hline
\end{tabular}}
\caption{Training loss/validation accuracy ($\%$) on MNIST and CIFAR-10 with quantized activations and float weights, for STEs using derivatives of the identity function, vanilla ReLU and clipped ReLU at bit-widths 2 and 4.}\label{tab:1}
\end{table}

\subsection{Instability}\label{sec:instable}
We report the phenomenon of being repelled from a good minimum on ResNet-20 with 4-bit activations when using the identity STE, to demonstrate the instability issue as predicted in Theorem \ref{thm}. By Table \ref{tab:1}, the coarse gradient descent algorithms using the vanilla and clipped ReLUs converge to the neighborhoods of the minima with validation accuracies (training losses) of $86.59\%$ (0.25) and $91.24\%$ (0.04), respectively, whereas that using the identity STE gives $54.16\%$ (1.38). Note that the landscape of the empirical loss function does not depend on which STE is used in the training. Then we initialize training with the two improved minima and use the identity STE. To see if the algorithm is stable there, we start the training with a tiny learning rate of $10^{-5}$. For both initializations, the training loss and validation error significantly increase within the first 20 epochs; see Figure \ref{fig:4bit}. To speedup training, at epoch 20, we switch to the normal schedule of learning rate specified in Table \ref{tab:2} and run 200 additional epochs. The training using the identity STE ends up with a much worse minimum. This is because the coarse gradient with identity STE does not vanish at the good minima in this case (Lemma \ref{lem:cgd_id}). Similarly, the poor performance of ReLU STE on 2-bit activated ResNet-20 is also due to the instability of the corresponding training algorithm at good minima, as illustrated by Figure \ref{fig:2bit} in Appendix C, although it diverges much slower.


\begin{figure}[ht]
\centering
\begin{tabular}{cc}
\includegraphics[width=0.427\columnwidth]{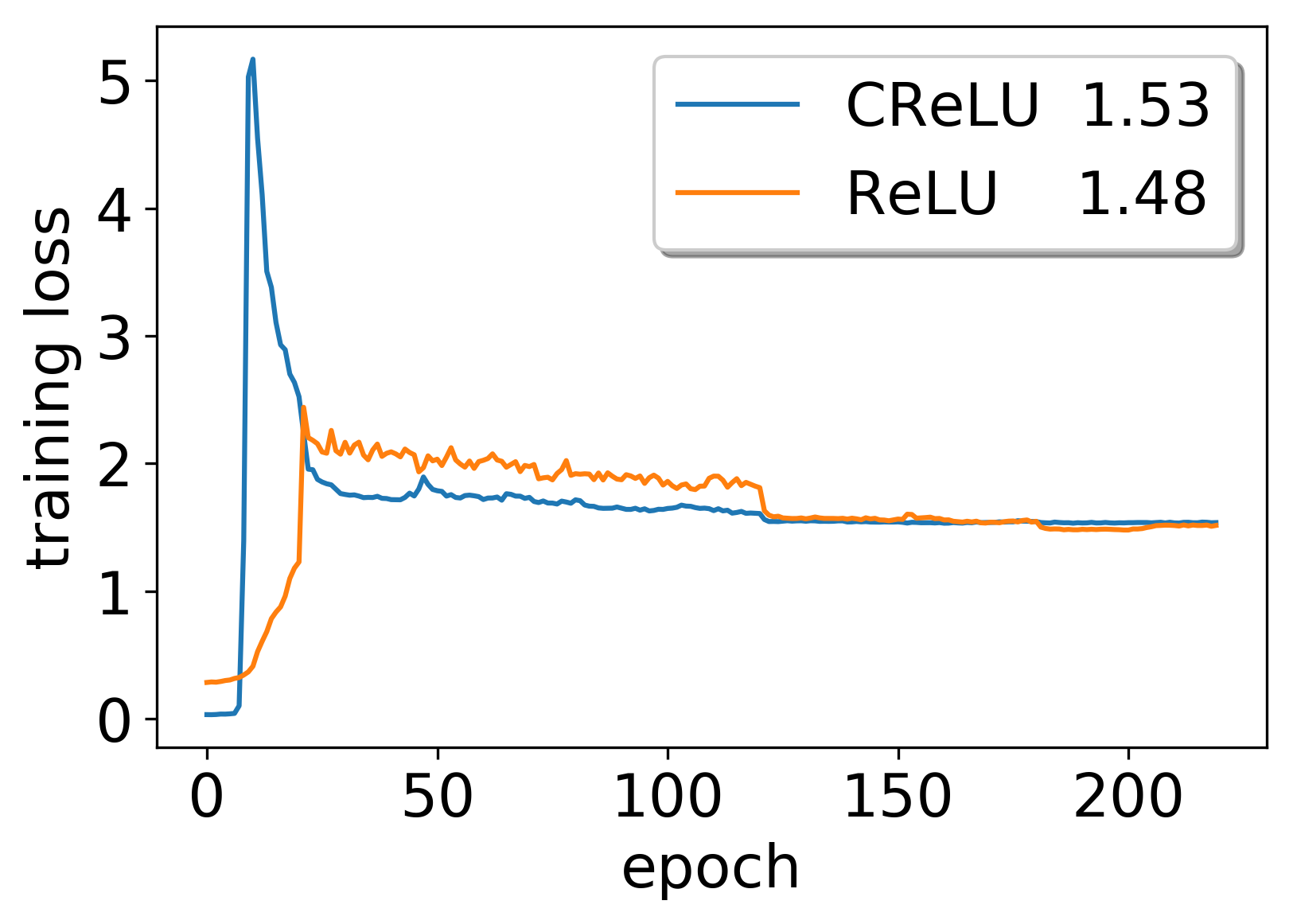} &
\includegraphics[width=0.44\columnwidth]{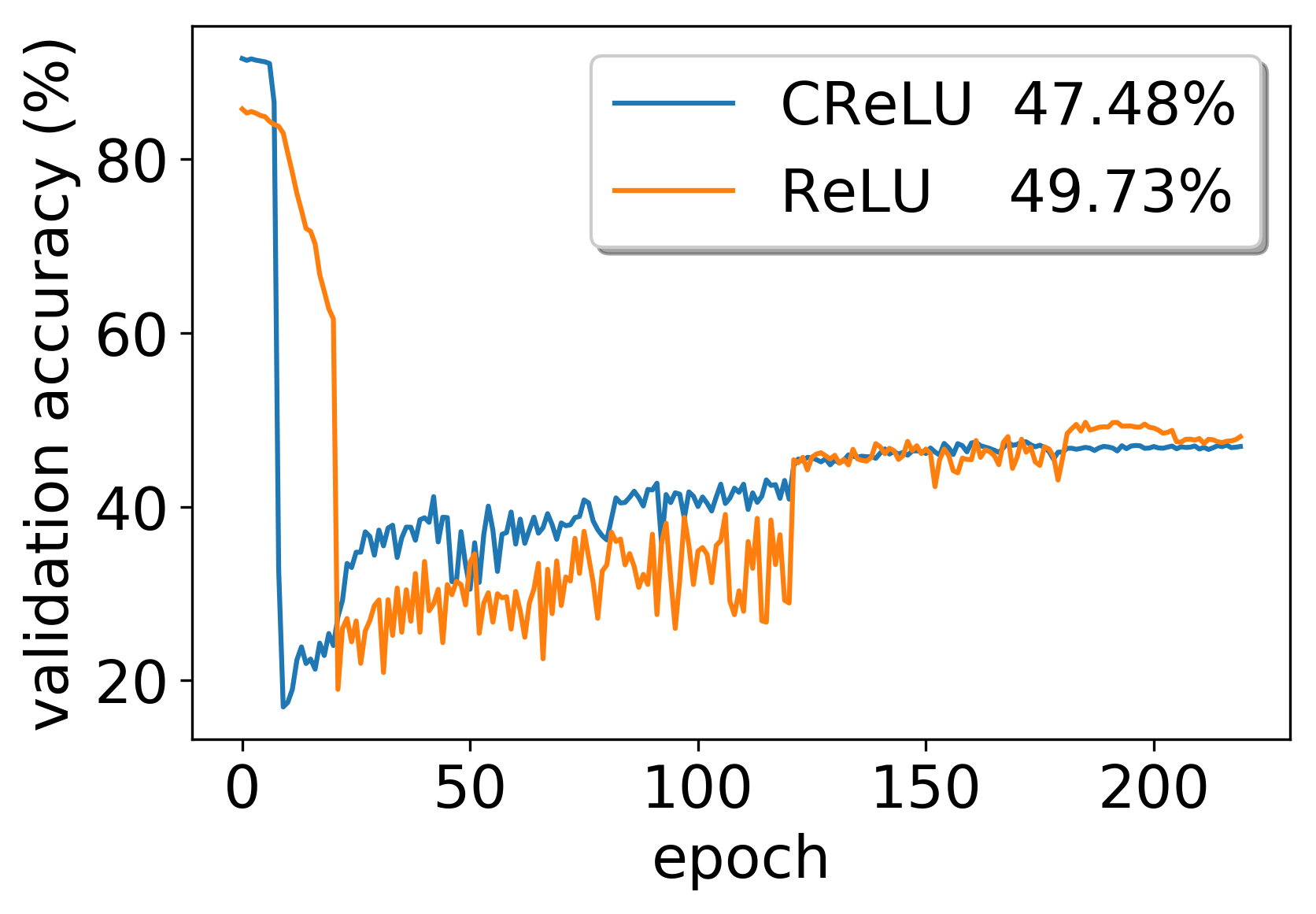}
\end{tabular}
\caption{When initialized with weights (good minima) produced by the vanilla (orange) and clipped (blue) ReLUs on ResNet-20 with 4-bit activations, the coarse gradient descent using the identity STE ends up being repelled from there. The learning rate is set to $10^{-5}$ until epoch 20. }
\end{figure}\label{fig:4bit}


\section{Concluding Remarks}
We provided the first theoretical justification for the concept of STE that it gives rise to descent training algorithm. We considered three STEs: the derivatives of the identity function, vanilla ReLU and clipped ReLU, for learning a two-linear-layer CNN with binary activation. We derived the explicit formulas of the expected coarse gradients corresponding to the STEs, and showed that the negative expected coarse gradients based on vanilla and clipped ReLUs are descent directions for minimizing the population loss, whereas the identity STE is not since it generates a coarse gradient incompatible with the energy landscape. The instability/incompatibility issue was confirmed in CIFAR experiments for improper choices of STE. In the future work, we aim further understanding of coarse gradient descent for large-scale optimization problems with intractable gradients.

\subsubsection*{Acknowledgments}
This work was partially supported by NSF grants DMS-1522383, IIS-1632935, ONR grant N00014-18-1-2527, AFOSR grant FA9550-18-0167, 
DOE grant DE-SC0013839 and STROBE STC NSF grant DMR-1548924.

\bibliography{iclr2019_conference}
\bibliographystyle{iclr2019_conference}

\newpage

\section*{Appendix}

\subsection*{A. \quad The Plots of Quantized and Clipped ReLUs}

\begin{figure}[ht]
\centering
\begin{tabular}{cc}
quantized ReLU & clipped ReLU \\
\includegraphics[width=0.47\columnwidth]{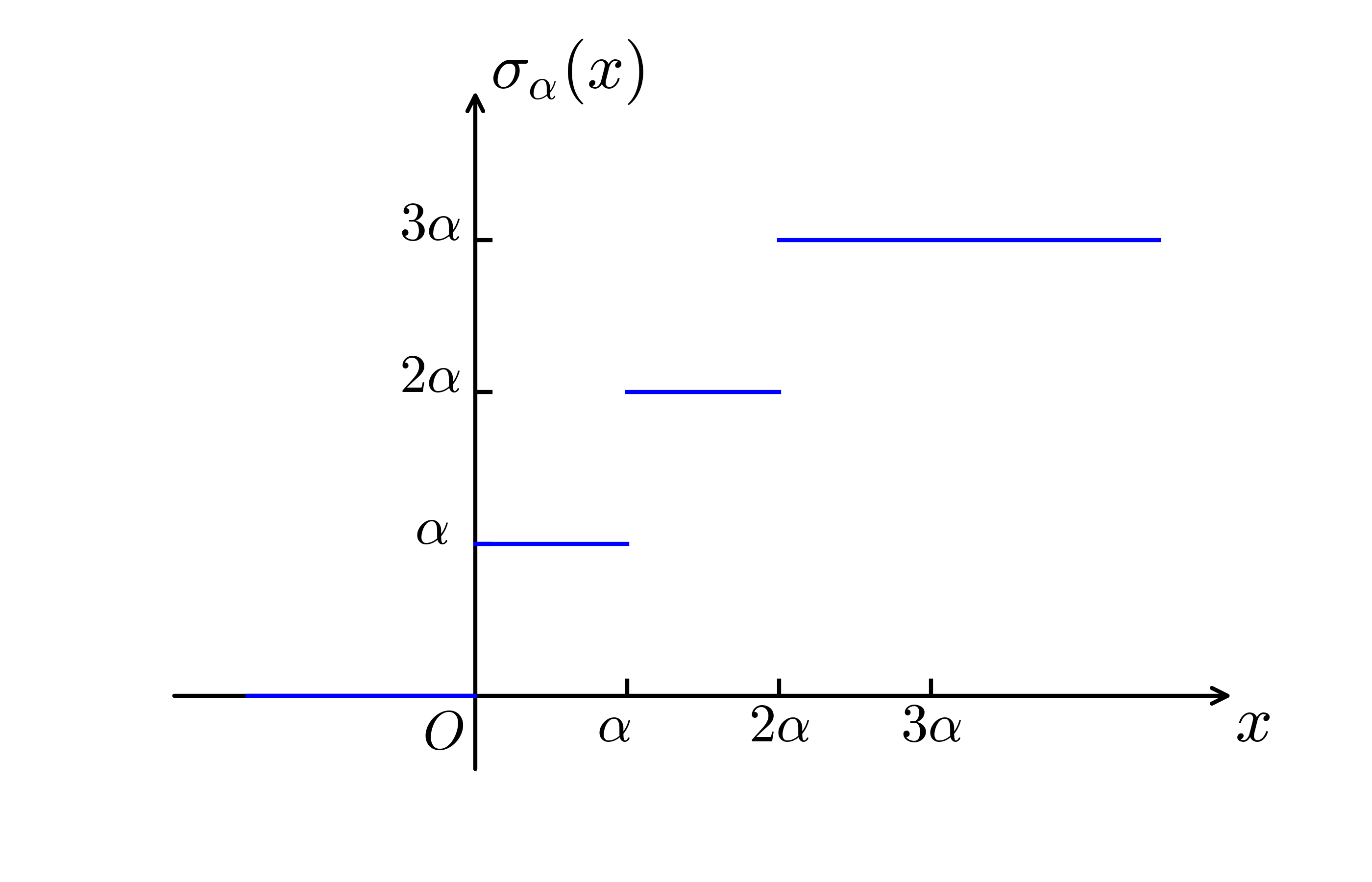} &
\includegraphics[width=0.47\columnwidth]{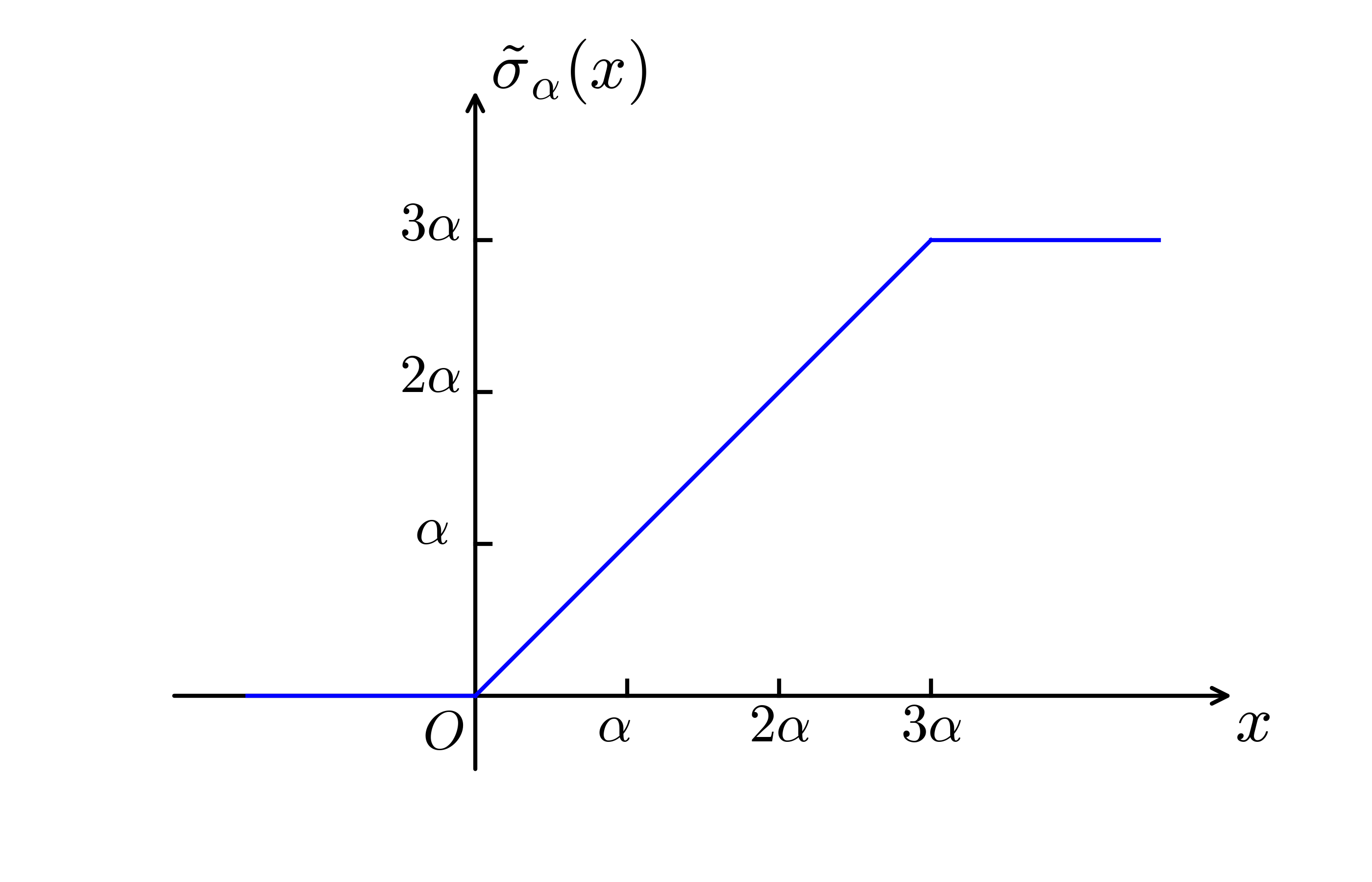}
\end{tabular}
\caption{The plots of 2-bit quantized ReLU $\sigma_{\alpha}(x)$ (with $2^2=4$ quantization levels including 0) and the associated clipped ReLU $\tilde{\sigma}_{\alpha}(x)$. $\alpha$ is the resolution determined in advance of the network training.}\label{fig:relu}
\end{figure}

\subsection*{B. \quad The Schedule of Learning Rate}

\begin{table}[ht]
\centering
\begin{tabular}{c|c|c|ccc}
  \hline			
   \multirow{2}{*}{Network}  & \multirow{2}{*}{\# epochs} &  \multirow{2}{*}{Batch size} & \multicolumn{3}{|c}{Learning rate} \\
   \cline{4-6}
   &  &  & initial  & decay rate & milestone \\
  \hline
  LeNet5 & 50 & 64 & 0.1 & 0.1 & [20,40] \\
  \hline
  VGG11 & 200 & 128 & 0.01 & 0.1 & [80,140] \\
  \hline
  ResNet20 & 200 & 128 & 0.01 & 0.1 & [80,140] \\
  \hline
\end{tabular}
\caption{The schedule of the learning rate.}\label{tab:2}
\end{table}

\subsection*{C. Instability of ReLU STE on ResNet-20 with 2-bit Activations}
\begin{figure}[ht]
\centering
\begin{tabular}{cc}
\includegraphics[width=0.427\columnwidth]{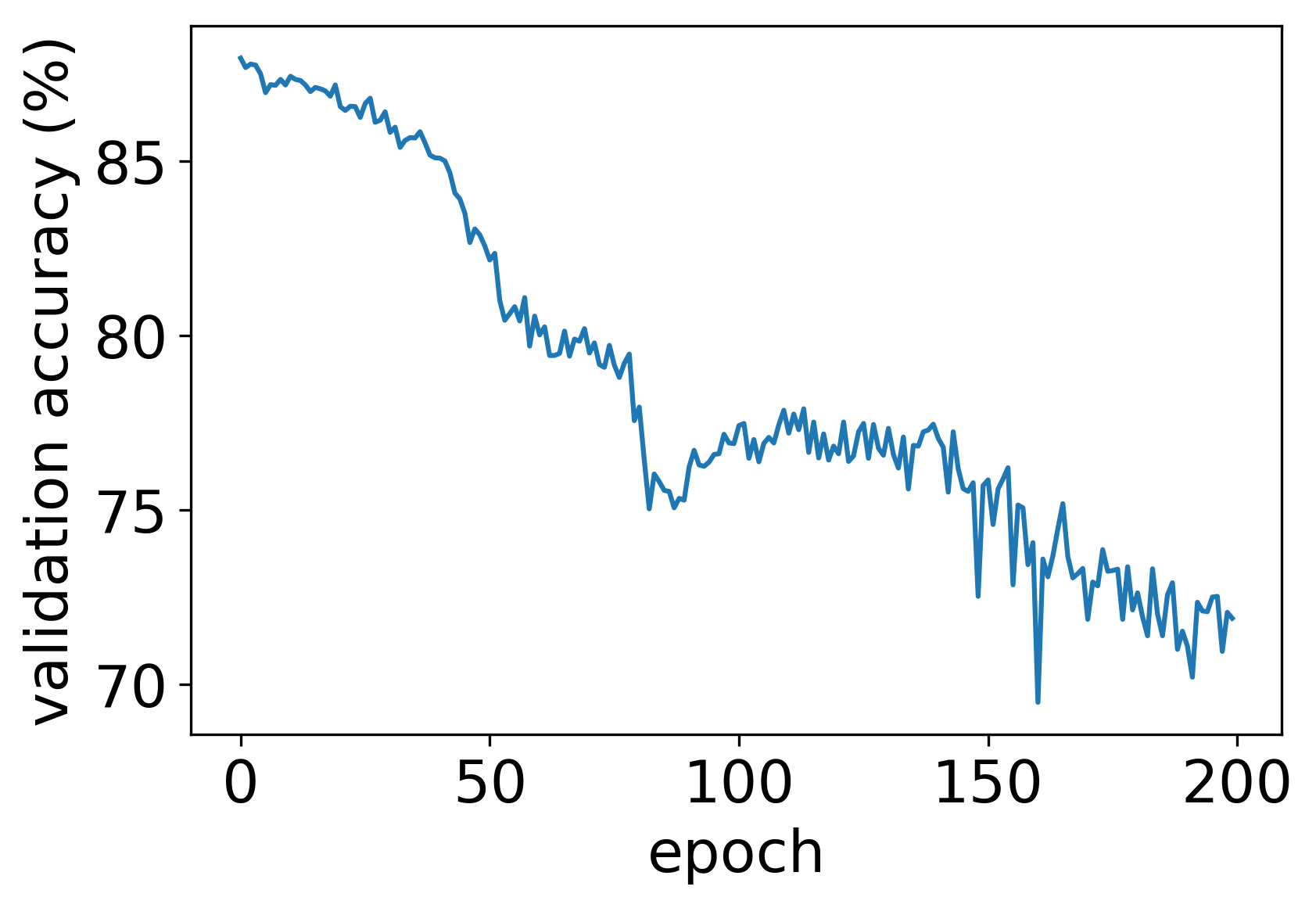} &
\includegraphics[width=0.44\columnwidth]{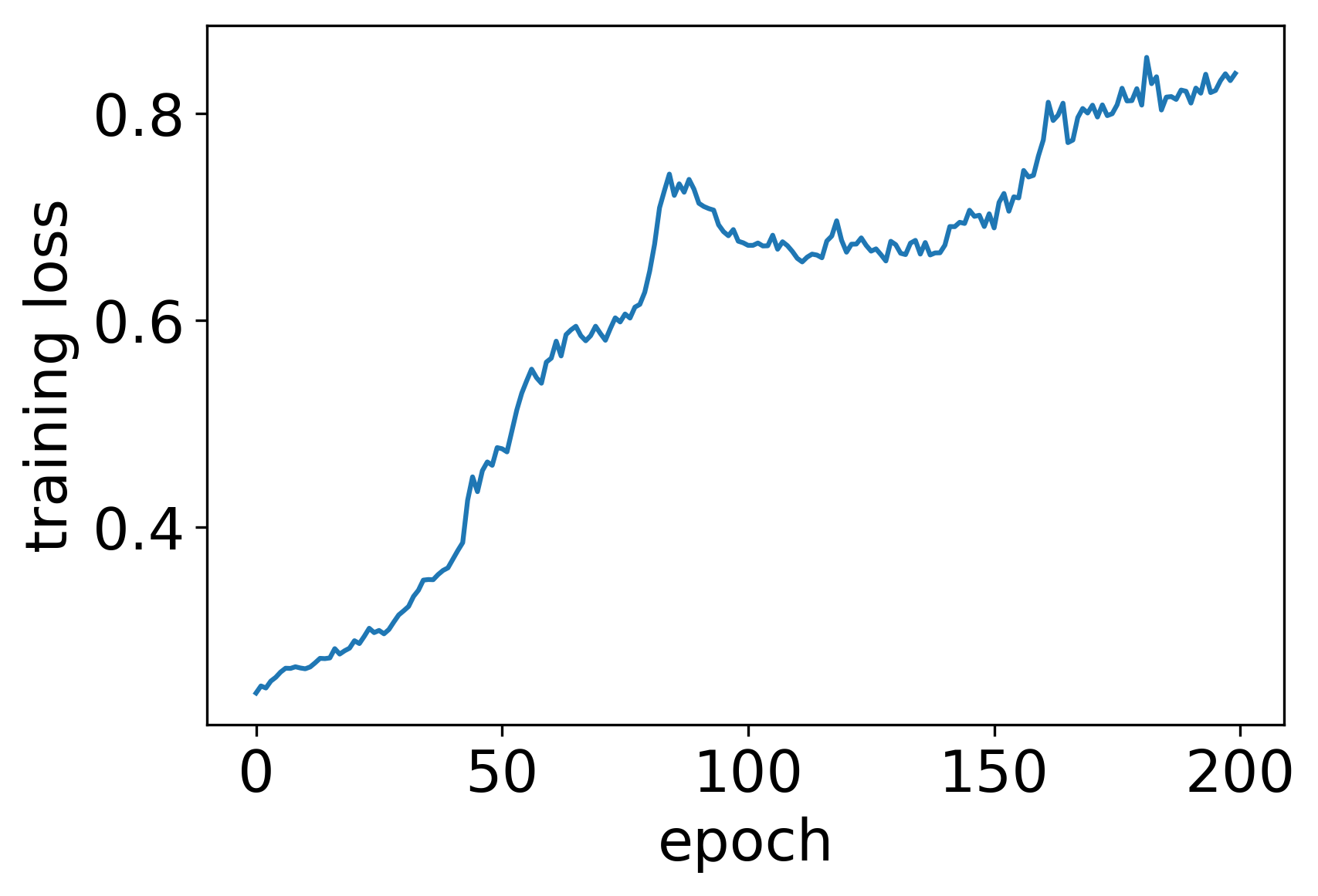}
\end{tabular}
\caption{When initialized with the weights produced by the clipped ReLU STE on ResNet-20 with 2-bit activations (88.38\% validation accuracy), the coarse gradient descent using the ReLU STE with $10^{-5}$ learning rate is not stable there, and both classification and training errors begin to increase.}\label{fig:2bit}
\end{figure}

\subsection*{D. \quad Additional Supporting Lemmas}
\begin{lem}\label{lem:basic}
Let $\z\in\R^n$ be a Gaussian random vector with entries i.i.d. sampled from $\mathcal{N}(0,1)$. Given nonzero vectors $\w, \, \tw\in\R^n$ with the angle $\theta$, we have
\begin{align*}
\E\left[1_{\{\z^\t\w> 0\}}\right] = \frac{1}{2}, \; \E\left[1_{\{\z^\t\w> 0, \, \z^\t\tw> 0\}}\right] = \frac{\pi-\theta}{2\pi},
\end{align*}
and
\begin{equation*}
\E\left[\z 1_{\{\z^\t\w >0\} }   \right] = \frac{1}{\sqrt{2\pi}} \frac{\w}{\|\w\|}, \;
\E\left[ \z 1_{\{\z^\t \w>0, \, \z^\t \tw >0\}}\right] = 
\frac{\cos(\theta/2)}{\sqrt{2\pi}} \frac{\frac{\w}{\|\w\|} + \frac{\tw}{\|\tw\|} }{\left\|\frac{\w}{\|\w\|} + \frac{\tw}{\|\tw\|}  \right\|}, \footnote{Same as in Lemma \ref{lem:cgd}, we redefine $\E\left[ \z 1_{\{\z^\t \w>0, \, \z^\t \tw >0\}}\right]=\0_n$ in the case $\theta(\w,\tw) = \pi$.}
\end{equation*}
\end{lem}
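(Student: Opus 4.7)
The plan is to exploit the rotational invariance of $\z\sim\mathcal{N}(\0_n,\I_n)$ to reduce every expectation to a two-dimensional computation in the plane spanned by $\w$ and $\tw$. For the first identity, I would choose an orthonormal basis whose first vector is $\w/\|\w\|$; since $\z$ has the same distribution in the rotated basis, the event $\{\z^\t\w>0\}$ becomes $\{z_1>0\}$ and $\P(z_1>0)=1/2$ is immediate. For the third identity, in the same adapted basis, the coordinates $z_i$ with $i\geq 2$ are independent of $z_1$ and have mean zero, so only the first component survives: $\E[z_1\,1_{\{z_1>0\}}]=\int_0^\infty z\,e^{-z^2/2}/\sqrt{2\pi}\,\mathrm{d}z=1/\sqrt{2\pi}$, which becomes $\frac{1}{\sqrt{2\pi}}\w/\|\w\|$ back in the original basis.

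For the second and fourth identities, I would pick an orthonormal basis whose first two vectors span the plane of $\w$ and $\tw$, so that $\w/\|\w\|=e_1$ and $\tw/\|\tw\|=\cos\theta\,e_1+\sin\theta\,e_2$. The coordinates $z_i$ for $i\geq 3$ decouple from the indicator and have mean zero, so the problem reduces to the standard $2$D Gaussian restricted to the wedge $W=\{z_1>0,\ z_1\cos\theta+z_2\sin\theta>0\}$. Passing to polar coordinates $(r,\phi)$, the two half-plane conditions become $\phi\in(-\pi/2,\pi/2)$ and $\phi\in(\theta-\pi/2,\theta+\pi/2)$; for $\theta\in[0,\pi]$ their intersection is the arc $(\theta-\pi/2,\pi/2)$, of length $\pi-\theta$. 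Since the $2$D Gaussian factorizes in polar form as $\frac{1}{2\pi}e^{-r^2/2}\,r\,\mathrm{d}r\,\mathrm{d}\phi$, integrating out $r$ leaves a uniform measure on $\phi$ and yields $\P(W)=(\pi-\theta)/(2\pi)$ immediately.

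For the fourth identity, the same decomposition gives $\E[\z\,1_W]$ as the product of a radial integral $\int_0^\infty r^2 e^{-r^2/2}\,\mathrm{d}r=\sqrt{\pi/2}$ (by integration by parts) and an angular integral $\int_{\theta-\pi/2}^{\pi/2}(\cos\phi,\sin\phi)\,\mathrm{d}\phi=(1+\cos\theta,\sin\theta)$, divided by $2\pi$. Using the half-angle identities, $(1+\cos\theta,\sin\theta)=2\cos(\theta/2)(\cos(\theta/2),\sin(\theta/2))$, so the whole expression collapses to $\frac{\cos(\theta/2)}{\sqrt{2\pi}}(\cos(\theta/2),\sin(\theta/2))$. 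Since $\w/\|\w\|+\tw/\|\tw\|=(1+\cos\theta,\sin\theta)$ has Euclidean norm $2\cos(\theta/2)$ for $\theta\in[0,\pi)$, the corresponding unit vector is precisely $(\cos(\theta/2),\sin(\theta/2))$, and the claimed formula follows after unrotating back to the original basis.

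There is no genuine obstacle here: the only work is to align coordinates with the $(\w,\tw)$-plane, use rotational invariance to zero out orthogonal components, and evaluate an elementary $2$D Gaussian integral in polar form. The only fiddly point is the degenerate case $\theta=\pi$, where the normalizing denominator in the statement vanishes and the expression must be defined by convention; fortunately the angular integral also vanishes there (the wedge $W$ degenerates to a null set), so setting $\E[\z\,1_W]=\0_n$ is consistent with the limiting formula.
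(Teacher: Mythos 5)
Your proposal is correct and follows essentially the same route as the paper: reduce to the plane of $\w$ and $\tw$ by rotational invariance, then evaluate the $2$D Gaussian integrals in polar coordinates over the wedge $\phi\in(\theta-\pi/2,\pi/2)$, with the half-angle identities collapsing the angular integral to the stated unit vector. The only cosmetic difference is that the paper cites an external lemma for the single-halfspace identity $\E[\z 1_{\{\z^\t\w>0\}}]=\frac{1}{\sqrt{2\pi}}\frac{\w}{\|\w\|}$ whereas you compute it directly, and your explicit treatment of the degenerate case $\theta=\pi$ is a welcome addition.
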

\begin{proof}[{\bf Proof of Lemma \ref{lem:basic}}]
The third identity was proved in Lemma A.1 of \citep{Lee}. To show the first one, without loss of generality we assume $\w = [w_1,\0_{n-1}^\t]^\t$ with $w_1> 0$, then  $\E\left[1_{\{\z^\t\w> 0\}}\right] = \P(\rz_1>0) = \frac{1}{2}$.

We further assume $\tw = [\tilde{w}_1,\tilde{w}_2,\0_{n-2}^\t]^\t$. It is easy to see that
$$
\E\left[1_{\{\z^\t\w> 0, \, \z^\t\tw>0\}}\right] = \P(\z^\t\w> 0, \, \z^\t\tw>0) = \frac{\pi- \theta}{2\pi}.
$$

To prove the last identity, we use polar representation of two-dimensional Gaussian random variables, where $r$ is the radius and $\phi$ is the angle with $\mathrm{d}\P_r = r \exp(-r^2/2)\mathrm{d}r$
and $\mathrm{d}\P_\phi = \frac{1}{2\pi}\mathrm{d}\phi$. Then
$\E\left[ \rz_i 1_{\{\z^\t \w>0, \, \z^\t \tw >0\}}\right] = 0$ for $i\geq 3$. Moreover,
$$
\E\left[ \rz_1 1_{\{\z^\t \w>0, \, \z^\t \tw >0\}}\right] = \frac{1}{2\pi}\int_{0}^\infty r^2\exp\left(-\frac{r^2}{2}\right) \mathrm{d}r \int_{-\frac{\pi}{2}+\theta}^{\frac{\pi}{2}} \cos(\phi) \mathrm{d}\phi = \frac{1+\cos(\theta)}{2\sqrt{2\pi}}
$$
and
$$
\E\left[ \rz_2 1_{\{\z^\t \w>0, \, \z^\t \tw >0\}}\right] = \frac{1}{2\pi}\int_{0}^\infty r^2\exp\left(-\frac{r^2}{2}\right) \mathrm{d}r \int_{-\frac{\pi}{2}+\theta}^{\frac{\pi}{2}} \sin(\phi) \mathrm{d}\phi = \frac{\sin(\theta)}{2\sqrt{2\pi}}.
$$
Therefore, 
$$
\E\left[ \z 1_{\{\z^\t \w>0, \, \z^\t \tw >0\}}\right] = \frac{\cos(\theta/2)}{\sqrt{2\pi}}[\cos(\theta/2), \sin(\theta/2),\0_{n-2}^\t]^\t = \frac{\cos(\theta/2)}{\sqrt{2\pi}} \frac{\frac{\w}{\|\w\|} + \frac{\tw}{\|\tw\|} }{\left\|\frac{\w}{\|\w\|} + \frac{\tw}{\|\tw\|}  \right\|},
$$ 
where the last equality holds because $\frac{\w}{\|\w\|}$ and $\frac{\frac{\w}{\|\w\|} + \frac{\tw}{\|\tw\|} }{\left\|\frac{\w}{\|\w\|} + \frac{\tw}{\|\tw\|}  \right\|}$ are two unit-normed vectors with angle $\theta/2$.
\end{proof}

\bigskip

\begin{lem}\label{lem:basic_crelu}
Let $\z\in\R^n$ be a Gaussian random vector with entries i.i.d. sampled from $\mathcal{N}(0,1)$. Given nonzero vectors $\w, \, \tw\in\R^n$ with the angle $\theta$, we have
\begin{equation*}
\E\left[\z 1_{\{0<\z^\t\w <1\} } \right] = p(0,\w)  \frac{\w}{\|\w\|}, 
\end{equation*}
\begin{align*}
\E\left[ \z 1_{\{0<\z^\t \w<1, \, \z^\t \tw >0\}}\right] = & \; \left((p(\theta,\w))-\cot(\theta/2)\cdot q(\theta,\w)\right)\frac{\w}{\|\w\|} \\
& \;+ \csc(\theta/2) \cdot q(\theta,\w)
\frac{\frac{\w}{\|\w\|} + \frac{\tw}{\|\tw\|} }{\left\|\frac{\w}{\|\w\|} + \frac{\tw}{\|\tw\|}  \right\|}.
\end{align*}
Here 
$$p(\theta,\w) = \frac{1}{2\pi}\int_{-\frac{\pi}{2}+\theta}^{\frac{\pi}{2}}\cos(\phi)\xi\left(\frac{\sec(\phi)}{\|\w\|}\right)\mathrm{d}\phi, \; q(\theta,\w) = \frac{1}{2\pi}\int_{-\frac{\pi}{2}+\theta}^{\frac{\pi}{2}}\sin(\phi)\xi\left(\frac{\sec(\phi)}{\|\w\|}\right)\mathrm{d}\phi$$ with $\xi(x) = \int_0^x r^2 \exp(-\frac{r^2}{2})\mathrm{d}r$ satisfying $\xi(+\infty) = \sqrt{\frac{\pi}{2}}$. 
In addition, 
$$
p(\pi,\w) = q(0,\w)=q(\pi,\w)=0.  
$$
\end{lem}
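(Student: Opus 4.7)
The plan is to mimic the proof of Lemma \ref{lem:basic} just above, reducing everything to a two-dimensional Gaussian integral via rotational invariance of the standard Gaussian and then evaluating in polar coordinates. By that invariance I would take, without loss of generality, $\w = [\|\w\|, \0_{n-1}^\t]^\t$ and $\tw = \|\tw\|[\cos\theta, \sin\theta, \0_{n-2}^\t]^\t$, so that both indicators depend only on $(\rz_1,\rz_2)$ and the coordinates $\rz_i$ for $i \geq 3$ contribute zero by independence and zero-mean symmetry of $\mathcal{N}(0,1)$.

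For the first identity, parametrizing $(\rz_1,\rz_2) = (r\cos\phi, r\sin\phi)$ with $\mathrm{d}\P_r = r e^{-r^2/2}\mathrm{d}r$ and $\mathrm{d}\P_\phi = \frac{1}{2\pi}\mathrm{d}\phi$, the event $\{0 < \z^\t\w < 1\}$ becomes $\phi \in (-\pi/2, \pi/2)$ with $r \in (0, \sec(\phi)/\|\w\|)$. I would compute $\E[\rz_1 \cdot 1_{\{\cdot\}}]$ as an iterated integral that collapses to $\frac{1}{2\pi}\int_{-\pi/2}^{\pi/2}\cos(\phi)\,\xi(\sec(\phi)/\|\w\|)\,\mathrm{d}\phi = p(0,\w)$, while $\E[\rz_2 \cdot 1_{\{\cdot\}}] = 0$ because the integrand is odd under $\phi \mapsto -\phi$. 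This yields $\E[\z \cdot 1_{\{\cdot\}}] = p(0,\w)\,\w/\|\w\|$.

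For the second identity, the same polar substitution together with $\z^\t\tw = r\|\tw\|\cos(\phi - \theta)$ turns the joint event into $\phi \in (-\pi/2+\theta, \pi/2)$ with $r \in (0, \sec(\phi)/\|\w\|)$. The two nontrivial coordinate expectations then become $\E[\rz_1 \cdot 1_{\{\cdot\}}] = p(\theta,\w)$ and $\E[\rz_2 \cdot 1_{\{\cdot\}}] = q(\theta,\w)$, so $\E[\z\cdot 1_{\{\cdot\}}]$ lies in the $(\rz_1,\rz_2)$-plane with these coordinates. To put the answer in the frame required by the lemma, I would use the half-angle identities $1+\cos\theta = 2\cos^2(\theta/2)$ and $\sin\theta = 2\sin(\theta/2)\cos(\theta/2)$ to identify $\w/\|\w\| = [1,\0_{n-1}^\t]^\t$ and $(\w/\|\w\| + \tw/\|\tw\|)/\|\cdot\| = [\cos(\theta/2), \sin(\theta/2), \0_{n-2}^\t]^\t$, then solve the resulting $2\times 2$ linear system; the unique solution produces the coefficients $\csc(\theta/2)\,q(\theta,\w)$ and $p(\theta,\w) - \cot(\theta/2)\,q(\theta,\w)$ as stated.

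The auxiliary identities follow by inspection: both $p(\pi,\w)$ and $q(\pi,\w)$ are integrals over the empty interval $(\pi/2,\pi/2)$, and $q(0,\w)$ vanishes because its integrand $\sin(\phi)\,\xi(\sec(\phi)/\|\w\|)$ is odd in $\phi$ on the symmetric interval $(-\pi/2,\pi/2)$ (since $\sec$ is even in $\phi$). No step poses a real obstacle; the only care required is at the degenerate endpoints $\theta \in \{0, \pi\}$, where $\w/\|\w\| + \tw/\|\tw\|$ either coincides with $2\w/\|\w\|$ or vanishes. The case $\theta = \pi$ is handled by the footnoted convention (matching the one in Lemma \ref{lem:basic}), while $\theta \to 0$ is consistent with the first identity because $q(0,\w) = 0$ makes the singular factor $\csc(\theta/2)\,q(\theta,\w)$ limit to zero and the remaining term reduces to $p(0,\w)\,\w/\|\w\|$.
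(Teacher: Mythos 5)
Your proposal is correct and follows essentially the same route as the paper's proof: reduce to the $(\rz_1,\rz_2)$-plane by rotational invariance, pass to polar coordinates so the radial integral produces $\xi(\sec(\phi)/\|\w\|)$, read off $\E[\rz_1\cdot 1_{\{\cdot\}}]=p(\theta,\w)$ and $\E[\rz_2\cdot 1_{\{\cdot\}}]=q(\theta,\w)$, and re-express $[p,q]^\t$ in the basis $\{\w/\|\w\|,\ (\w/\|\w\|+\tw/\|\tw\|)/\|\cdot\|\}$ to obtain the coefficients $\csc(\theta/2)\,q$ and $p-\cot(\theta/2)\,q$. The only difference is cosmetic: you spell out the $2\times 2$ linear system where the paper says ``it is easy to check.''
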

\begin{proof}[{\bf Proof of Lemma \ref{lem:basic_crelu}}]
Following the proof of Lemma \ref{lem:basic}, we let $\w = [w_1,\0_{n-1}^\t]^\t$ with $w_1> 0$, $\tw = [\tilde{w}_1,\tilde{w}_2,\0_{n-2}^\t]^\t$, and use the polar representation of two-dimensional Gaussian distribution. Then
$$
\E\left[ \rz_1 1_{\{0<\z^\t \w<1\}}\right] = \frac{1}{2\pi} \int_{-\frac{\pi}{2}}^{\frac{\pi}{2}} \cos(\phi) \int_{0}^{\frac{\sec(\phi)}{\|\w\|}} r^2\exp\left(-\frac{r^2}{2}\right) \mathrm{d}r \mathrm{d}\phi = p(0,\w)
$$
and
$$
\E\left[ \rz_2 1_{\{0<\z^\t \w<1\}}\right] = \frac{1}{2\pi} \int_{-\frac{\pi}{2}}^{\frac{\pi}{2}} \sin(\phi) \int_{0}^{\frac{\sec(\phi)}{\|\w\|}} r^2\exp\left(-\frac{r^2}{2}\right) \mathrm{d}r \mathrm{d}\phi= q(0,\w) =0,
$$
since the integrand above is an odd function in $\phi$.
Moreover, $\E\left[ \rz_i 1_{\{0<\z^\t \w<1\}}\right] = 0$ for $i\geq 3$. So the first identity holds. For the second one, we have
$$
\E\left[ \rz_1 1_{\{0<\z^\t \w<1,\z^\t\tw>0\}}\right] = \frac{1}{2\pi} \int_{-\frac{\pi}{2}+\theta}^{\frac{\pi}{2}} \cos(\phi) \int_{0}^{\frac{\sec(\phi)}{\|\w\|}} r^2\exp\left(-\frac{r^2}{2}\right) \mathrm{d}r \mathrm{d}\phi = p(\theta,\w),
$$
and similarly, $\E\left[ \rz_2 1_{\{0<\z^\t \w<1,\z^\t\tw>0\}}\right] = q(\theta,\w)$. Therefore,
$$
\E[\z\1_{\{0<\z^\t\w<1,\z^\t\tw>0\}}] = [p(\theta,\w), q(\theta,\w),\0_{n-2}^\t]^\t.
$$

Since $\frac{\w}{\|\w\|} = [1,\0^\t_{n-1}]^\t$ and $\frac{\frac{\w}{\|\w\|} + \frac{\tw}{\|\tw\|} }{\left\|\frac{\w}{\|\w\|} + \frac{\tw}{\|\tw\|}  \right\|} = [\cos(\theta/2),\sin(\theta/2),\0^\t_{n-2}]^\t$, it is easy to check the second identity is also true.
\end{proof}

\bigskip

\begin{lem}\label{lem:pq}
Let $p(\theta,w)$ and $q(\theta, \w)$ be defined in Lemma \ref{lem:basic_crelu}. Then for $\theta\in[\frac{\pi}{2},\pi]$, we have
\begin{enumerate}
    \item $p(\theta,\w)\leq q(\theta,\w)$.
    \item $\left(1 - \frac{\theta}{\pi}\right)p(0,\w) \leq q(\theta,\w)$.
\end{enumerate} 
\end{lem}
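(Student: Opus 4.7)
My plan is to prove the two inequalities by different routes. Part 1 is a direct integral inequality in $\phi$, handled by a case split and a reflection substitution; Part 2 will then follow from Part 1 by a concavity-in-$\theta$ argument combined with boundary evaluations.

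For Part 1, set $a := \theta - \pi/2 \in [0, \pi/2]$, so that the claim reduces to showing
\[
\int_a^{\pi/2}(\sin\phi - \cos\phi)\,\xi(\sec\phi/\|\w\|)\,d\phi \;\geq\; 0.
\]
If $a \geq \pi/4$, the integrand is pointwise nonnegative and we are done. Otherwise, split the integral at $\pi/4$ and on the negative piece $[a,\pi/4]$ perform the substitution $u = \pi/2 - \phi$, which sends the interval to $[\pi/4,\pi/2-a]$ and maps $\sin\phi \mapsto \cos u$, $\cos\phi \mapsto \sin u$, $\sec\phi \mapsto \csc u$. Splitting the remaining positive piece $[\pi/4,\pi/2]$ as $[\pi/4,\pi/2-a] \cup [\pi/2-a,\pi/2]$ and recombining, the total becomes
\[
\int_{\pi/4}^{\pi/2-a}(\sin\phi - \cos\phi)\bigl[\xi(\sec\phi/\|\w\|) - \xi(\csc\phi/\|\w\|)\bigr]\,d\phi \;+\; \int_{\pi/2-a}^{\pi/2}(\sin\phi - \cos\phi)\,\xi(\sec\phi/\|\w\|)\,d\phi.
\]
On both subintervals $\phi \geq \pi/4$, so $\sin\phi - \cos\phi \geq 0$; on $[\pi/4,\pi/2]$ we also have $\sec\phi \geq \csc\phi$, and monotonicity of $\xi$ then makes the bracket in the first integrand nonnegative. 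Hence both integrands are $\geq 0$, proving Part 1.

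For Part 2, I will show that $G(\theta) := q(\theta,\w) - (1-\theta/\pi)\,p(0,\w)$ is concave on $[\pi/2,\pi]$ and nonnegative at both endpoints, which forces $G \geq 0$ throughout. Since $(1-\theta/\pi)\,p(0,\w)$ is affine in $\theta$, concavity of $G$ is equivalent to that of $q(\cdot,\w)$. Differentiating under the integral and using $\sin(\theta-\pi/2) = -\cos\theta$ gives $\partial_\theta q = \tfrac{\cos\theta}{2\pi}\,\xi(\sec(\theta-\pi/2)/\|\w\|)$; a second differentiation, combined with $\sec(\theta-\pi/2)\tan(\theta-\pi/2) = -\cos\theta/\sin^2\theta$, yields
\[
\partial_\theta^{\,2} q \;=\; -\tfrac{\sin\theta}{2\pi}\,\xi(\sec(\theta-\pi/2)/\|\w\|) \;-\; \tfrac{\cos^2\theta}{2\pi\|\w\|\sin^2\theta}\,\xi'(\sec(\theta-\pi/2)/\|\w\|) \;\leq\; 0
\]
on $[\pi/2,\pi]$, since $\sin\theta \geq 0$, $\cos^2\theta \geq 0$, and $\xi,\xi' \geq 0$. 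For the endpoints: $q(\pi,\w)=0$ gives $G(\pi,\w)=0$; and the evenness of $\cos\phi\cdot\xi(\sec\phi/\|\w\|)$ on $[-\pi/2,\pi/2]$ gives $p(\pi/2,\w)=p(0,\w)/2$, so Part 1 at $\theta=\pi/2$ implies $q(\pi/2,\w) \geq p(\pi/2,\w) = p(0,\w)/2$, i.e.\ $G(\pi/2,\w)\geq 0$. Concavity then yields $G(\theta) \geq \tfrac{2(\pi-\theta)}{\pi}\,G(\pi/2,\w) \geq 0$ for all $\theta\in[\pi/2,\pi]$.

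The main obstacle I anticipate is the interval-bookkeeping in Part 1: arranging the reflection substitution so that the negative portion of the integrand is absorbed exactly into a subinterval of the positive portion, with the residual bracket $\xi(\sec\phi/\|\w\|) - \xi(\csc\phi/\|\w\|)$ carrying the right sign. Once Part 1 is established, the Part 2 argument is largely routine, with the only delicate ingredient being the uniform sign check on $\partial_\theta^{\,2} q$ across $[\pi/2,\pi]$.
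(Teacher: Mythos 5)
Your proof is correct, and both parts take routes that differ in their details from the paper's. For Part 1 the paper writes $\cos\phi=\sin(\pi/2-\phi)\le\sin(\theta-\phi)$ on the integration range (using $\theta\ge\pi/2$) and then invokes the rearrangement inequality for the reflection $\phi\mapsto\theta-\phi$, exploiting that $\sin\phi$ and $\xi(\sec\phi/\|\w\|)$ are both increasing on $[0,\pi/2]$; your reflection about $\pi/4$ instead pairs $\sec\phi$ with $\csc\phi$ and uses only the monotonicity of $\xi$, which makes the argument more self-contained (the continuous rearrangement step in the paper is stated rather tersely). For Part 2 the paper simply observes that $\sin\phi\,\xi(\sec\phi/\|\w\|)$ is increasing on $[0,\pi/2]$, so its average over the right sub-interval $[\theta-\pi/2,\pi/2]$ dominates its average over all of $[0,\pi/2]$; combined with Part 1 at $\theta=\pi/2$ (which, as you note, gives $q(\pi/2,\w)\ge p(0,\w)/2$) this finishes in two lines. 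Your concavity argument is the same fact seen from a different angle: writing $Q(\theta)=2\pi q(\theta,\w)$, one has $Q''\le 0$ exactly because the integrand is increasing in $\phi$, and $Q$ lying above the chord through $\bigl(\pi/2,Q(\pi/2)\bigr)$ and $(\pi,0)$ is precisely the averaging inequality $\frac{Q(\theta)}{\pi-\theta}\ge\frac{Q(\pi/2)}{\pi/2}$; both routes reduce to $q(\theta,\w)\ge\frac{2(\pi-\theta)}{\pi}q(\pi/2,\w)\ge\bigl(1-\frac{\theta}{\pi}\bigr)p(0,\w)$. Your explicit computation of $\partial_\theta^2 q$ costs a derivative more than the paper's one-line monotonicity observation, and the apparent singularity of $\cos^2\theta/\sin^2\theta$ at $\theta=\pi$ is harmless (the Gaussian decay of $\xi'$ kills it, and in any case a second derivative tending to $-\infty$ still certifies concavity), so nothing is missing.
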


\begin{proof}[{\bf Proof of Lemma \ref{lem:pq}}]
1. Let $\theta\in[\frac{\pi}{2},\pi]$, since $\xi\geq 0$,
\begin{align*}
    p(\theta,\w) = & \; \frac{1}{2\pi}\int_{\theta-\frac{\pi}{2}}^{\frac{\pi}{2}}\cos(\phi)\xi\left(\frac{\sec(\phi)}{\|\w\|}\right)\mathrm{d}\phi
    = \frac{1}{2\pi}\int_{\theta-\frac{\pi}{2}}^{\frac{\pi}{2}}\sin\left(\frac{\pi}{2} - \phi\right)\xi\left(\frac{\sec(\phi)}{\|\w\|}\right)\mathrm{d}\phi \\
    \leq & \; \frac{1}{2\pi}\int_{\theta-\frac{\pi}{2}}^{\frac{\pi}{2}}\sin(\theta - \phi)\xi\left(\frac{\sec(\phi)}{\|\w\|}\right)\mathrm{d}\phi
    \leq \frac{1}{2\pi}\int_{\theta-\frac{\pi}{2}}^{\frac{\pi}{2}}\sin(\phi)\xi\left(\frac{\sec(\phi)}{\|\w\|}\right)\mathrm{d}\phi=q(\theta,\w),
\end{align*}
where the last inequality is due to the rearrangement inequality since both $\sin(\phi)$ and $\xi\left(\frac{\sec(\phi)}{\|\w\|}\right)$ are increasing in $\phi$ on $[0,\frac{\pi}{2}]$.\\

2. Since $\cos(\phi)\xi\left(\frac{\sec(\phi)}{\|\w\|}\right)$ is even, we have
\begin{align*}
   & \;  \frac{1}{\pi}\int_{-\frac{\pi}{2}}^{\frac{\pi}{2}}\cos(\phi)\xi\left(\frac{\sec(\phi)}{\|\w\|}\right)\mathrm{d}\phi
    =  \frac{2}{\pi}\int_{0}^{\frac{\pi}{2}}\cos(\phi)\xi\left(\frac{\sec(\phi)}{\|\w\|}\right)\mathrm{d}\phi \\
    \leq & \;
    \frac{2}{\pi}\int_{0}^{\frac{\pi}{2}}\sin(\phi)\xi\left(\frac{\sec(\phi)}{\|\w\|}\right)\mathrm{d}\phi
    \leq
    \frac{1}{\pi-\theta}\int_{\theta-\frac{\pi}{2}}^{\frac{\pi}{2}}\sin(\phi)\xi\left(\frac{\sec(\phi)}{\|\w\|}\right)\mathrm{d}\phi.
\end{align*}
The first inequality is due to part 1 which gives $p(\pi/2,\w)\leq q(\pi/2,\w)$, whereas the second one holds because $\sin(\phi)\xi\left(\frac{\sec(\phi)}{\|\w\|}\right)$ is increasing on $[0,\frac{\pi}{2}]$.
\end{proof}

\bigskip

\begin{lem}\label{lem:angle}
For any nonzero vectors $\w$ and $\tw$ with $\|\tw\|\geq  \|\w\| = c>0$, we have 
\begin{enumerate}
\item $|\theta(\w,\w^*)-\theta(\tw,\w^*)|\leq \frac{\pi}{2c}\|\w - \tw\|$.
\item $\left\|\frac{1}{\|\w\|}
\frac{\Big(\I_n - \frac{\w\w^\t}{\|\w\|^2}\Big)\w^*}{\Big\| \Big(\I_n - \frac{\w\w^\t}{\|\w\|^2}\Big)\w^*\Big\|}  -
\frac{1}{\|\tw\|}
\frac{\Big(\I_n - \frac{\tw\tw^\t}{\|\tw\|^2}\Big)\w^*}{\Big\| \Big(\I_n - \frac{\tw\tw^\t}{\|\tw\|^2}\Big)\w^*\Big\|} \right\|  \leq \frac{1}{c^2}\|\w - \tw\|$.
\end{enumerate}
\end{lem}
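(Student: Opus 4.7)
The plan is to reduce both parts to inequalities on the unit sphere combined with elementary vector-norm manipulations.

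For part 1, I would chain three ingredients. First, the angle is scale-invariant, so $\theta(\w,\w^*)=\theta(\w/\|\w\|,\w^*)$ and similarly for $\tw$; the $1$-Lipschitz continuity of the angle map on the unit sphere (the triangle inequality for the geodesic distance) yields
\begin{equation*}
\bigl|\theta(\w,\w^*)-\theta(\tw,\w^*)\bigr| \le \theta(\w,\tw).
\end{equation*}
Second, the chord-arc identity $\left\|\frac{\w}{\|\w\|}-\frac{\tw}{\|\tw\|}\right\| = 2\sin(\theta(\w,\tw)/2)$ combined with $\arcsin(x)\le\frac{\pi}{2}x$ on $[0,1]$ gives $\theta(\w,\tw)\le\frac{\pi}{2}\left\|\frac{\w}{\|\w\|}-\frac{\tw}{\|\tw\|}\right\|$. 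Third, applying the law of cosines to the triangle with vertices $\0,\w,\tw$ gives
\begin{equation*}
\|\w-\tw\|^2 = (\|\w\|-\|\tw\|)^2 + 4\|\w\|\|\tw\|\sin^2(\theta(\w,\tw)/2) \ge 4c^2\sin^2(\theta(\w,\tw)/2),
\end{equation*}
using $\|\w\|=c$ and $\|\tw\|\ge c$, so $\left\|\frac{\w}{\|\w\|}-\frac{\tw}{\|\tw\|}\right\|\le\|\w-\tw\|/c$. Chaining the three bounds delivers the factor $\pi/(2c)$.

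For part 2, set $u(\w):=(\I_n-\w\w^\t/\|\w\|^2)\w^*/\bigl\|(\I_n-\w\w^\t/\|\w\|^2)\w^*\bigr\|$, so the quantity to bound is $\left\|\frac{u(\w)}{\|\w\|}-\frac{u(\tw)}{\|\tw\|}\right\|$. I would decompose
\begin{equation*}
\frac{u(\w)}{\|\w\|}-\frac{u(\tw)}{\|\tw\|} = \frac{u(\w)-u(\tw)}{\|\w\|} + u(\tw)\cdot\frac{\|\tw\|-\|\w\|}{\|\w\|\|\tw\|}.
\end{equation*}
The second piece has norm at most $\bigl|\|\tw\|-\|\w\|\bigr|/(\|\w\|\|\tw\|)\le\|\w-\tw\|/c^2$ by the reverse triangle inequality together with $\|\w\|,\|\tw\|\ge c$. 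For the first piece I would use the explicit form $u(\w)=(\w^*-(\w^\t\w^*/\|\w\|^2)\w)/\sin\theta(\w,\w^*)$ and control $\|u(\w)-u(\tw)\|\le\|\w-\tw\|/c$ via the same chord-arc and cosine-law machinery from part 1, exploiting that $u(\w)$ lies in the two-dimensional plane $\mathrm{span}\{\w,\w^*\}$ and is perpendicular to $\w$.

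The main obstacle is the Lipschitz bound on $u(\w)$: while $\w\mapsto u(\w)$ is smooth away from $\w$ being parallel to $\pm\w^*$, its modulus of continuity degenerates as $\theta(\w,\w^*)$ approaches $0$ or $\pi$. A careful argument---for example, a mean-value step along a path from $\w$ to $\tw$ staying in the differentiable region, or an explicit two-dimensional calculation within $\mathrm{span}\{\w,\tw,\w^*\}$---will be needed to recover the $1/c^2$ constant without spurious factors.
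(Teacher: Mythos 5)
Your part 1 is correct and follows essentially the paper's route: both arguments reduce to the chain $|\theta(\w,\w^*)-\theta(\tw,\w^*)|\leq \theta(\w,\tw)\leq \frac{\pi}{2}\big\|\frac{\w}{\|\w\|}-\frac{\tw}{\|\tw\|}\big\|\leq \frac{\pi}{2c}\|\w-\tw\|$. The only difference is the last link: the paper writes $\tw-\w = \big(1-\frac{c}{\|\tw\|}\big)\tw - \big(\w-\frac{c\tw}{\|\tw\|}\big)$ and drops a cross term that is nonnegative by Cauchy--Schwarz, whereas you use the law of cosines, $\|\w-\tw\|^2=(\|\w\|-\|\tw\|)^2+4\|\w\|\|\tw\|\sin^2(\theta(\w,\tw)/2)$. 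Both yield $c\big\|\frac{\w}{\|\w\|}-\frac{\tw}{\|\tw\|}\big\|\leq\|\w-\tw\|$; your version is, if anything, a bit more transparent about where $\|\tw\|\geq c$ is used.

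Part 2 is where the genuine gap lies, and the obstacle you flag at the end is not a technicality you can engineer around: the Lipschitz bound $\|u(\w)-u(\tw)\|\leq\frac{1}{c}\|\w-\tw\|$ that your decomposition needs is false. Take $n=3$, $\w^*=e_1$, $\w=\cos\alpha\,e_1+\sin\alpha\,e_2$, $\tw=\cos\alpha\,e_1+\sin\alpha\,e_3$, so $c=\|\w\|=\|\tw\|=1$. Then $u(\w)=(\sin\alpha,-\cos\alpha,0)$ and $u(\tw)=(\sin\alpha,0,-\cos\alpha)$, so $\|u(\w)-u(\tw)\|=\sqrt{2}\cos\alpha$ while $\|\w-\tw\|=\sqrt{2}\sin\alpha$; the ratio is $\cot\alpha\to\infty$ as $\alpha\to 0$. (Even in regimes where $u$ is locally Lipschitz, your two-term split would at best deliver $2/c^2$ rather than $1/c^2$, as you anticipated.) The paper's proof avoids estimating $u(\w)-u(\tw)$ altogether: it asserts that the angle between $u(\w)$ and $u(\tw)$ equals $\theta(\w,\tw)$, so that $\big\langle \frac{u(\w)}{\|\w\|},\frac{u(\tw)}{\|\tw\|}\big\rangle=\frac{\langle\w,\tw\rangle}{\|\w\|^2\|\tw\|^2}$, and the whole left-hand side then collapses via the inversion identity to $\big\|\frac{\w}{\|\w\|^2}-\frac{\tw}{\|\tw\|^2}\big\|=\frac{\|\w-\tw\|}{\|\w\|\|\tw\|}\leq\frac{1}{c^2}\|\w-\tw\|$. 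You should know, however, that this angle identity is valid only when $\w^*\in\mathrm{span}\{\w,\tw\}$: in the configuration above the two projections tend to $-e_2$ and $-e_3$ while $\theta(\w,\tw)\to 0$, and indeed the stated inequality itself fails there ($\sqrt{2}\cos\alpha\not\leq\sqrt{2}\sin\alpha$ for small $\alpha$). So the degeneracy you identified near $\w\parallel\pm\w^*$ is intrinsic to the statement, not to your method; any complete argument must either restrict to the coplanar/two-dimensional setting or replace part 2 by a weaker estimate that still suffices for the local Lipschitz bound on $\frac{\partial f}{\partial\w}$ in Lemma \ref{lem:lipschitz}, where this inequality is consumed.
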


\begin{proof}[{\bf Proof of Lemma \ref{lem:angle}}]
1. Since by Cauchy-Schwarz inequality,
$$
\left\la \tw , \w - \frac{c\tw}{\|\tw\|}\right \ra = \tw^\t \w - c\|\tw\| \leq 0,
$$
we have
\begin{align}\label{eq:4}
\|\tw - \w\|^2 = & \; \left\|\left( 1-\frac{c}{\|\tw\|}  \right)\tw  - \left(\w -\frac{c\tw}{\|\tw\|}  \right) \right\|^2 \geq \left\| \left( 1-\frac{c}{\|\tw\|}  \right)\tw \right\|^2 + \left\| \w -\frac{c\tw}{\|\tw\|} \right\|^2 \notag \\
\geq & \; \left\| \w -\frac{c\tw}{\|\tw\|} \right\|^2  = c^2 \left\|\frac{\w}{\|\w\|} - \frac{\tw}{\|\tw\|}\right\|^2.
\end{align}
Therefore,
\begin{align*}
& \; |\theta(\w,\w^*)-\theta(\tw,\w^*)| \leq \theta(\w,\tw) = \theta\left(\frac{\w}{\|\w\|},\frac{\tw}{\|\tw\|}\right) \\
\leq & \; \pi\sin \left(\frac{\theta\left(\frac{\w}{\|\w\|},\frac{\tw}{\|\tw\|}\right)}{2}\right) = \frac{\pi}{2}\left\|\frac{\w}{\|\w\|} - \frac{\tw}{\|\tw\|}\right\|
\leq \frac{\pi}{2c}\|\w - \tw\|,
\end{align*}
where we used the fact $\sin(x)\geq \frac{2x}{\pi}$ for $x\in [0,\frac{\pi}{2}]$ and the estimate in (\ref{eq:4}).\\

2. Since $\Big(\I_n - \frac{\w\w^\t}{\|\w\|^2}\Big)\w^*$ is the projection of $\w^*$ onto the complement space of $\w$, and likewise for $\Big(\I_n - \frac{\tw\tw^\t}{\|\tw\|^2}\Big)\w^*$, the angle between  $\Big(\I_n - \frac{\w\w^\t}{\|\w\|^2}\Big)\w^*$ and $\Big(\I_n - \frac{\tw\tw^\t}{\|\tw\|^2}\Big)\w^*$ is equal to the angle between $\w$ and $\tw$. Therefore,
\begin{align*}
\left\la \frac{\Big(\I_n - \frac{\w\w^\t}{\|\w\|^2}\Big)\w^*}{\Big\| \Big(\I_n - \frac{\w\w^\t}{\|\w\|^2}\Big)\w^*\Big\|} , 
\frac{\Big(\I_n - \frac{\tw\tw^\t}{\|\tw\|^2}\Big)\w^*}{\Big\| \Big(\I_n - \frac{\tw\tw^\t}{\|\tw\|^2}\Big)\w^*\Big\|} \right\ra = \left\la \frac{\w}{\|\w\|} , \frac{\tw}{\|\tw\|} \right\ra,
\end{align*}
and thus
\begin{align*}
& \, \left\|\frac{1}{\|\w\|}
\frac{\Big(\I_n - \frac{\w\w^\t}{\|\w\|^2}\Big)\w^*}{\Big\| \Big(\I_n - \frac{\w\w^\t}{\|\w\|^2}\Big)\w^*\Big\|}  -
\frac{1}{\|\tw\|}
\frac{\Big(\I_n - \frac{\tw\tw^\t}{\|\tw\|^2}\Big)\w^*}{\Big\| \Big(\I_n - \frac{\tw\tw^\t}{\|\tw\|^2}\Big)\w^*\Big\|} \right\|  \\
= & \,  \left\| \frac{\w}{\|\w\|^2} - \frac{\tw}{\|\tw\|^2} \right\| 
= \frac{\|\w - \tw \|}{\|\w\|\|\tw\|}\leq \frac{1}{c^2}\|\w - \tw\|.
\end{align*}
The second equality above holds because
$$ 
\left\| \frac{\w}{\|\w\|^2} - \frac{\tw}{\|\tw\|^2} \right\|^2 
= \frac{1}{\|\w\|^2} + \frac{1}{\|\tw\|^2} - \frac{2\la \w, \tw \ra}{\|\w\|^2 \|\tw\|^2} = \frac{\|\w - \tw\|^2}{\|\w\|^2 \|\tw\|^2}.
$$
\end{proof}

\subsection*{E. \quad Main Proofs}
\begin{customlem}{1}
If $\w\neq \0_n$, the population loss $f(\v,\w)$ is given by
\begin{align*}\label{eq:loss}
\frac{1}{8}\Bigg[\v^\t\big(\I_m + \1_m\1_m^\t \big) \v -2\v^\t \left( \left(1-\frac{2}{\pi}\theta(\w,\w^*) \right)\I_m + \1_m\1_m^\t \right)\v^* + (\v^*)^\t \big(\I_m + \1_m\1_m^\t \big)\v^* \Bigg].    
\end{align*}
In addition, $f(\v,\w) = \frac{1}{8}(\v^*)^\t \big(\I_m + \1_m\1_m^\t \big)\v^*$ for $\w = \0_n$.
\end{customlem}
\begin{proof}[{\bf Proof of Lemma \ref{lem:obj}}]
We notice that
\begin{align*}
f(\v,\w) = & \; \frac{1}{2}\big(\v^\t \E_\Z[\sigma(\Z\w)\sigma(\Z\w)^\t]\v - 2\v^\t\E_\Z[\sigma(\Z\w)\sigma(\Z\w^*)^\t]\v^* \\
& \; +(\v^*)^\t\E_\Z[\sigma(\Z\w^*)\sigma(\Z\w^*)^\t]\v^*\big).
\end{align*}
Let $\Z_i^\t$ be the $i^{\mathrm{th}}$ row vector of $\Z$. Since $\w\neq\0_n$, using Lemma \ref{lem:basic}, we have
$$
\E_\Z\left[\sigma(\Z\w)\sigma(\Z\w)^\t\right]_{ii} = \E\left[\sigma(\Z_i^\t\w)\sigma(\Z_i^\t\w)\right] = \E\left[1_{\{\Z_i^\t\w> 0\}}\right] = \frac{1}{2},
$$
and for $i\neq j$,
$$
\E_\Z\left[\sigma(\Z\w)\sigma(\Z\w)^\t\right]_{ij} = \E\left[\sigma(\Z_i^\t\w)\sigma(\Z_j^\t\w)\right] = \E\left[1_{\{\Z_i^\t\w> 0\}}\right]\E\left[1_{\{\Z_j^\t\w> 0\}}\right] = \frac{1}{4}.
$$
Therefore, $\E_\Z\left[\sigma(\Z\w)\sigma(\Z\w)^\t\right]=\E_\Z\left[\sigma(\Z\w^*)\sigma(\Z\w^*)^\t\right] = \frac{1}{4}\left(\I_m + \1_m\1_m^\t\right)$. 
Furthermore, 
$$
\E_\Z\left[\sigma(\Z\w)\sigma(\Z\w^*)^\t\right]_{ii} = \E\left[1_{\{\Z_i^\t\w> 0, \Z_i^\t\w^*> 0\}}\right] = \frac{\pi-\theta(\w,\w^*)}{2\pi},
$$
and $\E_\Z\left[\sigma(\Z\w)\sigma(\Z\w^*)^\t\right]_{ij}=\frac{1}{4}$. 
So,
$$
\E_\Z\left[\sigma(\Z\w)\sigma(\Z\w^*)^\t\right] = \frac{1}{4}\left(\left(1-\frac{2\theta(\w,\w^*)}{\pi}\right)\I_m + \1_m\1_m^\t \right).
$$ 
Then it is easy to validate the first claim. Moreover, if $\w = \0_n$, then $$
f(\v,\w) = \frac{1}{2}(\v^*)^\t\E_\Z[\sigma(\Z\w^*)\sigma(\Z\w^*)^\t]\v^* = \frac{1}{8}(\v^*)^\t \big(\I_m + \1_m\1_m^\t \big)\v^*.
$$
\end{proof}

\bigskip

\begin{customlem}{2}
If $\w\neq\0_n$ and $\theta(\w,\w^*)\in(0, \pi)$, the partial gradients of $f(\v,\w)$ w.r.t. $\v$ and $\w$ are
\begin{equation*}
\frac{\partial f}{\partial \v}(\v,\w) = \frac{1}{4}\big(\I_m + \1_m\1_m^\t \big) \v - \frac{1}{4}\left( \left(1-\frac{2}{\pi}\theta(\w,\w^*) \right)\I_m + \1_m\1_m^\t \right)\v^* 
\end{equation*}
and
\begin{equation*}
\frac{\partial f}{\partial \w}(\v,\w) = -\frac{\v^\t\v^*}{2\pi\|\w\|}
\frac{\Big(\I_n - \frac{\w\w^\t}{\|\w\|^2}\Big)\w^*}{\Big\| \Big(\I_n - \frac{\w\w^\t}{\|\w\|^2}\Big)\w^*\Big\|},
\end{equation*}
respectively.
\end{customlem}

\begin{proof}[{\bf Proof of Lemma \ref{lem:grad}}]
The first claim is trivial, and we only show the second one. Since $\theta(\w,\w^*) = \arccos\left(\frac{\w^\t\w^*}{\|\w\|}\right)$ is differentiable w.r.t. $\w$ at $\theta(\w,\w^*)\in(0,\pi)$, we have
\begin{align*}
    \frac{\partial f}{\partial \w}(\v,\w) & \, = \frac{\v^\t\v^*}{2\pi}\frac{\partial \theta}{\partial \w}(\w,\w^*) = -\frac{\v^\t\v^*}{2\pi}\frac{\|\w\|^2\w^* - (\w^\t\w^*)\w}{\|\w\|^3\sqrt{1-\frac{(\w^\t\w^*)^2}{\|\w\|^2}}} \\
    & \, = -\frac{\v^\t\v^*}{2\pi\|\w\|}\frac{\Big(\I_n - \frac{\w\w^\t}{\|\w\|^2}\Big)\w^*}{\Big\| \Big(\I_n - \frac{\w\w^\t}{\|\w\|^2}\Big)\w^*\Big\|}.
\end{align*}
\end{proof}

\bigskip

\begin{customprop}{1}
If the true parameter $\v^*$ satisfies $(\1_m^\t\v^*)^2 < \frac{m+1}{2}\|\v^*\|^2$, then 
\begin{align*}
\bigg\{(\v,\w):  \v = (\I_m + \1_m\1_m^\t)^{-1} & \left(\frac{-(\1_m^\t\v^*)^2}{(m+1)\|\v^*\|^2- (\1_m^\t\v^*)^2}\I_m + \1_m\1_m^\t\right)\v^*, \notag\\
& \qquad \qquad \qquad \theta(\w,\w^*) = \frac{\pi}{2}\frac{(m+1)\|\v^*\|^2}{(m+1)\|\v^*\|^2 - (\1_m^\t\v^*)^2} \bigg\}
\end{align*}
give the saddle points obeying (\ref{eq:critical}), and $\{(\v,\w): \theta(\w,\w^*) = \pi, \;\v = \big(\I_m + \1_m\1_m^\t \big)^{-1}( \1_m\1_m^\t -\I_m )\v^*\}$ are the spurious local minimizers. Otherwise, the model (\ref{eq:model}) has no saddle points  or spurious local minimizers.    
\end{customprop}

\begin{proof}[{\bf Proof of Proposition {\ref{prop}}}]
Suppose $\v^\t \v^*=0$ and $\frac{\partial f}{\partial \v}(\v,\w) = \0$, then by Lemma \ref{lem:obj},
\begin{equation}\label{eq:3}
0 = \v^\t\v^* = (\v^*)^\t(\I_m + \1_m\1_m^\t)^{-1}\left( \left(1- \frac{2}{\pi}\theta(\w,\w^*)\right)\I_m + \1_m\1_m^\t \right)\v^*.
\end{equation}
From (\ref{eq:3}) it follows that
\begin{equation}\label{eq:5}
\frac{2}{\pi}\theta(\w,\w^*) (\v^*)^\t (\I_m + \1_m\1_m^\t)^{-1} \v^*= (\v^*)^\t(\I_m + \1_m\1_m^\t)^{-1}\left( \I_m + \1_m\1_m^\t \right)\v^* = \|\v^*\|^2.
\end{equation}
On the other hand, from (\ref{eq:3}) it also follows that
$$
\left(\frac{2}{\pi}\theta(\w,\w^*)-1\right) (\v^*)^\t (\I_m + \1_m\1_m^\t)^{-1} \v^* = (\v^*)^\t(\I_m + \1_m\1_m^\t)^{-1} \1_m(\1_m^\t \v^*) = \frac{(\1^\t\v^*)^2}{m+1},
$$
where we used $(\I_m + \1_m\1_m^\t) \1_m = (m+1)\1_m$.
Taking the difference of the two equalities above gives
\begin{equation}\label{ineq:1}
 (\v^*)^\t(\I_m + \1_m\1_m^\t)^{-1}\v^* =  \|\v^*\|^2 - \frac{(\1_m^\t\v^*)^2}{m+1}.    
\end{equation}
By (\ref{eq:5}), we have 
$\theta(\w,\w^*) = \frac{\pi}{2}\frac{(m+1)\|\v^*\|^2}{(m+1)\|\v^*\|^2 - (\1_m^\t\v^*)^2}$, which requires
$$
\frac{\pi}{2}\frac{(m+1)\|\v^*\|^2}{(m+1)\|\v^*\|^2 - (\1_m^\t\v^*)^2}<\pi, \;
\mbox{or equivalently, } \; (\1_m^\t\v^*)^2 < \frac{m+1}{2}\|\v^*\|^2.
$$
Furthermore, since $\frac{\partial f}{\partial \v}(\v,\w) = 0$, we have 
\begin{align*}
   \v = & \;  (\I_m + \1_m\1_m^\t)^{-1}\left( \left(1- \frac{2}{\pi}\theta(\w,\w^*)\right)\I_m + \1_m\1_m^\t \right)\v^* \\
   = & \; (\I_m + \1_m\1_m^\t)^{-1} \left(\frac{-(\1_m^\t\v^*)^2}{(m+1)\|\v^*\|^2- (\1_m^\t\v^*)^2}\I_m + \1_m\1_m^\t\right)\v^*.  
\end{align*}
Next, we check the local optimality of the stationary points. By ignoring the scaling and constant terms, we rewrite the objective function as 
$$\tilde{f}(\v,\theta):= \v^\t\big(\I_m + \1_m\1_m^\t \big) \v -2\v^\t \left( \left(1-\frac{2}{\pi}\theta \right)\I_m + \1_m\1_m^\t \right)\v^*, \; \mbox{for } \theta\in[0,\pi].$$
It is easy to check that its Hessian matrix 
\begin{equation*}
  \nabla^2 \tilde{f}(\v,\theta) =   
  \begin{bmatrix}
  2(\I_m + \1_m\1_m^\t) & \frac{4}{\pi}\v^* \\
  \frac{4}{\pi}(\v^*)^\t & 0
  \end{bmatrix}
\end{equation*}
is indefinite. Therefore, the stationary points are saddle points. 

Moreover, if $ (\1_m^\t\v^*)^2 < \frac{m+1}{2}\|\v^*\|^2$, at the point $(\v, \theta) = ((\I_m + \1_m\1_m^\t)^{-1}(\1_m\1_m^\t - \I_m)\v^*,\pi)$, 
we have 
\begin{align}\label{ineq:2}
\v^\t\v^* & \; = (\v^*)^\t(\I_m + \1_m\1_m^\t)^{-1}(\1_m\1_m^\t - \I_m)\v^* \notag \\
& \; = \|\v^*\|^2 - 2(\v^*)^\t(\I_m + \1_m\1_m^\t)^{-1}\v^*  = \frac{2(\1_m^\t \v^*)^2}{m+1} - \|\v^*\|^2 < 0 ,
\end{align}
where we used (\ref{ineq:1}) in the last identity. We consider an arbitrary point $(\v+\Delta \v, \pi+\Delta \theta)$ in the neighborhood of $(\v,\pi)$ with $\Delta\theta\leq0$. The perturbed objective value is
\begin{align*}
 \tilde{f}(\v+\Delta \v, \pi+\Delta \theta) = & \; (\v+\Delta\v)^\t\big(\I_m + \1_m\1_m^\t \big)(\v+\Delta\v) -2(\v+\Delta\v)^\t \left( \1_m\1_m^\t -\I_m \right)\v^* \\
 & \; + \frac{2\Delta\theta}{\pi}(\v+\Delta\v)^\t \v^*.   
\end{align*}
On the right hand side, since $\v = (\I_m + \1_m\1_m^\t)^{-1}(\1_m\1_m^\t - \I_m)\v^*$ is the unique minimizer to the quadratic function $\tilde{f}(\v,\pi)$, we have if $\Delta \v\neq\0_m$,
$$
(\v+\Delta\v)^\t\big(\I_m + \1_m\1_m^\t \big)(\v+\Delta\v) -2(\v+\Delta\v)^\t \left( \1_m\1_m^\t -\I_m \right)\v^* > \tilde{f}(\v, \pi).
$$
Moreover, for sufficiently small $\|\Delta\v\|$, it holds that $\Delta\theta \cdot(\v+\Delta\v)^\t\v^*>0$ for $\Delta\theta<0$ because of (\ref{ineq:2}). Therefore, $\tilde{f}(\v+\Delta \v, \pi+\Delta \theta)>\tilde{f}(\v,\pi)$ whenever $(\Delta\v,\Delta\theta)$ is small and non-zero, and $((\I_m + \1_m\1_m^\t)^{-1}(\1_m\1_m^\t - \I_m)\v^*,\pi)$ is a local minimizer of $\tilde{f}$.


To prove the second claim, suppose $(\1_m^\t\v^*)^2 \geq \frac{m+1}{2}\|\v^*\|^2$, then either $\frac{\partial f}{\partial \w}(\v,\w)$ does not exist, or $\frac{\partial f}{\partial \v}(\v,\w)$ and $\frac{\partial f}{\partial \w}(\v,\w)$ do not vanish simultaneously, and thus there is no stationary point. 

At the point $(\v, \theta) = ((\I_m + \1_m\1_m^\t)^{-1}(\1_m\1_m^\t - \I_m)\v^*,\pi)$, we have 
$$
\v^\t\v^* = \frac{2(\1_m^\t \v^*)^2}{m+1} - \|\v^*\|^2 \geq 0.
$$
If $\v^\t\v^* >0$, since $\nabla \tilde{f}(\v,\theta) = \frac{1}{4}[\0_m^\t, \frac{2}{\pi}\v^\t\v^*]^\t$, a small perturbation $[\0_m^\t, \Delta \theta ]^\t$ with $\Delta\theta< 0$ will give a strictly decreased objective value, so $(\v, \theta) = \left((\I_m + \1_m\1_m^\t)^{-1}(\1_m\1_m^\t - \I_m)\v^*,\pi\right)$ is not a local minimizer. If $\v^\t\v^* = 0$, then $\nabla \tilde{f}(\v,\theta) = \0_{n+1}$, the same conclusion can be reached by examining the second order necessary condition.
\end{proof}

\bigskip

\begin{customlem}{3}
For any differentiable points $(\v,\w)$ and $(\tv,\tw)$ with $\min\{ \|\w\|,  \|\tw\|\} = c_\w>0$ and $\max\{\|\v\|, \|\tv\|\} = C_\v$, there exists a Lipschitz constant $L>0$ depending on $C_\v$ and $c_\w$, such that 
$$
\left\|\nabla f(\v,\w) - \nabla f(\tv,\tw) \right\| \leq L \|(\v,\w) - (\tv, \tw)\|.
$$
\end{customlem}

\begin{proof}[{\bf Proof of Lemma \ref{lem:lipschitz}}]
It is easy to check that $\|\I_m + \1_m\1_m^\t\| = m+1$. Then 
\begin{align*}
\left\|\frac{\partial f}{\partial \v}(\v,\w) - \frac{\partial f}{\partial \v}(\tv,\tw) \right\| = & \; \frac{1}{4}\left\|\big(\I_m + \1_m\1_m^\t \big)(\v-\tv) + \frac{2}{\pi}(\theta(\w,\w^*) - \theta(\tw,\w^*) )\v^* \right\| \\
\leq & \; \frac{m+1}{4}\| \v - \tv \| + \frac{\|\v^*\|}{2\pi} |\theta(\w,\w^*) - \theta(\tw,\w^*)| \\
\leq & \; \frac{m+1}{4}\| \v - \tv \| + \frac{\|\v^*\|}{4c_\w} \left\|\w - \tw \right\| \\
\leq & \; \frac{1}{4}\left( m+1 + \frac{\|\v^*\|}{c_\w} \right) \|(\v, \w) - (\tv, \tw)\|,
\end{align*}
where the last inequality is due to Lemma \ref{lem:angle}.1. 

We further have
\begin{align*}
\left\|\frac{\partial f}{\partial \w}(\v,\w) - \frac{\partial f}{\partial \w}(\tv,\tw) \right\| = & \; \left\|\frac{\v^\t\v^*}{2\pi\|\w\|}
\frac{\Big(\I_n - \frac{\w\w^\t}{\|\w\|^2}\Big)\w^*}{\Big\| \Big(\I_n - \frac{\w\w^\t}{\|\w\|^2}\Big)\w^*\Big\|}  -
\frac{\tv^\t\v^*}{2\pi\|\tw\|}
\frac{\Big(\I_n - \frac{\tw\tw^\t}{\|\tw\|^2}\Big)\w^*}{\Big\| \Big(\I_n - \frac{\tw\tw^\t}{\|\tw\|^2}\Big)\w^*\Big\|} \right\|\\
\leq & \; \left\|\frac{\v^\t\v^*}{2\pi\|\w\|}
\frac{\Big(\I_n - \frac{\w\w^\t}{\|\w\|^2}\Big)\w^*}{\Big\| \Big(\I_n - \frac{\w\w^\t}{\|\w\|^2}\Big)\w^*\Big\|}  -
\frac{\v^\t\v^*}{2\pi\|\tw\|}
\frac{\Big(\I_n - \frac{\tw\tw^\t}{\|\tw\|^2}\Big)\w^*}{\Big\| \Big(\I_n - \frac{\tw\tw^\t}{\|\tw\|^2}\Big)\w^*\Big\|} \right\| \\
& \; + \left\|\frac{\v^\t\v^*}{2\pi\|\tw\|}
\frac{\Big(\I_n - \frac{\tw\tw^\t}{\|\tw\|^2}\Big)\w^*}{\Big\| \Big(\I_n - \frac{\tw\tw^\t}{\|\tw\|^2}\Big)\w^*\Big\|}  -
\frac{\tv^\t\v^*}{2\pi\|\tw\|}
\frac{\Big(\I_n - \frac{\tw\tw^\t}{\|\tw\|^2}\Big)\w^*}{\Big\| \Big(\I_n - \frac{\tw\tw^\t}{\|\tw\|^2}\Big)\w^*\Big\|} \right\| \\
\leq & \; \frac{|\v^\t\v^*|}{2 \pi c_\w^2 }\|\w-\tw \| + \frac{\|\v^*\|}{2\pi c_\w}\|\v-\tv\|\\
\leq & \; \frac{(C_\v+c_\w)\|\v^*\|}{2\pi c_\w^2}\|(\v, \w) - (\tv, \tw)\|,
\end{align*}
where the second last inequality is to due to Lemma \ref{lem:angle}.2. Combining the two inequalities above validates the claim.
\end{proof}

\bigskip

\begin{customlem}{4}
The expected partial gradient of $\ell(\v,\w;\Z)$ w.r.t. $\v$ is
\begin{equation*}
\E_\Z  \left[\frac{\partial \ell}{\partial \v}(\v,\w;\Z)\right] = \frac{\partial f}{\partial \v}(\v,\w).
\end{equation*}
Let $\mu(x) = \max\{x,0\}$ in (\ref{eq:cbp_w}). The expected coarse gradient w.r.t. $\w$ is 
\begin{equation*}
\E_\Z \Big[\g_{\rl}(\v,\w; \Z)\Big] = 
\frac{h(\v,\v^*)}{2\sqrt{2\pi}}\frac{\w}{\|\w\|} - \cos\left(\frac{\theta(\w,\w^*)}{2}\right)\frac{\v^\t\v^*}{\sqrt{2\pi}}\frac{\frac{\w}{\|\w\|} + \w^*}{\left\|\frac{\w}{\|\w\|} + \w^*\right\|},\footnote{We redefine the second term as $\0_n$ in the case $\theta(\w,\w^*) = \pi$, or equivalently, $\frac{\w}{\|\w\|}+ \w^* = \0_n$.}  
\end{equation*}
where $h(\v,\v^*) = \|\v\|^2+ (\1_m^\t\v)^2 - (\1_m^\t\v)(\1_m^\t\v^*) + \v^\t\v^*$.
\end{customlem}

\begin{proof}[{\bf Proof of Lemma \ref{lem:cgd}}]
The first claim is true because
$\frac{\partial \ell}{\partial \v}(\v,\w;\Z)$ is linear in $\v$.
By (\ref{eq:cbp_w}),
\begin{equation*}
\g_{\mu}(\v,\w;\Z) = \Z^\t\big(\mu^{\prime}(\Z\w)\odot\v\big)\Big(\v^\t \sigma(\Z\w) - (\v^*)^\t \sigma(\Z\w^*) \Big).
\end{equation*}
Using the fact that $\mu^{\prime} = \sigma = 1_{\{x>0\}}$, we have
\begin{align*}
\E_\Z \left[ \g_{\rl}(\v,\w;\Z)\right] = & \; \E_\Z \left[\left(\sum_{i=1}^m v_i \sigma(\Z^\t_i\w)  - \sum_{i=1}^m v^*_i\sigma(\Z^\t_i\w^*)  \right)\left(\sum_{i=1}^m \Z_i v_i \sigma(\Z^\t_i\w) \right) \right] \\
= & \; \E_\Z \left[\left(\sum_{i=1}^m v_i 1_{\{\Z^\t_i\w>0\}}  - \sum_{i=1}^m v^*_i1_{\{\Z^\t_i\w^*>0\}}  \right)\left(\sum_{i=1}^m 1_{\{\Z^\t_i\w>0\}} v_i\Z_i \right)\right].
\end{align*}
Invoking Lemma \ref{lem:basic}, we have
\begin{equation*}
\E\left[ \Z_i 1_{\{\Z_i^\t \w>0, \Z_j^\t \w>0\}}\right] = 
\begin{cases}
\frac{1}{\sqrt{2\pi}} \frac{\w}{\|\w\|} & \mbox{if } i=j, \\
\frac{1}{2\sqrt{2\pi}} \frac{\w}{\|\w\|} & \mbox{if } i\neq j,
\end{cases}
\end{equation*}
and
\begin{equation*}
\E\left[ \Z_i 1_{\{\Z_i^\t \w>0, \Z_j^\t \w^* >0\}}\right] = 
\begin{cases}
\frac{\cos(\theta(\w,\w^*)/2)}{\sqrt{2\pi}} \frac{\frac{\w}{\|\w\|} + \w^* }{\left\|\frac{\w}{\|\w\|} + \w^*  \right\|} & \mbox{if } i=j,  \\
\frac{1}{2\sqrt{2\pi}} \frac{\w}{\|\w\|} & \mbox{if } i\neq j.
\end{cases}
\end{equation*}
Therefore,
\begin{align*}
\E_\Z \left[ \g_{\rl}(\v,\w;\Z)\right] = & \;\sum_{i=1}^m v_i^2 \E\left[ \Z_i 1_{\{\Z_i^\t \w>0\}}\right] +  \sum_{i=1}^m \sum_{\overset{j=1}{j\neq i}}^m v_i v_j \E\left[ \Z_i 1_{\{\Z_i^\t \w>0, \Z_j^\t \w>0\}}\right]  \\
& \; - \sum_{i=1}^m v_i v_i^* \E\left[ \Z_i 1_{\{\Z_i^\t \w>0, \Z_i^\t \w^* >0\}}\right] \\
& \; - \sum_{i=1}^m \sum_{\overset{j=1}{j\neq i}}^m v_i v_j^* \E\left[ \Z_i 1_{\{\Z_i^\t \w>0, \Z_j^\t \w^*>0\}}\right] \\
= & \; \frac{1}{2\sqrt{2\pi}}\left(\|\v\|^2 + (\1_m^\t\v)^2 \right) \frac{\w}{\|\w\|} 
- \cos\left(\frac{\theta(\w,\w^*)}{2}\right)\frac{\v^\t\v^*}{\sqrt{2\pi}} \frac{\frac{\w}{\|\w\|} + \w^* }{\left\|\frac{\w}{\|\w\|} + \w^*  \right\|} \\
& \; - \frac{1}{2\sqrt{2\pi}}\left( (\1_m^\t\v)(\1_m^\t\v^*) - \v^\t\v^* \right)\frac{\w}{\|\w\|},
\end{align*}
and the result follows.
\end{proof}

\bigskip

\begin{customlem}{5}
If $\w\neq \0_n$ and $\theta(\w,\w^*)\in(0, \pi)$, then the inner product between the expected coarse and true gradients w.r.t. $\w$ is
\begin{equation*}
\left\langle \E_\Z \Big[\g_{\rl}(\v,\w; \Z)\Big], \frac{\partial f}{\partial \w}(\v,\w) \right\rangle  = 
\frac{\sin\left(\theta(\w,\w^*)\right)}{2(\sqrt{2\pi})^3\|\w\|}(\v^\t\v^*)^2 \geq 0.
\end{equation*}
Moreover, if further $\|\v\|\leq C_\v$ and $\|\w\|\geq c_\w$, there exists a constant $A_{\rl}>0$ depending on $C_\v$ and $c_\w$, such that
\begin{equation*}
\left\|\E_\Z \Big[\g_{\rl}(\v,\w; \Z)\Big] \right\|^2 
\leq  
A_{\rl}\left(\left\|\frac{\partial f}{\partial \v}(\v,\w)\right\|^2 + \left\langle \E_\Z \Big[\g_{\rl}(\v,\w; \Z)\Big], \frac{\partial f}{\partial \w}(\v,\w) \right\rangle \right). 
\end{equation*}
\end{customlem}

\begin{proof}[{\bf Proof of Lemma \ref{lem:cor}}]
By Lemmas \ref{lem:grad} and \ref{lem:cgd}, we have
\begin{equation*}
\frac{\partial f}{\partial \w}(\v,\w) = -\frac{\v^\t\v^*}{2\pi\|\w\|}
\frac{\Big(\I_n - \frac{\w\w^\t}{\|\w\|^2}\Big)\w^*}{\Big\| \Big(\I_n - \frac{\w\w^\t}{\|\w\|^2}\Big)\w^*\Big\|}
\end{equation*}
and
\begin{equation*}
\E_\Z \Big[\g_{\rl}(\v,\w; \Z)\Big] = 
\frac{h(\v,\v^*)}{2\sqrt{2\pi}}\frac{\w}{\|\w\|} - \cos\left(\frac{\theta(\w,\w^*)}{2}\right)\frac{\v^\t\v^*}{\sqrt{2\pi}}\frac{\frac{\w}{\|\w\|} + \w^*}{\left\|\frac{\w}{\|\w\|} + \w^*\right\|}.
\end{equation*}
Notice that $\Big(\I_n - \frac{\w\w^\t}{\|\w\|^2}\Big)\w = \0_n$ and $\|\w^*\| = 1$, if $\theta(\w,\w_*)\neq 0, \pi$, then we have
\begin{align*}
 & \; \left\langle \E_\Z \Big[\g_{\rl}(\v,\w; \Z)\Big], \frac{\partial f}{\partial \w}(\v,\w) \right\rangle \\
= & \;
\cos\left(\frac{\theta(\w,\w^*)}{2}\right)\frac{(\v^{\t}\v^*)^2}{(\sqrt{2\pi})^3} \left\la \frac{1}{\|\w\|}
\frac{\Big(\I_n - \frac{\w\w^\t}{\|\w\|^2}\Big)\w^*}{\Big\| \Big(\I_n - \frac{\w\w^\t}{\|\w\|^2}\Big)\w^*\Big\|} , \frac{\w^*}{\left\|\frac{\w}{\|\w\|} + \w^*\right\|} \right\ra \\
= & \; \cos\left(\frac{\theta(\w,\w^*)}{2}\right)\frac{(\v^{\t}\v^*)^2}{(\sqrt{2\pi})^3} \frac{\|\w\|^2 - (\w^\t\w^*)^2}{\|\|\w\|^2\w^* - \w(\w^\t\w^*)\| \, \|\w+\|\w\|\w^*\|} \\
= & \; \cos\left(\frac{\theta(\w,\w^*)}{2}\right)\frac{(\v^{\t}\v^*)^2}{(\sqrt{2\pi})^3}  \frac{\|\w\|^2 - (\w^\t\w^*)^2}{\sqrt{\|\w\|^4 -\|\w\|^2(\w^\t\w^*)^2} \sqrt{2(\|\w\|^2+ \|\w\|(\w^\t \w^*))}} \\
= & \; \cos\left(\frac{\theta(\w,\w^*)}{2}\right)\frac{(\v^{\t}\v^*)^2}{4(\sqrt{\pi\|\w\|})^3}  \frac{\|\w\|^2 - (\w^\t\w^*)^2}{\sqrt{\|\w\|^2 -(\w^\t\w^*)^2} \sqrt{\|\w\|+ (\w^\t \w^*)}} \\
= & \; \cos\left(\frac{\theta(\w,\w^*)}{2}\right)
\frac{(\v^{\t}\v^*)^2\sqrt{1-\frac{\w^\t\w^*}{\|\w\|}}}{4(\sqrt{\pi})^3\|\w\|} \\
= & \; \cos\left(\frac{\theta(\w,\w^*)}{2}\right)\frac{(\v^{\t}\v^*)^2\sqrt{1 - \cos(\theta(\w,\w^*))}}{4(\sqrt{\pi})^3\|\w\|} \\
= & \; \frac{\sin\left(\theta(\w,\w^*)\right)}{2(\sqrt{2\pi})^3\|\w\|}(\v^{\t}\v^*)^2.
\end{align*}
To show the second claim, without loss of generality, we assume $\|\w\|=1$. Denote $\theta:= \theta(\w,\w^*)$.
By Lemma \ref{lem:obj}, we have
\begin{align*}
\frac{\partial f}{\partial \v}(\v,\w) = \frac{1}{4}\big(\I_m + \1_m\1_m^\t \big) \v - \frac{1}{4}\left( \left(1-\frac{2\theta}{\pi} \right)\I_m + \1_m\1_m^\t \right)\v^*.
\end{align*}
By Lemma \ref{lem:cgd},
\begin{equation}\label{eq:g}
\E_\Z \Big[\g_{\rl}(\v,\w; \Z)\Big]  = \frac{h(\v,\v^*)}{2\sqrt{2\pi}}\frac{\w}{\|\w\|} - \cos\left(\frac{\theta}{2}\right)\frac{\v^\t\v^*}{\sqrt{2\pi}}\frac{\frac{\w}{\|\w\|} + \w^*}{\left\|\frac{\w}{\|\w\|} + \w^*\right\|},
\end{equation}
where 
\begin{align}\label{eq:h}
h(\v,\v^*) = & \; \|\v\|^2+ (\1_m^\t\v)^2 - (\1_m^\t\v)(\1_m^\t\v^*) + \v^\t\v^* \notag \\
= & \; \v^\t\left( \I_m + \1_m\1_m^\t \right)\v - \v^\t(\1_m\1_m^\t - \I_m)\v^* \notag \\
= & \; \v^\t\left( \I_m + \1_m\1_m^\t \right)\v - \v^\t \left(\1_m\1_m^\t + \left(1-\frac{2\theta}{\pi}\right) \I_m \right)\v^* + 2\left(1 - \frac{\theta}{\pi}\right)\v^\t\v^*  \notag \\
= & \; 4 \v^\t \frac{\partial f}{\partial \v}(\v,\w) + 2\left(1 - \frac{\theta}{\pi}\right)\v^\t\v^*,
\end{align}
and by the first claim,
\begin{equation*}
\left\langle \E_\Z \Big[\g_{\rl}(\v,\w; \Z)\Big], \frac{\partial f}{\partial \w}(\v,\w) \right\rangle  = \frac{\sin\left(\theta\right)}{2(\sqrt{2\pi})^3\|\w\|}(\v^\t\v^*)^2.
\end{equation*}
Hence, for some $A_{\rl}$ depending only on $C_\v$ and $c_\w$, we have
\begin{align*}
& \; \left\|\E_\Z \Big[\g_{\rl}(\v,\w; \Z)\Big] \right\|^2 \\
= & \; \Bigg\| \frac{2 \v^\t \frac{\partial f}{\partial \v}(\v,\w)}{\sqrt{2\pi}} \frac{\w}{\|\w\|} + \cos\left(\frac{\theta}{2}\right)\frac{\v^\t\v^*}{\sqrt{2\pi}}\left(\frac{\w}{\|\w\|} - \frac{\frac{\w}{\|\w\|} + \w^*}{\left\|\frac{\w}{\|\w\|} + \w^*\right\|} \right) \\
& \; + \left(1-\frac{\theta}{\pi}-\cos\left(\frac{\theta}{2}\right)\right) \frac{\v^\t\v^*}{\sqrt{2\pi}}\frac{\w}{\|\w\|}\Bigg\|^2 \\
\leq & \; \frac{6C_\v^2}{\pi} \left\|\frac{\partial f}{\partial \v}(\v,\w)\right\|^2 + \cos^2\left(\frac{\theta}{2}\right)\frac{3(\v^\t\v^*)^2}{2\pi}\left\|\frac{\w}{\|\w\|} - \frac{\frac{\w}{\|\w\|} + \w^*}{\left\|\frac{\w}{\|\w\|} + \w^*\right\|} \right\|^2  \\
& \; + \left(1-\frac{\theta}{\pi}-\cos\left(\frac{\theta}{2}\right)\right)^2 \frac{3(\v^\t\v^*)^2}{2\pi} \\
\leq & \; \frac{6C_\v^2}{\pi}\left\|\frac{\partial f}{\partial \v}(\v,\w)\right\|^2 + 
\cos^2\left(\frac{\theta}{2}\right) \frac{3\theta^2}{8\pi} (\v^\t\v^*)^2 
+  \left(1-\frac{\theta}{\pi}-\cos\left(\frac{\theta}{2}\right)\right)^2 \frac{3(\v^\t\v^*)^2}{2\pi} \\
\leq & \; \frac{6C_\v^2}{\pi}\left\|\frac{\partial f}{\partial \v}(\v,\w)\right\|^2 + 
\frac{3\pi}{8}\cos^2\left(\frac{\theta}{2}\right) \sin^2\left(\frac{\theta}{2}\right) (\v^\t\v^*)^2 + \frac{3\sin(\theta)}{2\pi}(\v^\t\v^*)^2  \\
\leq & \; A_{\rl}\left(\left\|\frac{\partial f}{\partial \v}(\v,\w)\right\|^2 + \left\langle \E_\Z \Big[\g_{\rl}(\v,\w; \Z)\Big], \frac{\partial f}{\partial \w}(\v,\w) \right\rangle \right),
\end{align*}
 where the equality is due to (\ref{eq:g}) and (\ref{eq:h}), the first inequality is due to Cauchy-Schwarz inequality, the second inequality holds because the angle between $\frac{\w}{\|\w\|}$ and $\frac{\frac{\w}{\|\w\|} + \w^*}{\left\|\frac{\w}{\|\w\|} + \w^*\right\|}$ is $\frac{\theta}{2}$ and 
$\left\|\frac{\w}{\|\w\|} - \frac{\frac{\w}{\|\w\|} + \w^*}{\left\|\frac{\w}{\|\w\|} + \w^*\right\|} \right\| \leq \frac{\theta}{2}$, whereas the third inequality is due to $\sin(x)\geq \frac{2x}{\pi}$, $\cos(x)\geq 1-\frac{2x}{\pi}$, and 
$$
\left(1-\frac{2x}{\pi} -\cos(x)\right)^2\leq \left(\cos(x) - 1+ \frac{2x}{\pi} \right)\left(\cos(x) +  1 - \frac{2x}{\pi}\right) \leq \sin(x)(2\cos(x)) = \sin(2x)
$$
for all $x\in [0,\frac{\pi}{2}]$.
\end{proof}

\bigskip

\begin{customlem}{6}
When Algorithm \ref{alg} converges, $\E_\Z \Big[\frac{\partial \ell}{\partial \v}(\v,\w; \Z)\Big]$ and $\E_\Z \Big[\g_{\rl}(\v,\w; \Z)\Big]$ vanish simultaneously, which only occurs at the
\begin{enumerate}
\item Saddle points where (\ref{eq:critical}) is satisfied according to Proposition \ref{prop}.
\item Minimizers of (\ref{eq:model}) where $\v = \v^*$, $\theta(\w,\w^*)=0$, or $\v = (\I_m + \1_m\1_m^\t)^{-1}(\1_m\1_m^\t - \I_m)\v^*$, $\theta(\w,\w^*)=\pi$.
\end{enumerate}
\end{customlem}

\begin{proof}[{\bf Proof of Lemma \ref{lem:vanish}}]
By Lemma \ref{lem:cgd}, suppose we have
\begin{equation}\label{eq:cond1}
   \E_\Z \Big[\frac{\partial \ell}{\partial \v}(\v,\w; \Z)\Big] = \frac{1}{4}\big(\I_m + \1_m\1_m^\t \big) \v - \frac{1}{4}\left( \left(1-\frac{2}{\pi}\theta(\w,\w^*) \right)\I_m + \1_m\1_m^\t \right)\v^* = \0_m 
\end{equation}
and 
\begin{equation}\label{eq:cond2}
    \E_\Z \Big[\g_{\rl}(\v,\w; \Z)\Big]=\frac{h(\v,\v^*)}{2\sqrt{2\pi}}\frac{\w}{\|\w\|} - \cos\left(\frac{\theta(\w,\w^*)}{2}\right)\frac{\v^\t\v^*}{\sqrt{2\pi}}\frac{\frac{\w}{\|\w\|} + \w^*}{\left\|\frac{\w}{\|\w\|} + \w^*\right\|} =\0_n,
\end{equation}
where $h(\v,\v^*) = \|\v\|^2+ (\1_m^\t\v)^2 - (\1_m^\t\v)(\1_m^\t\v^*) + \v^\t\v^*$. By (\ref{eq:cond2}), we must have
$\theta(\w,\w^*) = 0$ or $\theta(\w,\w^*) = \pi$ or $\v^\t\v^* = 0$. 

If $\theta(\w,\w^*) = 0$, then by (\ref{eq:cond1}), $\v = \v^*$, and (\ref{eq:cond2}) is satisfied.

If $\theta(\w,\w^*) = \pi$, then by (\ref{eq:cond1}), $\v = (\I_m + \1_m\1_m^\t)^{-1}(\1_m\1_m^\t - \I_m)\v^*$, and (\ref{eq:cond2}) is satisfied.

If $\v^\t\v^* = 0$, then by (\ref{eq:cond1}), we have the expressions for $\v$ and $\theta(\w,\w^*)$ from Proposition \ref{prop}, and (\ref{eq:cond2}) is satisfied. 
\end{proof}

\bigskip

\begin{customlem}{7}
If $\w\neq \0_n$ and $\theta(\w,\w^*)\in(0, \pi)$, then
\begin{align}\label{eq:g_crelu}
\E_\Z \left[ \g_{\crl}(\v,\w;\Z)\right] 
= & \; \frac{p(0,\w) h(\v,\v^*)}{2}\frac{\w}{\|\w\|} - (\v^\t\v^*)\csc(\theta/2) \cdot q(\theta,\w)
\frac{\frac{\w}{\|\w\|} + \w^* }{\left\|\frac{\w}{\|\w\|} + \w^*  \right\|} \notag \\
& \; - (\v^\t\v^*)\left(p(\theta,\w)-\cot(\theta/2)\cdot q(\theta,\w)\right)\frac{\w}{\|\w\|}, 
\end{align} 
where $h(\v,\v^*) := \|\v\|^2+ (\1_m^\t\v)^2 - (\1_m^\t\v)(\1_m^\t\v^*) + \v^\t\v^*$ same as in Lemma \ref{lem:cor}, and 
$$p(\theta,\w) := \frac{1}{2\pi}\int_{-\frac{\pi}{2}+\theta}^{\frac{\pi}{2}}\cos(\phi)\xi\left(\frac{\sec(\phi)}{\|\w\|}\right)\mathrm{d}\phi, \; q(\theta,\w) := \frac{1}{2\pi}\int_{-\frac{\pi}{2}+\theta}^{\frac{\pi}{2}}\sin(\phi)\xi\left(\frac{\sec(\phi)}{\|\w\|}\right)\mathrm{d}\phi$$ 
with $\xi(x): = \int_0^x r^2 \exp(-\frac{r^2}{2})\mathrm{d}r$.
The inner product between the expected coarse and true gradients w.r.t. $\w$
\begin{equation*}
\left\langle \E_\Z \Big[\g_{\crl}(\v,\w; \Z)\Big], \frac{\partial f}{\partial \w}(\v,\w) \right\rangle = \frac{q(\theta,\w)}{2\pi\|\w\|}(\v^{\t}\v^*)^2\geq 0.
\end{equation*}
Moreover, if further $\|\v\|\leq C_\v$ and $\|\w\|\geq c_\w$, there exists a constant $A_{\crl}>0$ depending on $C_\v$ and $c_\w$, such that
\begin{equation*}
\left\|\E_\Z \Big[\g_{\crl}(\v,\w; \Z)\Big] \right\|^2 
\leq  
A_{\crl}\left(\left\|\frac{\partial f}{\partial \v}(\v,\w)\right\|^2 + \left\langle \E_\Z \Big[\g_{\crl}(\v,\w; \Z)\Big], \frac{\partial f}{\partial \w}(\v,\w) \right\rangle \right). 
\end{equation*}
\end{customlem}

\begin{proof}[{\bf Proof of Lemma \ref{lem:cor_crelu}}]
Denote $\theta:= \theta(\w,\w^*)$.
We first compute $\E_\Z \Big[\g_{\crl}(\v,\w; \Z)\Big]$. By (\ref{eq:cbp_w}),
\begin{equation*}
\g_{\mu}(\v,\w;\Z) = \Z^\t\big(\mu^{\prime}(\Z\w)\odot\v\big)\Big(\v^\t \sigma(\Z\w) - (\v^*)^\t \sigma(\Z\w^*) \Big).
\end{equation*}
Since $\mu^{\prime} = 1_{\{0<x<1\}} $ and $ \sigma = 1_{\{x>0\}}$, we have
\begin{align*}
\E_\Z \left[ \g_{\crl}(\v,\w;\Z)\right] = & \; \E_\Z \left[\left(\sum_{i=1}^m v_i \mu^{\prime}(\Z^\t_i\w)  - \sum_{i=1}^m v^*_i\mu^{\prime}(\Z^\t_i\w^*)  \right)\left(\sum_{i=1}^m \Z_i v_i \sigma(\Z^\t_i\w) \right) \right] \notag \\
= & \; \E_\Z \left[\left(\sum_{i=1}^m v_i 1_{\{0<\Z^\t_i\w<1\}}  - \sum_{i=1}^m v^*_i1_{\{0<\Z^\t_i\w^*<1\}}  \right)\left(\sum_{i=1}^m 1_{\{\Z^\t_i\w>0\}} v_i\Z_i \right)\right] \notag \\
= & \; \frac{p(0,\w) h(\v,\v^*)}{2}\frac{\w}{\|\w\|} - (\v^\t\v^*)\csc(\theta/2) \cdot q(\theta,\w)
\frac{\frac{\w}{\|\w\|} + \w^* }{\left\|\frac{\w}{\|\w\|} + \w^*  \right\|} \notag \\
& \; - (\v^\t\v^*)\left(p(\theta,\|\w\|)-\cot(\theta/2)\cdot q(\theta,\w)\right)\frac{\w}{\|\w\|}. 
\end{align*}
In the last equality above, we called Lemma \ref{lem:basic_crelu}.

Notice that $\Big(\I_n - \frac{\w\w^\t}{\|\w\|^2}\Big)\w = \0_n$ and $\|\w^*\| = 1$. If $\theta(\w,\w_*)\neq 0, \pi$, then the inner product between  $\E_\Z \Big[\g_{\crl}(\v,\w; \Z)\Big]$ and $\frac{\partial f}{\partial \w}(\v,\w)$ is given by
\begin{align*}
 & \; \left\langle \E_\Z \Big[\g_{\crl}(\v,\w; \Z)\Big], \frac{\partial f}{\partial \w}(\v,\w) \right\rangle \\
= & \;
\csc\left(\frac{\theta}{2}\right)\frac{q(\theta,\w)}{2\pi}(\v^{\t}\v^*)^2 \left\la \frac{1}{\|\w\|}
\frac{\Big(\I_n - \frac{\w\w^\t}{\|\w\|^2}\Big)\w^*}{\Big\| \Big(\I_n - \frac{\w\w^\t}{\|\w\|^2}\Big)\w^*\Big\|} , \frac{\w^*}{\left\|\frac{\w}{\|\w\|} + \w^*\right\|} \right\ra \\
= & \; \frac{q(\theta,\w)}{2\pi\|\w\|}(\v^{\t}\v^*)^2 \geq 0.
\end{align*}
In the last line, $q(\theta, \w)\geq 0$ because $\sin(\phi)\xi\left(\frac{\sec(\phi)}{\|\w\|}\right)$ is odd in $\phi$ and positive for $\phi\in(0,\frac{\pi}{2}]$.

Next, we bound $\left\| \E_\Z \Big[\g_{\crl}(\v,\w; \Z)\Big] \right\|^2$. Since (\ref{eq:h}) gives
\begin{align*}
h(\v,\v^*) =  4 \v^\t \frac{\partial f}{\partial \v}(\v,\w) + 2\left(1 - \frac{\theta}{\pi}\right)\v^\t\v^*,
\end{align*}
where according to Lemma \ref{lem:obj},
\begin{align*}
\frac{\partial f}{\partial \v}(\v,\w) = \frac{1}{4}\big(\I_m + \1_m\1_m^\t \big) \v - \frac{1}{4}\left( \left(1-\frac{2\theta}{\pi} \right)\I_m + \1_m\1_m^\t \right)\v^*.
\end{align*}
We rewrite the coarse partial gradient w.r.t. $\w$ in (\ref{eq:g_crelu}) as
\begin{align*}
\E_\Z \left[ \g_{\crl}(\v,\w;\Z)\right] = & \; p(0,\w) \left(2 \v^\t \frac{\partial f}{\partial \v}(\v,\w) + \left(1 - \frac{\theta}{\pi}\right)\v^\t\v^*\right)\frac{\w}{\|\w\|}  \\
& \; - (\v^\t\v^*)\csc(\theta/2) \cdot q(\theta,\w)
\frac{\frac{\w}{\|\w\|} + \w^* }{\left\|\frac{\w}{\|\w\|} + \w^*  \right\|}\\
& \; - (\v^\t\v^*)\left(p(\theta,\w)-\cot(\theta/2)\cdot q(\theta,\w)\right)\frac{\w}{\|\w\|}. \\
= & \; 2p(0,\w)\v^\t\frac{\partial f}{\partial \v}(\v,\w)\frac{\w}{\|\w\|} \\
& \; + (\v^\t\v^*)\left(\left(1-\frac{\theta}{\pi}\right) 
p(0,\w) - p(\theta, \w)\right)\frac{\w}{\|\w\|} \\
& \; +  (\v^\t\v^*)\csc(\theta/2) \cdot q(\theta,\w) \left(
\frac{\w}{\|\w\|} - \frac{\frac{\w}{\|\w\|} + \w^* }{\left\|\frac{\w}{\|\w\|} + \w^*\right\|}\right)\\
& \; -  (\v^\t\v^*)(\csc(\theta/2)-\cot(\theta/2)) q(\theta,\w)\frac{\w}{\|\w\|}
\end{align*}
To prove the last claim, we notice that in the above equality,
\begin{align}\label{ineq1}
\left\|\v^\t\frac{\partial f}{\partial \v}(\v,\w)\frac{\w}{\|\w\|}\right\|^2\leq C_\v^2\left\|\frac{\partial f}{\partial \v}(\v,\w) \right\|^2,
\end{align}
\begin{align}\label{ineq2}
     \left\|(\csc(\theta/2)-\cot(\theta/2)) q(\theta,\w)\frac{\w}{\|\w\|}\right\|^2 =  (\csc(\theta/2)-\cot(\theta/2))^2 q(\theta,\w)^2
     \leq   q(\theta,\w)^2,
\end{align}
\begin{align}\label{ineq3}
\left\|\csc(\theta/2) \cdot q(\theta,\w) \left(
\frac{\w}{\|\w\|} - \frac{\frac{\w}{\|\w\|} + \w^* }{\left\|\frac{\w}{\|\w\|} + \w^*\right\|}\right)\right\|^2 \leq \left(\frac{\theta/2}{\sin(\theta/2)}\right)^2 q(\theta,\w)^2\leq \frac{\pi^2}{4}q(\theta,\w)^2.
\end{align}

Now, what is left is to bound $\left(\left(1-\frac{\theta}{\pi}\right) 
p(0,\w) - p(\theta, \w)\right)^2$, using a multiple of $q(\theta,\w)$. 
Recall that $$p(\theta,\w) = \frac{1}{2\pi}\int_{-\frac{\pi}{2}+\theta}^{\frac{\pi}{2}}\cos(\phi)\xi\left(\frac{\sec(\phi)}{\|\w\|}\right)\mathrm{d}\phi, \; q(\theta,\w) = \frac{1}{2\pi}\int_{-\frac{\pi}{2}+\theta}^{\frac{\pi}{2}}\sin(\phi)\xi\left(\frac{\sec(\phi)}{\|\w\|}\right)\mathrm{d}\phi.$$ 
We first show that both $\left(\left(1-\frac{\theta}{\pi}\right) 
p(0,\w) - p(\theta, \w)\right)^2$ and $q(\theta,\w)$ are symmetric with respect to $\theta=\frac{\pi}{2}$ on $[0,\pi]$. This is because
\begin{align*}
& \; q(\pi-\theta, \w) =  \frac{1}{2\pi}\int_{\frac{\pi}{2}-\theta}^{\frac{\pi}{2}} \sin(\phi)\xi\left(\frac{\sec(\phi)}{\|\w\|}\right) \mathrm{d}\phi \\
= & \; q(\theta,\w) + \frac{1}{2\pi}\int_{\frac{\pi}{2}-\theta}^{-\frac{\pi}{2}+\theta} \sin(\phi)\xi\left(\frac{\sec(\phi)}{\|\w\|}\right) \mathrm{d}\phi  
=  q(\theta,\w).
\end{align*}
and
\begin{align*}
   & \;  \left(1-\frac{\pi-\theta}{\pi}\right) p(0,\w) - p(\pi-\theta, \w) \\
= & \; \frac{\theta}{\pi}\frac{1}{2\pi}\int_{-\frac{\pi}{2}}^{\frac{\pi}{2}}\cos(\phi)\xi\left(\frac{\sec(\phi)}{\|\w\|}\right)\mathrm{d}\phi - \frac{1}{2\pi}\int_{\frac{\pi}{2}-\theta}^{\frac{\pi}{2}}\cos(\phi)\xi\left(\frac{\sec(\phi)}{\|\w\|}\right)\mathrm{d}\phi \\
= & \; \left(\frac{\theta}{\pi} - 1 \right)\frac{1}{2\pi}\int_{-\frac{\pi}{2}}^{\frac{\pi}{2}}\cos(\phi)\xi\left(\frac{\sec(\phi)}{\|\w\|}\right)\mathrm{d}\phi + \frac{1}{2\pi}\int_{-\frac{\pi}{2}}^{\frac{\pi}{2}-\theta}\cos(\phi)\xi\left(\frac{\sec(\phi)}{\|\w\|}\right)\mathrm{d}\phi \\
= & \; -\left(\left(1-\frac{\theta}{\pi}\right)p(0,\w) - p(\theta,\w)\right).
\end{align*}
Therefore, it suffices to consider $\theta\in[\frac{\pi}{2},\pi]$ only. 
Then calling Lemma \ref{lem:pq} for $\theta\in[\frac{\pi}{2},\pi]$, we have $p(\theta,\w)\leq q(\theta,\w)$ and $\left(1-\frac{\theta}{\pi}\right)p(0,\w)\leq q(\theta,\w)$. Therefore,
\begin{align*}
  \left(p(\theta,\w) - \left(1-\frac{\theta}{\pi}\right)p(0,\w)\right)^2 \leq q(\theta,\w)^2.
\end{align*}

Combining the above estimate together with (\ref{ineq1}), (\ref{ineq2}) and (\ref{ineq3}), and using Cauchy-Schwarz inequality, we have
\begin{align*}
    \|\E_\Z [g_{\crl}(\v,\w;\Z)]\|^2 \leq 4\left(4p(0,\w)C_\v^2 \left\|\frac{\partial f}{\partial \v}(\v,\w) \right\|^2 + \left(2 +\frac{\pi^2}{4}\right)q(\theta,\w)^2(\v^\t\v^*)^2 \right),
\end{align*}
where 
$p(0,\w)$ and $q(\theta,\w)$ are uniformly bounded. This completes the proof.


\end{proof}

\bigskip

\begin{customlem}{8}
When Algorithm \ref{alg} converges, $\E_\Z \Big[\frac{\partial \ell}{\partial \v}(\v,\w; \Z)\Big]$ and $\E_\Z \Big[\g_{\crl}(\v,\w; \Z)\Big]$ vanish simultaneously, which only occurs at the 
\begin{enumerate}
\item Saddle points where (\ref{eq:critical}) is satisfied according to Proposition \ref{prop}.
\item Minimizers of (\ref{eq:model}) where $\v = \v^*$, $\theta(\w,\w^*)=0$, or $\v = (\I_m + \1_m\1_m^\t)^{-1}(\1_m\1_m^\t - \I_m)\v^*$, $\theta(\w,\w^*)=\pi$.
\end{enumerate}
\end{customlem}

\begin{proof}[{\bf Proof of Lemma \ref{lem:vanish_crelu}}]
The proof of Lemma \ref{lem:vanish_crelu} is similar to that of Lemma \ref{lem:vanish}, and we omit it here. The core part is that $q(\theta,\w)$ defined in Lemma \ref{lem:basic_crelu} is non-negative and equals 0 only at $\theta=0,\pi$, as well as $p(0,\w)\geq p(\theta,\w)\geq p(\pi,\w)=0$.
\end{proof}

\bigskip

\begin{customlem}{9}
Let $\mu(x) = x$ in (\ref{eq:cbp_w}). Then the expected coarse partial gradient w.r.t. $\w$ is 
\begin{equation*}
\E_\Z \Big[\g_{\id}(\v,\w; \Z)\Big] = 
\frac{1}{\sqrt{2\pi}}\left(\|\v\|^2\frac{\w}{\|\w\|}-(\v^\t\v^*)
\w^*\right)
.
\end{equation*}
If $\theta(\w,\w^*)=\pi$ and $\v = \big(\I_m + \1_m\1_m^\t \big)^{-1}( \1_m\1_m^\t -\I_m )\v^*$,
$$
\left\|\E_\Z \Big[\g_{\id}(\v,\w; \Z)\Big]\right\| = \frac{2(m-1)}{\sqrt{2\pi}(m+1)^2}(\1_m^\t\v^*)^2 \geq 0,
$$
i.e., $\E_\Z \Big[\g_{\id}(\v,\w; \Z)\Big]$ does not vanish at the local minimizers if $\1_m^\t\v^*\neq0$ and $m>1$,.
\end{customlem}

\begin{proof}[{\bf Proof of Lemma \ref{lem:cgd_id}}]
By (\ref{eq:cbp_w}),
\begin{equation*}
\g_{\mu}(\v,\w;\Z) = \Z^\t\big(\mu^{\prime}(\Z\w)\odot\v\big)\Big(\v^\t \sigma(\Z\w) - (\v^*)^\t \sigma(\Z\w^*) \Big).
\end{equation*}
Using the facts that $\mu^{\prime} = 1$ and $\sigma = 1_{\{x>0\}}$, we have
\begin{align*}
\E_\Z \left[ \g_{\id}(\v,\w;\Z)\right] =  & \; \E_\Z \left[\left(\sum_{i=1}^m v_i 1_{\{\Z^\t_i\w>0\}}  - \sum_{i=1}^m v^*_i1_{\{\Z^\t_i\w^*>0\}}  \right)\left(\sum_{i=1}^m v_i\Z_i \right)\right] \\
= & \; \sum_{i=1}^m \sum_{j=1}^m v_i v_j \E\left[ \Z_i 1_{\{\Z_j^\t \w>0\}}\right] - \sum_{i=1}^m \sum_{j=1}^m v_i^* v_j \E\left[ \Z_i 1_{\{\Z_j^\t \w^*>0\}}\right] \\
= & \; \frac{1}{\sqrt{2\pi}}\left(\|\v\|^2\frac{\w}{\|\w\|}-(\v^\t\v^*)\w^*\right).
\end{align*}
In the last equality above, we called the third identity in Lemma \ref{lem:basic}. If $\theta(\w,\w^*)=\pi$ and $\v = \big(\I_m + \1_m\1_m^\t \big)^{-1}( \1_m\1_m^\t -\I_m )\v^*$, then
\begin{align*}
& \; \left\|\E_\Z \left[ \g_{\id}(\v,\w;\Z)\right]\right\| = \frac{1}{\sqrt{2\pi}}|\v^\t(\v + \v^*)|  \\
= & \; \frac{1}{\sqrt{2\pi}} \left|(\v^*)^\t( \1_m\1_m^\t -\I_m )\big(\I_m + \1_m\1_m^\t \big)^{-1}\left(\big(\I_m + \1_m\1_m^\t \big)^{-1}( \1_m\1_m^\t -\I_m ) + \I_m\right)\v^*\right| \\
= & \; \frac{2}{\sqrt{2\pi}}\left|(\v^*)^\t( \1_m\1_m^\t -\I_m )\big(\I_m + \1_m\1_m^\t \big)^{-1}\big(\I_m + \1_m\1_m^\t \big)^{-1}\1_m(\1_m^\t\v^*)\right|\\
= & \; \frac{2}{\sqrt{2\pi}(m+1)^2}\left|(\v^*)^\t( \1_m\1_m^\t -\I_m )\1_m(\1_m^\t\v^*)\right| \\
= & \; \frac{2(m-1)}{\sqrt{2\pi}(m+1)^2}(\1_m^\t\v^*)^2.
\end{align*}
In the third equality, we used the identity $(\I_m + \1_m\1_m^\t )\1_m = (m+1)\1_m$ twice.
\end{proof}

\bigskip

\begin{customlem}{10}
If $\w\neq \0_n$ and $\theta(\w,\w^*)\in(0, \pi)$, then the inner product between the expected coarse and true gradients w.r.t. $\w$ is
\begin{equation*}
\left\langle \E_\Z \Big[\g_{\id}(\v,\w; \Z)\Big], \frac{\partial f}{\partial \w}(\v,\w) \right\rangle  = 
\frac{\sin\left(\theta(\w,\w^*)\right)}{(\sqrt{2\pi})^3\|\w\|}(\v^\t\v^*)^2 \geq 0.
\end{equation*}
When $\theta(\w,\w^*)\to\pi$, $\v\to\big(\I_m + \1_m\1_m^\t \big)^{-1}( \1_m\1_m^\t -\I_m )\v^*$, if $\1_m^\t\v^*\neq0$ and $m>1$, we have
\begin{equation*}
\frac{\left\|\E_\Z \Big[\g_{\id}(\v,\w; \Z)\Big] \right\|^2 }{\left\|\frac{\partial f}{\partial \v}(\v,\w)\right\|^2 + \left\langle \E_\Z \Big[\g_{\id}(\v,\w; \Z)\Big], \frac{\partial f}{\partial \w}(\v,\w) \right\rangle}\to +\infty.
\end{equation*}
\end{customlem}

\begin{proof}[{\bf Proof of Lemma \ref{lem:cor_id}}]
By Lemmas \ref{lem:grad} and \ref{lem:cgd}, we have
\begin{equation*}
\frac{\partial f}{\partial \w}(\v,\w) = -\frac{\v^\t\v^*}{2\pi\|\w\|}
\frac{\Big(\I_n - \frac{\w\w^\t}{\|\w\|^2}\Big)\w^*}{\Big\| \Big(\I_n - \frac{\w\w^\t}{\|\w\|^2}\Big)\w^*\Big\|}
\end{equation*}
and
\begin{equation*}
\E_\Z \Big[\g_{\id}(\v,\w; \Z)\Big] = 
\frac{1}{\sqrt{2\pi}}\left(\|\v\|^2\frac{\w}{\|\w\|}-(\v^\t\v^*)
\w^*\right).
\end{equation*}
Since $\Big(\I_n - \frac{\w\w^\t}{\|\w\|^2}\Big)\w = \0_n$ and $\|\w^*\| = 1$, if $\theta(\w,\w_*)\neq 0, \pi$, then we have
\begin{align*}
 & \; \left\langle \E_\Z \Big[\g_{\id}(\v,\w; \Z)\Big], \frac{\partial f}{\partial \w}(\v,\w) \right\rangle = \frac{(\v^{\t}\v^*)^2}{(\sqrt{2\pi})^3\|\w\|}
\frac{(\w^*)^\t\Big(\I_n - \frac{\w\w^\t}{\|\w\|^2}\Big)\w^*}{\Big\| \Big(\I_n - \frac{\w\w^\t}{\|\w\|^2}\Big)\w^*\Big\|} \\
= & \; \frac{(\v^{\t}\v^*)^2}{(\sqrt{2\pi})^3\|\w\|} \frac{1 - \frac{(\w^\t\w^*)^2}{\|\w\|^2}}{\sqrt{1- \frac{(\w^\t\w^*)^2}{\|\w\|^2}}}   = \frac{(\v^{\t}\v^*)^2}{(\sqrt{2\pi})^3\|\w\|} \sqrt{1 - \frac{(\w^\t\w^*)^2}{\|\w\|^2}}  \\
= & \; \frac{(\v^{\t}\v^*)^2}{(\sqrt{2\pi})^3\|\w\|} \sin(\theta(\w,\w^*)).
\end{align*}
When $\theta(\w,\w^*)\to\pi$, $\v\to\big(\I_m + \1_m\1_m^\t \big)^{-1}( \1_m\1_m^\t -\I_m )\v^*$, both $\left\|\frac{\partial f}{\partial \v}(\v,\w)\right\|$ and $\left\langle \E_\Z \Big[\g_{\id}(\v,\w; \Z)\Big], \frac{\partial f}{\partial \w}(\v,\w) \right\rangle$ converge to 0. But if $\1_m^\t\v^*\neq0$ and $m>1$, $\left\|\E_\Z \Big[\g_{\id}(\v,\w; \Z)\Big] \right\| \to \frac{2(m-1)}{\sqrt{2\pi}(m+1)^2}(\1_m^\t\v^*)^2 >0$, which completes the proof.
\end{proof}

\bigskip

\begin{customthm}{1}
Let $\{(\v^t, \w^t)\}$ be the sequence generated by Algorithm \ref{alg} with ReLU $\mu(x) = \max\{x,0\}$ or clipped ReLU $\mu(x) = \min\{\max\{x,0\},1\}$. 
Suppose $\|\w^t\|\geq c_\w$ for all $t$ with some $c_\w >0$. Then if the learning rate $\eta>0$ is sufficiently small, for any initialization $(\v^0,\w^0)$, the objective sequence $\{f(\v^t,\w^t)\}$ is monotonically decreasing, and $\{(\v^t,\w^t)\}$ converges to a saddle point or a (local) minimizer of the population loss minimization (\ref{eq:model}). In addition, if $\1_m^\t\v^* \neq 0$ and $m>1$, the descent and convergence properties do not hold for Algorithm \ref{alg} with the identity function $\mu(x) = x$ near the local minimizers satisfying $\theta(\w,\w^*) = \pi$ and $\v = (\I_m + \1_m\1_m^\t)^{-1}(\1_m\1_m^\t - \I_m)\v^*$.
\end{customthm}

\begin{proof}[{\bf Proof of Theorem \ref{thm}} ]
We first prove the upper boundedness of $\{\v^t\}$. Due to the coerciveness of $f(\v,\w)$ w.r.t $\v$, there exists $C_\v>0$, such that $\|\v\|\leq C_\v$ for any $\v\in\{\v\in\R^m: f(\v,\w)\leq f(\v^0,\w^0) \mbox{ for some } \w \}$. In particular, $\|\v^0\|\leq C_\v$. Using induction, suppose we already have $f(\v^{t},\w^{t})\leq f(\v^0,\w^0)$ and $\|\v^t\|\leq C_\v$. If $\theta(\w^t,\w^*) = 0$ or $\pi$, then $\theta(\w^{T},\w^*) = 0$ or $\pi$ for all $T\geq t$, and the original problem reduces to a quadratic program in terms of $\v$. So $\{\v^t\}$ will converge to $\v^*$ or $(\I_m + \1_m\1_m^\t)^{-1}(\1_m\1_m^\t - \I_m)\v^*$ by choosing a suitable step size $\eta$. In either case, we have $\left\|\E_\Z \Big[\frac{\partial \ell}{\partial \v}(\v^t,\w^t; \Z)\Big]\right\|$ and $\left\|\E_\Z \Big[\g_{\rl}(\v^t,\w^t; \Z)\Big]\right\|$ both converge to 0. Else if $\theta(\w^t,\w^*) \in (0,\pi)$, we define for any $a\in[0,1]$ that
$$
\v^t(a) :=  \v^t - a(\v^{t+1} - \v^t) = \v^t - a \eta \E_\Z  \left[\frac{\partial \ell}{\partial \v}(\v^t,\w^t;\Z)\right]
$$ 
and 
$$
\w^t(a) := \w^t - a(\w^{t+1} - \w^t) = \w^t - a\eta \E_\Z \left[\g_{\rl}(\v^t,\w^t; \Z)\right],
$$ 
which satisfy 
$$
\v^t(0) = \v^t, \; \v^t(1) = \v^{t+1}, \; \w^t(0) = \w^t, \; \w^t(1) = \w^{t+1}.
$$
Let us fix $0<c\leq c_\w$ and $C\geq C_\v$. By the expressions of $\E_\Z  \left[\frac{\partial \ell}{\partial \v}(\v^t,\w^t;\Z)\right]$ and $\E_\Z \left[\g_\rl(\v^t,\w^t; \Z)\right]$ given in Lemma \ref{lem:cgd}, and since $\|\w^t\|=1$, for sufficiently small $\tilde{\eta}$ depending on $C_0$ and $c_\w$, with $\eta\leq \tilde{\eta}$, it holds that $\|\v^t(a)\|\leq C$ and $\|\w^t(a)\|\geq c$ for all $a\in[0,1]$.
Possibly at some point $a_0$ where $\theta(\w^t(a_0),\w^*) = 0$ or $\pi$, the partial gradient $\frac{\partial f}{\partial \w}(\v^t(a_0),\w^t(a_0))$ does not exist. Otherwise,  $\left\| \frac{\partial f}{\partial \w}(\v^t(a),\w^t(a)) \right\|$ is uniformly bounded for all $a\in[0,1]/\{a_0\}$, which makes it integrable over the interval $[0,1]$. Then for some constants $L$ and $A_\rl$ depending on $C$ and $c$, we have
\begin{align}\label{eq:descent}
f(\v^{t+1}, \w^{t+1}) = & \; f(\v^t+ (\v^{t+1} -\v^t), \w^t+ (\w^{t+1}-\w^t)) \notag \\
= & \; f(\v^t, \w^t) + \int_{0}^1 \left\la \frac{\partial f}{\partial \v}(\v^t(a),\w^t(a)) , \v^{t+1} -\v^t \right\ra \mathrm{d}a  \notag\\ 
& \; +  \int_{0}^1 \left\la \frac{\partial f}{\partial \w}(\v^t(a),\w^t(a)), \w^{t+1} - \w^t \right\ra   \mathrm{d}a \notag\\
= & \; f(\v^{t}, \w^{t}) + \left\la \frac{\partial f}{\partial \v}(\v^t,\w^t) , \v^{t+1} -\v^t  \right\ra +  \left\la \frac{\partial f}{\partial \w}(\v^t,\w^t) , \w^{t+1} -\w^t \right\ra \notag\\
& \; + \int_{0}^1 \left\la \frac{\partial f}{\partial \v}(\v^t(a),\w^t(a)) - \frac{\partial f}{\partial \v}(\v^t,\w^t) , \v^{t+1} -\v^t \right\ra \mathrm{d}a  \notag\\
& \; \qquad \qquad +  \int_{0}^1 \left\la \frac{\partial f}{\partial \w}(\v^t(a),\w^t(a)) - \frac{\partial f}{\partial \w}(\v^t,\w^t) , \w^{t+1} - \w^t \right\ra   \mathrm{d}a 
\notag\\
\leq & \;  f(\v^{t}, \w^{t}) -\left(\eta -\frac{L\eta^2}{2}\right)\left\|\frac{\partial f}{\partial \v}(\v^t,\w^t) \right\|^2  \notag \\
& \; - \eta\left\la \frac{\partial f}{\partial \w}(\v^t,\w^t), \E_\Z \Big[\g_{\rl}(\v^t,\w^t; \Z)\Big] \right\ra + \frac{L\eta^2}{2} \left\|\E_\Z \Big[\g_{\rl}(\v^t,\w^t; \Z)\Big] \right\|^2 \notag \\
\leq & \;  f(\v^{t}, \w^{t}) -\left(\eta -(1+A_\rl)\frac{L\eta^2}{2}\right) \left\|\frac{\partial f}{\partial \v}(\v^t,\w^t) \right\|^2 \notag \\
& \; - \left(\eta -\frac{A_\rl L\eta^2}{2}\right)  \left\la \frac{\partial f}{\partial \w}(\v^t,\w^t), \E_\Z \Big[\g_{\rl}(\v^t,\w^t; \Z)\Big] \right\ra. 
\end{align}
The third equality is due to the fundamental theorem of calculus. In the first inequality, we called Lemma \ref{lem:lipschitz} for $(\v^t, \w^t)$ and $(\v^t(a), \w^t(a))$ with $a\in[0,1]/\{a_0\}$. In the last inequality, we used Lemma \ref{lem:cor}. 
So when $\eta < \eta_0:= \min\left\{\frac{2}{(1+A_\rl)L}, \tilde{\eta}\right\}$, we have $f(\v^{t+1},\w^{t+1})\leq f(\v^t,\w^t)\leq f(\v^0,\w^0)$,  and thus $\|\v^{t+1}\|\leq C_\v$.

Summing up the inequality (\ref{eq:descent}) over $t$ from $0$ to $\infty$ and using $f\geq 0$, we have
\begin{align*}
& \; \sum_{t=0}^\infty \left(1 -(1+A_\rl)\frac{L\eta}{2}\right)\left\|\frac{\partial f}{\partial \v}(\v^t,\w^t) \right\|^2 + \left(1 -\frac{A_\rl L\eta}{2}\right)  \left\la \frac{\partial f}{\partial \w}(\v^t,\w^t), \E_\Z \Big[\g_{\rl}(\v^t,\w^t; \Z)\Big] \right\ra \\
& \; \leq   f(\v^0,\w^0)/\eta<\infty.
\end{align*}
Hence,
$$
\lim_{t\to \infty}\left\|\frac{\partial f}{\partial \v}(\v^t,\w^t)\right\| = 0
$$ 
and 
$$\lim_{t\to \infty} \left\la \frac{\partial f}{\partial \w}(\v^t,\w^t), \E_\Z \Big[\g_{\rl}(\v^t,\w^t; \Z)\Big] \right\ra  = 0.
$$
Invoking Lemma \ref{lem:cor} again, we further have
$$
\lim_{t\to \infty}\left\|\E_\Z \Big[\g_{\rl}(\v^t,\w^t; \Z)\Big]\right\| = 0.
$$ 
Invoking Lemma \ref{lem:vanish}, we have that coarse gradient descent with ReLU $\mu(x)$ (subsequentially) converges to a saddle point or a minimizer. 

Using Lemmas \ref{lem:cor_crelu}, \ref{lem:vanish_crelu} and similar arguments, we can prove the convergence of coarse gradient descent with clipped ReLU STE. 

The second claim follows from Lemmas \ref{lem:cgd_id} and \ref{lem:cor_id}.
\end{proof}

\subsection*{F. \quad Convergence to Global Minimizers}
We prove that if the initialization weights $(\v^0,\w^0)$ satisfy $(\v^0)^\t\v^*>0$, $\theta(\w^0,\w^*)<\frac{\pi}{2}$ and $(\1_m^\t \v^*)(\1_m^\t \v^0)\leq (\1_m^\t\v^*)^2$, then we have convergence guarantee to global optima by using the vanilla or clipped ReLU STE.
\begin{thm}\label{thm:global}
Under the assumptions of Theorem \ref{thm}, if further the initialization $(\v^0,\w^0)$ satisfies $(\v^0)^\t\v^*>0$, $\theta(\w^0,\w^*)<\frac{\pi}{2}$ and $(\1_m^\t \v^*)(\1_m^\t \v^0)\leq (\1_m^\t\v^*)^2$, then by using the vanila or clipped ReLU STE for sufficiently learning rate $\eta>0$, we have $(\v^t)^\t\v^*>0$ and $\theta(\w^t,\w^*)<\frac{\pi}{2}$ for all $t>0$, and
$\{(\v^t,\w^t)\}$ converges to a global minimizer.
\end{thm}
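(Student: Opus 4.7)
The plan is to combine Theorem 1 (convergence to a critical point) with an invariant-region argument that rules out saddles and spurious minima. Concretely, I would show by induction on $t$ that the three initialization conditions
$$
(\v^t)^\t \v^* > 0, \quad \theta(\w^t,\w^*) < \pi/2, \quad (\1_m^\t\v^*)(\1_m^\t\v^t) \leq (\1_m^\t\v^*)^2
$$
are preserved along the iterates for $\eta$ small enough. Once these invariants are established, Theorem 1 guarantees convergence to a critical point, and Proposition 1 together with Lemmas 6 and 8 says such a point is a saddle (with $\v^\t\v^* = 0$ and $\theta \geq \pi/2$), a spurious minimum (with $\theta = \pi$), or a global minimum ($\v = \v^*$, $\theta = 0$). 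The first two are incompatible with the invariants, so the limit must be global.

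First I would verify invariant (1). Using Lemma 2, a direct computation gives
$$
(\v^{t+1})^\t \v^* = (1-\eta/4)(\v^t)^\t\v^* + \tfrac{\eta}{4}\big[(\1_m^\t\v^*)^2 - (\1_m^\t\v^*)(\1_m^\t\v^t)\big] + \tfrac{\eta}{4}\big(1 - \tfrac{2\theta^t}{\pi}\big)\|\v^*\|^2.
$$
For $\eta < 4$, all three summands are non-negative under the induction hypotheses, and the last is strictly positive since $\theta^t < \pi/2$. The condition $(\1_m^\t\v^*)(\1_m^\t\v^{t+1}) \leq (\1_m^\t\v^*)^2$ follows similarly: $\1_m^\t \v^{t+1}$ is a convex combination (for $\eta$ small) of $\1_m^\t\v^t$ and $\frac{m+1-2\theta^t/\pi}{m+1}\1_m^\t\v^*$, and the coefficient in front of $\1_m^\t\v^*$ lies in $(0,1)$, so after multiplying by $\1_m^\t\v^*$ and checking the two sign cases of $\1_m^\t\v^*$, the bound is maintained.

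Next I would address the hardest invariant, namely $(\w^{t+1})^\t\w^* > 0$. For vanilla ReLU, using Lemma 5 together with $\|\w/\|\w\| + \w^*\| = 2\cos(\theta/2)$, the expected coarse gradient simplifies to
$$
\E_\Z\!\big[\g_{\rl}(\v,\w;\Z)\big] = \tfrac{h(\v,\v^*) - \v^\t\v^*}{2\sqrt{2\pi}}\tfrac{\w}{\|\w\|} - \tfrac{\v^\t\v^*}{2\sqrt{2\pi}}\w^*,
$$
so that
$$
(\w^{t+1})^\t\w^* = \cos\theta^t\Big[\|\w^t\| - \tfrac{\eta(h(\v^t,\v^*) - (\v^t)^\t\v^*)}{2\sqrt{2\pi}}\Big] + \tfrac{\eta\,(\v^t)^\t\v^*}{2\sqrt{2\pi}}.
$$
Since $\|\w^t\|\geq c_\w$ by hypothesis and $\|\v^t\|$ is bounded (from the proof of Theorem 1), choosing $\eta$ sufficiently small makes the bracketed factor non-negative while the additive term is strictly positive, giving $(\w^{t+1})^\t\w^* > 0$. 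The clipped-ReLU case is handled analogously with Lemma 7, relying on $q(\theta,\w) \geq 0$ and the uniform boundedness of $p(\theta,\w)$ and $q(\theta,\w)$.

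For the concluding step I would exclude saddles and spurious minima from the limit set. Spurious minima have $\theta = \pi$, which violates $\theta^t < \pi/2$. Saddles satisfy $\theta = \frac{\pi}{2}\frac{(m+1)\|\v^*\|^2}{(m+1)\|\v^*\|^2 - (\1_m^\t\v^*)^2} \geq \pi/2$, with strict inequality whenever $\1_m^\t\v^* \neq 0$; in that generic case the invariant rules them out immediately. The delicate point I expect to be the main obstacle is the borderline configuration $\1_m^\t\v^* = 0$, where the saddle sits exactly on $\theta = \pi/2$ and limits of the strict inequality $\theta^t < \pi/2$ only yield $\theta^\infty \leq \pi/2$. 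I would handle this either by strengthening Step~3 to a quantitative decrease of $\theta^t$ whenever $\theta^t$ is close to $\pi/2$ and $(\v^t)^\t\v^* \geq \delta > 0$ (so that $\theta^t$ is pushed strictly below $\pi/2$ with a margin), or by directly checking that at such a saddle the objective value exceeds $f(\v^0,\w^0)$ under the initialization, contradicting the monotonicity of $f$ along the iterates from Theorem 1.
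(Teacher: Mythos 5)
Your proposal follows essentially the same route as the paper's proof: the same three-part induction on the invariants, with computations for $(\v^{t+1})^\t\v^*$ and $(\1_m^\t\v^*)(\1_m^\t\v^{t+1})$ that match the paper's line for line, and your explicit simplification of $\E_\Z[\g_{\rl}]$ via $\left\|\frac{\w}{\|\w\|}+\w^*\right\|=2\cos(\theta/2)$ is just a concrete instance of the paper's decomposition $\alpha_1\big(\I_n-\frac{\w\w^\t}{\|\w\|^2}\big)\w^*+\alpha_2\w$ with $\alpha_1=-\frac{\v^\t\v^*}{2\sqrt{2\pi}}\leq 0$. The one point where you go beyond the paper is the concluding step: the paper simply asserts that the convergent sequence ``can only converge to a global minimizer,'' whereas you correctly observe that when $\1_m^\t\v^*=0$ the saddle sits exactly at $\theta=\pi/2$, $\v^\t\v^*=0$, on the boundary of the invariant region, so some additional quantitative argument of the kind you sketch is needed to rule it out --- a gap the paper leaves implicit.
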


\begin{proof}[{\bf Proof of Theorem \ref{thm:global}}]
Proof by induction. Suppose $(\v^t)^\t\v^*>0$, $\theta(\w^t,\w^*)<\frac{\pi}{2}$ and $(\1_m^\t \v^*)(\1_m^\t \v^t)\leq (\1_m^\t\v^*)^2$. Then for small enough $\eta$, we have
\begin{align*}
    (\v^{t+1})^\t\v^* 
    = &\; \left(\v^{t}-\frac{\eta}{4}\left((\I_m+\1_m\1_m^\t)\v^t -\left(\left(1-\frac{2}{\pi}\theta(\w^t,\w^*)\right)\I_m +\1_m\1_m^\t \right)\v^* \right)\right)^\t\v^*\\
    =  &\; \left(1-\frac{\eta}{4}\right)(\v^t)^\t\v^* + \frac{\eta}{4}\left((\1_m^\t\v^*)^2 - (\1_m^\t\v^*)(\1_m^\t\v^\t)\right) \\
    & \; + \frac{\eta}{4}\left(1-\frac{2}{\pi}\theta(\w^t,\w^*)\right)\|\v^*\|^2>0.
\end{align*}
and
\begin{align*}
    (\1_m^\t \v^{t+1})(\1_m^\t \v^*) = & \; \left(1-\frac{(m+1)\eta}{4}\right)(\1_m^\t \v^{t})(\1_m^\t \v^*) + \frac{\eta}{4}\left(m+1-\frac{2}{\pi}\theta(\w,\w^*)\right)(\1_m^\t\v^*)^2 \\
    \leq & \; \left(1 - \frac{\eta}{2\pi}\theta(\w,\w^*)\right)(\1_m^\t\v^*)^2\leq (\1_m^\t\v^*)^2.
\end{align*}
Moreover, by Lemmas \ref{lem:cgd}, \ref{lem:cor} and \ref{lem:cor_crelu}, both $\E_\Z[\g_\rl(\v,\w;\Z)]$ and $\E_\Z[\g_\crl(\v,\w;\Z)]$ can be written in the form of $ \alpha_1 \left(\I_m - \frac{\w\w^\t}{\|\w\|^2}\right)\w^* + \alpha_2\w$, where $\alpha_1\leq0$ and $\alpha_2$ is bounded by a constant depending on $C_\v$ and $c_\w$. Therefore,
\begin{align*}
    (\w^{t+1})^\t\w^* = (1-\eta\alpha_2)(\w^t)^\t\w^* - \alpha_1 (\w^*)^\t \left(\I_m - \frac{\w\w^\t}{\|\w\|^2}\right)\w^* \geq (1-\eta\alpha_2)(\w^t)^\t\w^*>0,
\end{align*}
and thus $\theta(\w^{t+1},\w^*)<\frac{\pi}{2}$. Finally, since $\{(\v^t,\w^t)\}$ converges, it can only converge to a global minimizer.
\end{proof}
\end{document}